\documentclass{article}

\PassOptionsToPackage{numbers, sort&compress}{natbib}

\usepackage[preprint]{neurips_2026}

\usepackage[utf8]{inputenc} 
\usepackage[T1]{fontenc}    
\usepackage{url}            
\usepackage{booktabs}       
\usepackage{amsfonts}       
\usepackage{nicefrac}       
\usepackage{microtype}      
\usepackage{xcolor}         

\usepackage{graphicx} 
\usepackage[english]{babel}

\usepackage{amsthm}
\usepackage{amssymb}
\usepackage{amsfonts}
\usepackage{float}
\usepackage{xcolor}
\usepackage{algorithm}
\usepackage{algpseudocode}

\usepackage{amsmath}
\usepackage{graphicx}
\usepackage[colorlinks=true, allcolors=black]{hyperref}

\usepackage{enumitem}

\theoremstyle{theorem}
\newtheorem{theorem}{Theorem}
\newtheorem{lemma}{Lemma}
\newtheorem{definition}{Definition}

\newtheorem{assumption}{Assumption}

\theoremstyle{definition}

\theoremstyle{remark}
\newtheorem{remark}{Remark}

\usepackage{booktabs}
\usepackage{multirow}
\usepackage{tabularx}
\usepackage[table]{xcolor}
\usepackage{enumitem}

\title{A Distribution Mapping Approach to Counterfactually Fair Reinforcement Learning}

\author{
  Jianhan Zhang\\
  University of Michigan\\
  \texttt{jianhanz@umich.edu} \\
  \And
  Jitao Wang \\
  University of Michigan \\
  \texttt{jitwang@umich.edu} \\
  \And
  John D. Piette \\
  University of Michigan \\
  \texttt{jpiette@umich.edu}
  \And
  Donglin Zeng \\
  University of Michigan \\
  \texttt{dzeng@umich.edu} \\
  \And
  Chengchun Shi \\
  LSE \\
  \texttt{c.shi7@lse.ac.uk} \\
  \And
  Zhenke Wu \\
  University of Michigan \\
  \texttt{zhenkewu@umich.edu}  
}

\begin{document}

\maketitle

\begin{abstract}
  Reinforcement learning (RL) seeks to optimize sequential decisions to maximize 
  population-level benefits over time. 
  However, when deployed in high-stakes settings such as healthcare, RL decisions might systematically restrict some subpopulation's access to valuable services in a manner contrary to the values and goals of stakeholders. Counterfactual fairness (CF) offers a promising framework to address this problem based on causal reasoning. This paper develops a data preprocessing algorithm that, when used in tandem with policy learning, enables CF in RL. Our algorithm relies on a novel quantile distribution mapping method for sequentially estimating the counterfactual states and rewards in the data preprocessing step, subsuming common additivity assumptions used for counterfactual prediction as a special case. We theoretically prove that the per-step level of counterfactual unfairness and infinite-horizon suboptimality gap can be bounded under mild regularity conditions. We also empirically test our algorithm in numerical experiments as well as in application to a real-world interventional digital health dataset.
\end{abstract}

\section{Introduction}
\label{sec:intro}

Reinforcement learning (RL) has gained popularity in various fields, including healthcare, banking, autonomous driving, and large language model fine-tuning, as a tool to facilitate data-driven sequential decision-making. Using data on an agent's interactions with its environment, RL learns a sequential decision rule, often called a ``policy'', that maximizes population-level benefits in an environment of interest across a possibly infinite time horizon. However, concerns have been raised that the sequential decisions made by a fairness-unaware RL agent might inadvertently disadvantage individuals in certain subgroups that may be defined by sensitive attributes, e.g., race/ethnicity, gender, education level, etc. In particular, unfairness will arise if the sensitive attribute is associated with the state and reward variables in a way that leads the RL agent to systematically withhold valuable resources from an individual. Deploying such fairness-unaware RL agents in sensitive settings risks jeopardizing social justice and eroding public trust.

This work is motivated by the fairness implications of the PowerED study \citep{piette2023powerED}. The PowerED study aimed to evaluate the effectiveness of an RL-based intervention designed to reduce patients' self-reported opioid analgesic misuse behaviors while conserving counselors' time. Each week during patients' 12-week intervention period, state variables such as self-reported pain scores were used by an RL agent to select one of three treatment options: (1) a brief automated ``interactive voice response'' (IVR) call, (2) a longer IVR call, or (3) a live counselor call. In this setting, a fairness-unaware RL agent might systematically decline to assign live counselor calls to patients of certain races, essentially denying those patients' access to high-quality, human-based care. For example, if the sensitive attribute (e.g., gender) is directly used as one of the state variables, then the agent will use gender as a decision factor, possibly resulting in the system assigning live calls with a higher probability to patients of one gender than the others. Even if the sensitive attribute is excluded from the set of state variables, such unfairness can still arise because the state and reward variables often encode implicit information about the sensitive attribute. For example, Hispanic patients might tend to under-report their pain level due to cultural factors \citep{hollingshead2016pain}, which could mislead the agent to underestimate Hispanic patients' pain levels and thereby deny assigning more expensive live calls to them. 

To mitigate the potential unfairness in RL-based decision-making, this paper aims to develop methodologies that enforce counterfactual fairness (CF) in RL. CF requires that the distribution of the decisions made by the algorithm remain unchanged had an individual's sensitive attribute (e.g. race, gender, etc.) been switched to a different value while keeping unchanged the values of all historical exogenous variables in the model \citep{kusner2018counterfactualfairness,wang2025counterfactuallyfairreinforcementlearning}. CF enforces fairness at the \textit{individual level} based on causal reasoning. This differs from many other popular fairness criteria, such as demographic parity and equal opportunity, which are \textit{group-level} and based solely on \textit{statistical associations} rather than causation. A more detailed comparison of CF with other popular fairness criteria can be found in Section~\ref{sec:cf_comparison} of the Appendix. 

\paragraph{Related work.} There exists a growing volume of literature on achieving CF in the \textit{single-stage} setting. \citet{kusner2018counterfactualfairness} and \citet{zuo2022counterfactual} proposed enforcing CF by building machine learning (ML) models that rely only on non-descendants of the sensitive attribute in causal graphs. \citet{wang2023adjusting} introduced a method that adjusts possibly unfair ML predictors via post-processing to enforce CF. \citet{chen2024flap} developed a data-preprocessing algorithm that enforces CF by removing information about the sensitive attribute from the training data. \citet{distefano2020counterfactual} considered enforcing CF by incorporating CF penalty into the loss function during training. \citet{huang2022achieving} and \citet{chen2025cflb} proposed methods that enforce CF in the bandit setting. \citet{bian2026double} discussed balancing counterfactual action fairness and counterfactual outcome fairness in one-step policy learning.

For sequential decision-making, focusing solely on static settings without taking into account the dynamic nature of the environment might lead to suboptimal outcomes \citep{damour2020fairness,liu2018delayed}. Causal directed acyclic graphs (DAGs) emerged as a framework for studying fairness in dynamical systems \citep{creager2020causal4fairness}. By modeling contextual Markov decision processes as causal DAGs, \citet{wang2025counterfactuallyfairreinforcementlearning} extended the definition of CF to the RL setting and proposed a data-preprocessing algorithm to enforce CF in offline RL by removing sensitive information from training trajectories. However, this algorithm requires the noise to be additive to the state and reward variables, which may be restrictive in practice.

\paragraph{Contributions.} Our work makes three major contributions. First, we propose a data-preprocessing algorithm that ensures CF in RL. In particular, this algorithm extends the algorithm introduced in \cite{wang2025counterfactuallyfairreinforcementlearning} to a strictly larger class of underlying contextual Markov decision processes where the additive noise assumption is allowed to be violated. Second, we provide theoretical guarantees for our proposed algorithm by deriving bounds for the suboptimality gap and counterfactual unfairness resulting from the policy learned using the proposed algorithm. Third, we empirically test our proposed algorithm using numerical experiments and real data analysis.

\section{Preliminaries}
\label{sec:prelim}

\subsection{Structural Causal Models and Counterfactuals}
\label{subsec:scm}
We define a structural causal model (SCM) as the triplet $\mathcal{M} = \{\mathcal{U}, \mathcal{V}, \mathcal{F}\}$ which models the causal relationships among the variables of interest. In the triplet, $\mathcal{U}$ is the set of unobserved exogenous variables. The causes of unobserved exogenous variables are not explained by the model. Subsequently, an unobserved exogenous variable cannot be a descendant of any other variable. $\mathcal{V}=\{V_1, \dots, V_n\}$ is the set of observed variables. Each observed variable must be the descendant of at least one unobserved exogenous variable. $\mathcal{F}=\{f_1, \dots, f_n\}$ is a set of functions where each $f_i$ explains how the value of $V_i$ is determined. In particular, $V_i = f_i(\mathcal{V}_i, \mathcal{U}_i)$ where $\mathcal{V}_i \subset \mathcal{V}$ is the set of observed variables that are parents of $V_i$ and $\mathcal{U}_i \subset \mathcal{U}$ is the set of unobserved exogenous variables that are parents of $V_i$. If the values of all unobserved exogenous variables are set, then the values of all observed variables are fully determined using functions in $\mathcal{F}$. Every SCM can be represented by a graphical causal model; we assume the graphical causal model for the SCM in this paper is a directed acyclic graph (DAG).

The notion of counterfactuals can be defined using SCMs. Let $Y \in \mathcal{V}$ satisfy $Y = f_Y(Z,U_Y)$ where $Z \subset \mathcal{V}$ and $U_Y \subset \mathcal{U}$. Suppose the observed value of $Z$ is $z$ and the unobserved realized value of $U_Y$ is $u_Y$, so that the observed $Y$ is $Y^{Z \leftarrow z} = f_Y(z,u_Y)$. The counterfactual of $Y$ under an alternative value $Z=z'$ with $z'\neq z$ is denoted and defined by $Y^{Z \leftarrow z'} = f_Y(z',u_Y)$. By computing the value of $f_Y$ under $z'$ while keeping the value of $U_Y$ unchanged, $Y^{Z \leftarrow z'}$ answers the question ``what would be the value of $Y$, if $Z$ had taken the value $z'$ rather than $z$, given the observed data under $Z=z$?"

\subsection{Data Generating Process and Notation}
\label{subsec:cmdp}

\begin{definition}[CMDP; \cite{hallak2015contextualmarkovdecisionprocesses}]
Contextual Markov Decision Process (CMDP) is a tuple $(\mathcal{C}, \mathcal{S}, \mathcal{A}, \mathcal{M}(c))$ where $\mathcal{C}$ is called the context space, $\mathcal{S}$ and $\mathcal{A}$ are the state and action spaces correspondingly, and $\mathcal{M}$ is function mapping any context $c \in \mathcal{C}$ to an MDP $\mathcal{M}(c) = (\mathcal{S}, \mathcal{A}, p^c(y|x, a), r^c(x), \eta_0^c)$ where $p^c$, $r^c$, and $\eta_0^c$ are context-specific transition kernel, reward function, and initial state distribution, respectively.
\end{definition}

Here we consider data generated from a stationary CMDP where the sensitive attribute serves as the context variable. The dataset contains independent and identically distributed trajectories of $N$ individuals generated using some behavioral policy $\pi^b_t$, with the $j$-th individual having a horizon of $T_j+1$ (with index starting from $0$ and ending with $T_j$) and a time-invariant sensitive attribute $Z_j$ from the space of sensitive attributes $\mathcal{Z} = \{z^{(1)}, \dots, z^{(K)}\}$ that can only be caused by exogenous variables.  In this paper, we assume without the loss of generality that $T_j=T$ for all $j = 1, \dots, N$ for some constant $T$. For each $j = 1, \dots, N$ and $t = 0, \dots T$, let $S_{j,t} = (S_{j,t}^{(1)}, \dots, S_{j,t}^{(M)}) \in \mathcal{S}$ be the $M$-dimensional state of the $j$-th individual at time $t$, $A_{j,t} \in \mathcal{A}$ be the action taken by the $j$-th individual at time $t$, and $R_{j,t} \in \mathcal{R}$ be the reward received by the $j$-th individual after taking action $A_{j,t}$. Let $U_{j,t}^A$, $U_{j,t}^R$, and $\{U_{j,t}^{S^{(i)}}\}_{i=1}^M$ be scalar exogenous variables that are parents of the action, reward, and each dimension of the state, respectively, at time $t$. We write $U_t^S = (U_t^{S^{(1)}}, \dots, U_t^{S^{(M)}})$ for convenience. Furthermore, let $\bar{W}_{j,t} = (W_{j,0}, W_{j,1}, \dots, W_{j,t})$ for any time-dependent variable $W_{j,k}$. Also, denote the full history up to time $t$ by $H_{j,t} = \{Z_j, \bar{S}_{j,t}, \bar{A}_{j,t-1}, \bar{R}_{j,t-1}\}$ if $t \geq 1$ and $H_{j,t} = \{Z_j, \bar{S}_{j,0}\}$ if $t=0$. Since the trajectories are i.i.d., unless it is necessary to identify the associated individual, we omit the index $j$ in the variables defined above for simplicity.

In this paper, we represent the data-generating stationary CMDP using an SCM. More precisely, for the initial time step, there exist functions $\{f_0^{S^{(i)}}\}_{i=1}^M$ and $f_0^R$ such that 
\begin{equation*}
S_0^{(i)} = f_0^{S^{(i)}}(Z, U_0^{S^{(i)}}), \text{ } A_0 = \pi^b_0(Z, S_0, U_0^A), \text{ } R_0 = f_0^R(Z, S_0, A_0, U_0^R).
\end{equation*}
For $t \geq 1$, there exist functions $\{f^{S^{(i)}}\}_{i=1}^M$ and $f^R$ such that 
\begin{equation*}
S_t^{(i)} = f^{S^{(i)}}(Z, S_{t-1}, A_{t-1}, U_t^{S^{(i)}}), \text{ } A_t = \pi^b_t(Z, S_t, U_t^A), \text{ } R_t = f^R(Z, S_t, A_t, U_t^R).
\end{equation*}
Figure \ref{fig:cmdp_dag} offers a DAG representation of the data-generating CMDP.

\begin{figure}
    \centering
    \includegraphics[width=0.75\linewidth]{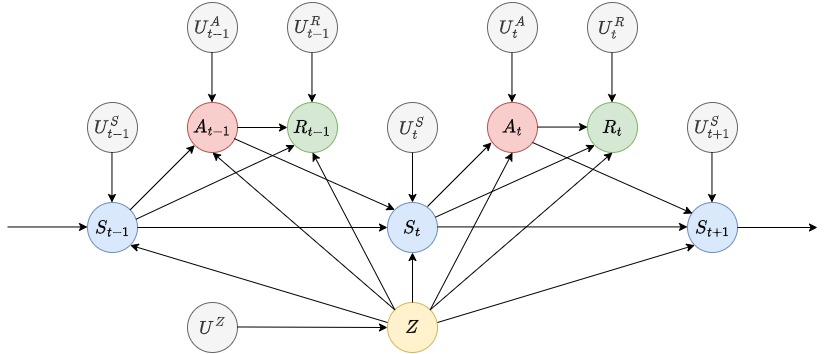}
    \caption{Causal DAG of CMDP \citep{wang2025counterfactuallyfairreinforcementlearning}}
    \label{fig:cmdp_dag}
\end{figure}

\subsection{Counterfactual Fairness in RL}
\label{subsec:cfrl}
CF was first proposed by \citet{kusner2018counterfactualfairness} for single-stage ML tasks, and \citet{wang2025counterfactuallyfairreinforcementlearning} extended the definition of CF to the RL setting. The current paper adopts the definition of CF introduced in \cite{wang2025counterfactuallyfairreinforcementlearning}, which we briefly review below.

We consider a general RL setting with data generated from a stationary CMDP and decisions made by a policy (possibly non-stationary). Let $\pi_t$ denote the policy at time $t$. By the three-step procedure for counterfactual inference introduced in \cite{pearl2016causal}, the probability distributions of $U_t^S$ and $U_{t-1}^R$ can each be seen as a function of $H_t$. Following \cite{wang2025counterfactuallyfairreinforcementlearning}, for $t \geq 0$, we write $U_t^S = U_t^S(H_t)$ and $U_t^R = U_t^R(H_{t+1})$ by an abuse of notation and assume the mappings $U_t^{S}(\cdot)$ and $U_t^{R}(\cdot)$ are deterministic. Furthermore, let $\bar{U}_t(\cdot)$ be a vector-valued function such that $\bar{U}_t(\cdot) = (U_1^S(\cdot), U_1^R(\cdot), \dots, U_{t-1}^R(\cdot), U_t^S(\cdot))$.

By the SCM representation of the data-generating CMDP, the counterfactual states, actions, and rewards can each be seen as a function of the historical exogenous variables $\bar{U}_t$. Moreover, $\bar{U}_t$ is assumed to be a deterministic function of $H_t$. Inspired by this observation, we let $A_t^{Z \leftarrow z'}(\bar{U}_t(h_t))$, $S_t^{Z \leftarrow z'}(\bar{U}_t(h_t))$, and $R_t^{Z \leftarrow z'}(\bar{U}_{t+1}(h_{t+1}))$ denote the counterfactual action, state, and reward, respectively, at time $t$ under $\pi_t$ for an individual with observed history $h_t$, had the individual's sensitive attribute been set to $z'$ and the individual followed the observed past action sequence $\bar{a}_{t-1}$ (or $\bar{a}_t$ in the case of the reward). A policy is counterfactually fair if, at each time step, the probability distribution of its action assignment to an individual would stay the same had the individual's sensitive attribute been switched to any other value while holding constant all historical actions and exogenous variables. This definition is formally stated below.

\begin{definition}[Counterfactual Fairness in CMDP; \cite{wang2025counterfactuallyfairreinforcementlearning}]
Given an observed trajectory $H_t = h_t = \{z, \bar{s}_t, \bar{a}_{t-1}, \bar{r}_{t-1}\}$, a decision rule $\pi_t$ is counterfactually fair at time $t$ if it satisfies the following condition:
\begin{equation*}
    \mathbb{P}^{\pi_t} \left( A_t^{Z \leftarrow z'} (\bar{U}_t(h_t)) = a \right) = \mathbb{P}^{\pi_t} \left( A_t^{Z \leftarrow z} (\bar{U}_t(h_t)) = a \right), \forall z' \in \mathcal{Z},  \forall a \in \mathcal{A}.
\end{equation*}
A policy $\{\pi_t\}_{t \geq 0}$ is said to be counterfactually fair if the above holds for any $t$.
\end{definition}

To characterize counterfactually fair policies, let $\tilde{S}_t = (S_t^{Z \leftarrow z^{(1)}}(\bar{U}_t(h_t)), \dots, S_t^{Z \leftarrow z^{(K)}}(\bar{U}_t(h_t)))$ and $\tilde{R}_t = \sum_{k=1}^K \mathbb{P}(Z=z^{(k)}) R_t^{Z \leftarrow z^{(k)}}(\bar{U}_{t+1}(h_{t+1}))$. Note that $\tilde{S}_t$ and $\tilde{R}_t$ do not depend on the value of the individual's observed sensitive attribute. \citet{wang2025counterfactuallyfairreinforcementlearning} showed that $\{\tilde{S}_t, \tilde{R}_{t}, A_{t}\}_{t \geq 0}$ follows a stationary Markov decision process (MDP) in which any policy is counterfactually fair. Moreover, the optimal policy in this MDP is stationary, so it can be conveniently learned by existing RL algorithms provided that $\{\tilde{S}_t, \tilde{R}_{t}, A_{t}\}_{t \geq 0}$ are known. We refer interested readers to Theorems 1 and 2 in \cite{wang2025counterfactuallyfairreinforcementlearning} for more details.

\section{Counterfactually Fair Sequential Marginal Distribution Mapping}
\label{sec:cfsmdm}

In this section, we present a novel data preprocessing method ``counterfactually fair sequential marginal distribution mapping'' (CFSMDM) which estimates the requisite counterfactuals to construct CF policies. CFSMDM achieves CF in policy learning via a two-stage procedure. First, we preprocess the training trajectories to obtain augmented training trajectories where the states are estimates $\hat{\tilde{S}}_t$ of $\tilde{S}_t$ and the rewards are estimates $\hat{\tilde{R}}_t$ of $\tilde{R}_t$. This step removes information of the sensitive attribute from the training trajectories. Then, we 
perform policy learning on the augmented training trajectories using existing offline RL algorithms, such as fitted Q iteration (FQI) \citep{riedmiller2005fqi}. The resulting policy should be close to being counterfactually fair because the preprocessed training trajectories contain no information about the sensitive attribute except possibly some residual information introduced by estimation errors. We follow the setup and notation introduced in Section~\ref{sec:prelim}. Assumptions~\ref{ass:state_monotonicity} and~\ref{ass:reward_monotonicity} ensure the identifiability of the counterfactual states and rewards via Theorems \ref{thm:state_matching} and \ref{thm:reward_matching}.

\begin{assumption}[Strict monotonicity of the state in the noise variable]
\label{ass:state_monotonicity}
For $t = 0$ and $i = 1, \dots, M$, $S_0^{(i)} = f_0^{S^{(i)}}(Z, U_0^{S^{(i)}})$ for some function $f_0^{S^{(i)}}$ that is deterministic and strictly increasing in $U_0^{S^{(i)}}$ on the support of $U_0^{S^{(i)}}$ for all levels of $Z$. For $t \geq 1$ and $i = 1, \dots, M$, $S_t^{(i)} = f^{S^{(i)}}(Z, A_{t-1}, S_{t-1}, U_t^{S^{(i)}})$ for some function $f^{S^{(i)}}$ that is deterministic and strictly increasing in $U_t^{S^{(i)}}$ on the support of $U_t^{S^{(i)}}$ for all levels of $(Z, A_{t-1}, S_{t-1})$.
\end{assumption}

\begin{assumption}[Strict monotonicity of the reward in the noise variable]
\label{ass:reward_monotonicity}
For $t \geq 0$, $R_t = f^R(Z, S_t, A_t, U_t^R)$ for some function $f^R$ that is deterministic and strictly increasing in $U_t^R$ on the support of $U_t^R$ for all levels of $(Z, A_t, S_t)$.
\end{assumption}

Section \ref{sec:cfsmdm_cfsdp} in the Appendix shows that Assumptions~\ref{ass:state_monotonicity} and~\ref{ass:reward_monotonicity} are strictly weaker than the additivity assumption made by the alternative method proposed by \cite{wang2025counterfactuallyfairreinforcementlearning}. Similar monotonicity assumptions are also common in quantile treatment effect literature \cite{chernozhukov2005ivqte, li2024dcqte}. 

For $t \geq 0$, let $S_t^{Z \leftarrow z', (i)}(\bar{U}_t(h_t))$ denote the $i$-th component of $S_t^{Z \leftarrow z'}(\bar{U}_t(h_t))$. Theorems \ref{thm:state_matching} and \ref{thm:reward_matching} establish that conditional quantile matching can recover the counterfactual states and rewards.

\begin{theorem}[Quantile matching for counterfactual states]
\label{thm:state_matching}
Suppose Assumption~\ref{ass:state_monotonicity} hold, and consider an individual with observed history $h_t=(z, \bar{s}_{t}, \bar{a}_{t-1}, \bar{r}_{t-1})$. Then, for $t=0$, the quantile level of $S_0^{Z \leftarrow z', (i)}(\bar{U}_0(h_0))$ with respect to the law of $f_0^{S^{(i)}}(z', U^{S^{(i)}}_0)$ does not vary by $z' \in \mathcal{Z}$. 

Similarly, for later time steps $t\geq 1$, the quantile level of $S_t^{Z \leftarrow z', (i)}(\bar{U}_t(h_t))$ with respect to the law of $f^{S^{(i)}}(z', a_{t-1}, S_{t-1}^{Z \leftarrow z'}(\bar{U}_{t-1}(h_{t-1})), U_t^{S^{(i)}})$ does not vary by $z' \in \mathcal{Z}$.
\end{theorem}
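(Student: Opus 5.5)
The plan is to show that the quantile level of the counterfactual equals $F_{U}(u)$, the CDF of the relevant exogenous noise evaluated at its realized value. The realized value is held fixed across $z'$, so this quantity does not depend on $z'$. Everything rests on Assumption~\ref{ass:state_monotonicity} together with two standard facts. First, the exogenous noise $U_t^{S^{(i)}}$ is independent of $(Z, S_{t-1}, A_{t-1})$; this is how the SCM treats exogenous variables and is encoded in the DAG of Figure~\ref{fig:cmdp_dag}. Second, a strictly increasing transformation preserves CDF values. Concretely, if $g$ is strictly increasing on the support of a scalar $U$ and $u$ is in that support, then
\begin{equation*}
\mathbb{P}\bigl(g(U) \le g(u)\bigr) = \mathbb{P}(U \le u) = F_U(u).
\end{equation*}
So the quantile level of $g(u)$ under the law of $g(U)$ is $F_U(u)$, and this holds whatever $g$ is.

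For $t=0$, write $u_0 = U_0^{S^{(i)}}(h_0)$ for the realized noise. Strict monotonicity makes $u_0$ uniquely recoverable from $s_0^{(i)} = f_0^{S^{(i)}}(z,u_0)$. By definition, $S_0^{Z\leftarrow z',(i)}(\bar U_0(h_0)) = f_0^{S^{(i)}}(z',u_0)$. Apply the fact above with $g = f_0^{S^{(i)}}(z',\cdot)$, which is strictly increasing for every level of $Z$. The quantile level of the counterfactual under the law of $f_0^{S^{(i)}}(z',U_0^{S^{(i)}})$ is then $F_{U_0^{S^{(i)}}}(u_0)$. This value is free of $z'$.

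For $t\ge1$, proceed the same way. The subtlety is that the reference law is built from $f^{S^{(i)}}(z', a_{t-1}, s^\ast, \cdot)$, where the middle argument is $s^\ast = S_{t-1}^{Z\leftarrow z'}(\bar U_{t-1}(h_{t-1}))$. Because $\bar U_{t-1}(h_{t-1})$ is a deterministic function of the history, $s^\ast$ is a fixed vector once $h_{t-1}$ and $z'$ are given, so the reference law is that of $f^{S^{(i)}}(z',a_{t-1},s^\ast,U_t^{S^{(i)}})$ with the first three arguments held fixed. The realized noise $u_t = U_t^{S^{(i)}}(h_t)$ is shared by all counterfactual worlds, and by the SCM definition of counterfactuals
\begin{equation*}
S_t^{Z\leftarrow z',(i)}(\bar U_t(h_t)) = f^{S^{(i)}}(z',a_{t-1},s^\ast,u_t).
\end{equation*}
The monotonicity assumption holds for all levels of $(Z,A_{t-1},S_{t-1})$, so the same CDF argument gives quantile level $F_{U_t^{S^{(i)}}}(u_t)$, again independent of $z'$. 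Because the counterfactual is defined recursively, $s^\ast$ is built from earlier time steps. No induction is needed for that, though: whatever $s^\ast$ is, it only changes the fixed arguments of $g$.

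The main obstacle is justifying that the noise distribution is the same in every counterfactual world, i.e., that the marginal law of $U_t^{S^{(i)}}$ does not depend on $z'$ or on the conditioning values. Without this, the CDF $F_U$ could differ across $z'$. This is where I would invoke the independence of exogenous noise from its observed parents in the SCM, together with the stationarity of the CMDP. A secondary issue is whether $u$ lies in the support and whether there are atoms. Strict monotonicity handles this: it gives equality of events $\{g(U)\le g(u)\}=\{U\le u\}$ on the support. So the quantile level, defined as the CDF value, matches even when $U$ has atoms, and it equals the usual uniform rank when $U$ is continuous.
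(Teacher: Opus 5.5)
Your proposal is correct and follows essentially the same route as the paper. Both arguments use strict monotonicity of $f^{S^{(i)}}$ in the noise to show that the quantile level of the counterfactual equals $\mathbb{P}(U_t^{S^{(i)}} \le u_t^{S^{(i)}})$, which does not depend on $z'$. The paper phrases this through the conditional law of $S_t^{(i)}$ given $(Z, S_{t-1}, A_{t-1})$ and relies on the independence of the noise from those parents without saying so. You work directly with the law named in the statement and make that independence explicit; the stationarity you invoke is not needed for a fixed $t$.
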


\begin{theorem}[Quantile matching for counterfactual rewards]
\label{thm:reward_matching}
Suppose Assumption~\ref{ass:reward_monotonicity} hold, and consider an individual with observed history $h_t=(z, \bar{s}_{t}, \bar{a}_{t-1}, \bar{r}_{t-1})$. Then, for each time step $t \geq 0$, the quantile level of $R_t^{Z \leftarrow z'}(\bar{U}_{t+1}(h_{t+1}))$ with respect to the law of $f^{R}(z', a_{t}, S_{t}^{Z \leftarrow z'}(\bar{U}_{t}(h_{t})), U_t^{R})$ does not vary by $z' \in \mathcal{Z}$.
\end{theorem}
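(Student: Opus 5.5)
The argument mirrors the one for Theorem~\ref{thm:state_matching}, applied to the reward equation. Fix $t\ge 0$ and the observed history $h_{t+1}$, which determines $h_t$, $a_t$ and $r_t$. Let $u_t^R=U_t^R(h_{t+1})$ be the realized reward noise, deterministic by the convention of Section~\ref{subsec:cfrl}. For each $z'\in\mathcal{Z}$, write the counterfactual state as $s'_t:=S_t^{Z\leftarrow z'}(\bar U_t(h_t))$. By the SCM and the definition of counterfactuals (keep all exogenous variables and the past actions $\bar a_t$ fixed, intervene on $Z$), the counterfactual reward is
\begin{equation*}
R_t^{Z\leftarrow z'}(\bar U_{t+1}(h_{t+1})) = f^R(z', s'_t, a_t, u_t^R).
\end{equation*}
The reference law is that of $f^R(z', a_t, s'_t, U_t^R)$, where $U_t^R$ is random with its marginal distribution and everything else is held at these fixed values. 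I also use the standard fact that $U_t^R$ is independent of $(Z,\bar U_t,\bar A_t)$ as an exogenous root node, so its marginal law does not depend on $z'$.

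The core step is a monotone-transformation computation. Let $g_{z'}(u):=f^R(z',s'_t,a_t,u)$, which is strictly increasing in $u$ by Assumption~\ref{ass:reward_monotonicity}. Let $F_U$ denote the CDF of $U_t^R$. Then, for $v=g_{z'}(u_t^R)$,
\begin{equation*}
\mathbb{P}\big(g_{z'}(U_t^R)\le v\big)=\mathbb{P}\big(U_t^R\le u_t^R\big)=F_U(u_t^R).
\end{equation*}
The equality holds because strict monotonicity gives $\{g_{z'}(U)\le g_{z'}(u_t^R)\}=\{U\le u_t^R\}$ on the support of $U_t^R$. Hence the quantile level of the counterfactual reward in its reference law is $F_U(u_t^R)$. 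This quantity involves neither $z'$ nor $s'_t$, so it is the same for every $z'\in\mathcal{Z}$, which is the claim. Taking $z'=z$ shows that this level can be read off from the observed $r_t$ under the factual conditional law.

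The main obstacle is rigor in the definition of the quantile level. If $U_t^R$ has atoms or the function $g_{z'}$ has jumps, then "the quantile level" should be defined as the CDF value $F_{g_{z'}(U)}(v)$. The set equality above uses only strict increase, never continuity, so it still holds. For non-continuous laws I would define the level as $F(v)$ and note that, if one prefers a uniform level, a randomized PIT with a common auxiliary uniform also works. I also need to check carefully that $s'_t$ is a deterministic quantity given $h_t$. This follows from the standing assumption that $\bar U_t(\cdot)$ is deterministic, which allows the conditioning in the reference law to be treated as fixed values.
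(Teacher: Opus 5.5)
Your proposal is correct and takes essentially the paper's route. The paper proves the reward case ``analogously'' to Theorem~\ref{thm:state_matching}: it writes the probability of not exceeding the counterfactual value through the structural equation, then uses strict monotonicity in the noise to reduce it to $\mathbb{P}(U_t^R \le u_t^R)$, which does not involve $z'$. Your extra remarks are sound refinements rather than a different argument: using the independence of $U_t^R$ from $(Z,S_t,A_t)$ to identify the reference law with the observed conditional law, and defining the level as a CDF value when there are atoms.
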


The implication is that the observed quantile level under the observed $z$ would remain identical had $Z$ taken a different value $z'$. This means we can use the observed data (comprised of individuals with $Z= z$ or $Z=z'$) to obtain estimated quantile levels and conditional quantile models for states/rewards. Then, for an individual with observed $Z=z$, one may shift to the estimated quantile model under a different $z'$, and use the estimated quantile level to retrieve the intended counterfactuals. See  Section~\ref{sec:smdm_intuition} of the Appendix for an example to illustrate the intuition. In brief, Theorems \ref{thm:state_matching} and \ref{thm:reward_matching} imply that for each $t \geq 0$, the state (or reward) under the observed sensitive attribute value $z$ and the counterfactual state (or reward) under another sensitive attribute value $z'$ are on the same quantile of their respective conditional distributions. Therefore, to find the counterfactual state (or reward), we can (1) find the quantile level $\tau$ of the observed state (reward) on its conditional distribution and (2) find the counterfactual state (or reward) as the $\tau$-th quantile on its conditional distribution. This observation inspires the sequential marginal distribution mapping (SMDM) method for estimating the counterfactual states and rewards, which are used to construct estimates $\hat{\tilde{S}}_t$ and $\hat{\tilde{R}}_t$ of the augmented states $\tilde{S}_t$ and rewards $\tilde{R}_t$, respectively. The full CFSMDM method combines data preprocessing (using SMDM) with policy learning. Due to limited space, pseudocodes of SMDM and CFSMDM are described in Algorithms \ref{alg:smdm} and \ref{alg:cfsmdm}, respectively, in Section~\ref{sec:pseudocode} of the Appendix.

\paragraph{Estimation of conditional quantiles.} Several methods can be employed to construct the conditional quantile models $\hat{Q}_0$, $\hat{Q}$, and $\hat{W}$ in Algorithm \ref{alg:smdm}. One option is the standard quantile regression \citep{koenker1978regression, koenker2005qrtextbook}. $\hat{Q}_0$ can be fitted using trajectory data from the initial time step, and $\hat{Q}$ and $\hat{W}$ can be fit using trajectory data from all time steps with $t \geq 1$ because the transition kernel is assumed to be stationary. However, quantile regression can be prone to model misspecification and might produce out-of-range estimates. Related machine learning-based methods, including quantile regression forests \citep{nicolai2006qrforests} and quantile regression using neural networks \citep{taylor2000qrnn, oscar2022qrrelu}, may also be used to learn the conditional quantiles. These methods are often more flexible than standard quantile regression. Other distributional regression techniques, such as distribution regression \citep{chernozhukov2013inference}, may also be employed.

\section{Theoretical Analysis}
\label{sec:theory}
In this section, we provide theoretical guarantees for the suboptimality gap and counterfactual unfairness control of policies learned using the proposed CFSMDM. In particular, the suboptimality gap for a history-dependent policy $\pi$ is defined as
\begin{equation*}
SG(\pi) = \mathbb{E}_{(s_t, a_t) \sim \pi^* P_t}\left[\sum_{t=0}^{\infty} \gamma^t r(s_t, a_t)\right] - \mathbb{E}_{(s_t, a_t) \sim \pi P_t}\left[\sum_{t=0}^{\infty} \gamma^t r(s_t, a_t)\right],
\end{equation*}
where $\pi^*$ is the optimal policy in the corresponding MDP, $\gamma\in (0,1)$ is the discount factor, and $r(\cdot, \cdot)$ is the reward function. For this theoretical analysis, we restrict to the case where the trajectories of all individuals in the training set have exactly $T$ time steps, and policy learning is performed using fitted Q iteration (FQI) with the Q function being estimated by ordinary least squares \citep{riedmiller2005fqi}. Also, we assume that the preprocessor training step and the policy learning step of CFSMDM are performed on two independent samples that contain the same number of individuals; this can be achieved via techniques such as sample splitting. 

In the analysis that follows, let $\hat{\pi}$ be the stationary policy learned using CFSMDM. Also, define $\tilde{H}_t = \{\bar{\tilde{S}}_t, \bar{\tilde{R}}_{t-1}, \bar{A}_{t-1}\}$ for $t \geq 1$ and $\tilde{H}_t = \{\bar{\tilde{S}}_t\}$ for $t=0$, which denotes the history up to time $t$ in the MDP with the augmented state and reward. Note that, given the observed sensitive attribute $z^*$ and assuming the counterfactual state estimator and the quantile level estimator are both fixed, the estimated augmented state $\hat{\tilde{s}}_{t,n}$ at time $t$ can be written as a function of the augmented history up to time $t$ (i.e. $\hat{\tilde{s}}_{t,n} = g_{t,n}^{z^*}(\tilde{h}_t)$ for some function $g_{t,n}^{z^*}$ that depends on the observed sensitive attribute $z^*$, the counterfactual state estimator, the quantile level estimator, and the time step $t$). Therefore, we can write $\hat{\pi}(\hat{\tilde{s}}_{t,n}) = (\hat{\pi} \circ g_{t,n}^{z^*})(\tilde{h}_t)$. We define $\tilde{\pi}_{n}^{z^*} = \{\tilde{\pi}_{n,t}^{z^*}\}_{t \geq 0}$ where $\tilde{\pi}_{n,t}^{z^*} = (\hat{\pi} \circ g_{t,n}^{z^*})(\tilde{h}_t)$. Under this definition, $\tilde{\pi}_{n}^{z^*}$ is an alternative representation of the policy learned using CFSMDM, and it is a history-dependent non-stationary policy.  

As defined in Section \ref{subsec:proof_bounds} of the Appendix, $M_{t,n}$ denotes an upper bound of the augmented state estimation error at time $t$, and $\delta_{t,n}$ denotes a \textit{simultaneous} upper bound of both the augmented state and reward estimation errors \textit{up to} time $t$. Some additional assumptions needed for the bounds can be found in Section~\ref{subsubsec:assumptions} of the Appendix. Theorems \ref{thm:regret_bound} and \ref{thm:unfairness_bound} characterize the control of the suboptimality gap and the level of counterfactual unfairness, respectively.

\begin{theorem}[Suboptimality gap bound]
\label{thm:regret_bound}
Suppose Assumptions~\ref{ass:state_monotonicity}-\ref{ass:reward_monotonicity} and \ref{ass:space}-\ref{ass:lipschitz_conditional_quantile} hold. Also, suppose the trajectory for each individual in the training set has exactly $T$ time steps. For any $\omega \in \mathbb{N}$ and $z \in \mathcal{Z}$, we have
\begin{align*}
    SG(\tilde{\pi}_n^z) &\le 2\mathcal{B}L\frac{1-\gamma^{\omega}}{1-\gamma}\max_{t \leq \omega-1}M_{t,n} + 2\mathcal{B}L\frac{\gamma^{\omega}}{1-\gamma}M_{\tilde{\mathcal{S}}} \\
    &\quad + \frac{2C_1 d L R_{\max} \delta_{T,n}}{\lambda_0(\lambda_0 - 4dL\delta_{T,n})(1-\gamma)^3} + \frac{2C_2 d R_{\max} \kappa \log(n)}{(1-\gamma)^3 \lambda_0 \sqrt{n}} + \frac{2\gamma^B R_{\max}}{(1-\gamma)^2}
\end{align*}
with probability at least $\left(1 - n^{-\kappa} - d \exp(-n\lambda_0/8)\right) \mathbb{P}\left(\delta_{T,n} \le \frac{\lambda_0}{4dL}\right)$, where $C_1$ and $C_2$ are positive constants, $B$ is the number of FQI iterations, $\gamma \in (0, 1)$ is the discount factor, and $\mathcal{B}$, $L$, and $M_{\tilde{S}}$ are positive constants defined in Assumptions \ref{ass:space} and \ref{ass:fqi}.
\end{theorem}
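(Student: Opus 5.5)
The bound splits into three pieces, and I would handle each separately and then add them. Let $\pi^*$ be the optimal policy in the augmented MDP $\{\tilde S_t,\tilde R_t,A_t\}$ (Theorems 1--2 of \citet{wang2025counterfactuallyfairreinforcementlearning}). Let $\hat\pi$ be the FQI policy trained on the preprocessed trajectories, and let $\bar\pi$ be the same greedy rule applied to the \emph{true} augmented states. Then
\[
SG(\tilde\pi_n^z) = \bigl[V^{\pi^*}-V^{\bar\pi}\bigr] + \bigl[V^{\bar\pi}-V^{\tilde\pi_n^z}\bigr].
\]
The second bracket is the \emph{deployment error}: at test time $\hat\pi$ sees $\hat{\tilde s}_{t,n}=g^{z}_{t,n}(\tilde h_t)$ rather than $\tilde s_t$. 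The first bracket is the usual FQI \emph{learning error}. Here, though, it is inflated because the regression covariates and responses are the estimated $\hat{\tilde S}_t,\hat{\tilde R}_t$ rather than the truth.

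\textbf{Deployment error.} I would use Assumption~\ref{ass:fqi}, which I expect gives Lipschitz continuity in the state with constant $L$ for the fitted Q-function/policy and a bound $\mathcal B$ on the relevant value or feature magnitudes. A performance-difference (telescoping) argument over time then controls the per-step loss at time $t$ by roughly $2\mathcal B L\,\|\hat{\tilde s}_{t,n}-\tilde s_t\|\le 2\mathcal B L M_{t,n}$. The error $M_{t,n}$ is itself controlled by Theorems~\ref{thm:state_matching}--\ref{thm:reward_matching} together with the Lipschitz conditional quantile assumption: the true counterfactual is the $\tau$-quantile of the $z'$-conditional law, so the estimation error propagates Lipschitz-ly through the sequential mapping. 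I would then split the discounted sum at $\omega$:
\begin{itemize}
\item for $t<\omega$, bound each term by $\max_{t\le\omega-1}M_{t,n}$, which gives the factor $(1-\gamma^\omega)/(1-\gamma)$;
\item for $t\ge\omega$, use the diameter bound $M_{\tilde{\mathcal S}}$ of the augmented state space from Assumption~\ref{ass:space}, which gives $\gamma^\omega/(1-\gamma)$.
\end{itemize}
This produces the first two terms.

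\textbf{Learning error.} I would follow the standard linear-FQI analysis. The error propagation lemma gives $V^{\pi^*}-V^{\bar\pi}\le \frac{2}{(1-\gamma)^2}\bigl(\max_b\|\hat Q_b-\mathcal T\hat Q_{b-1}\|_\infty+\gamma^B R_{\max}\bigr)$ up to constants. The $\gamma^B$ piece yields the last term. Each regression step is an OLS with design Gram matrix $\hat\Sigma$ built from the $d$-dimensional features of the estimated states. The argument then has three parts:
\begin{itemize}
\item A matrix Chernoff bound gives $\lambda_{\min}(\Sigma_{\text{true}})\ge\lambda_0/2$ with probability $1-d\exp(-n\lambda_0/8)$.
\item Replacing true by estimated states perturbs the Gram matrix by at most $2dL\delta_{T,n}$ (Lipschitz features, errors up to time $T$). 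On the event $\delta_{T,n}\le \lambda_0/(4dL)$ this keeps the minimum eigenvalue positive, and Weyl's inequality gives the factor $\lambda_0-4dL\delta_{T,n}$ in the denominator.
\item The resulting coefficient error is of order $dLR_{\max}\delta_{T,n}/(\lambda_0(\lambda_0-4dL\delta_{T,n})(1-\gamma))$. A Bernstein/Hoeffding bound on the noise term gives the $d R_{\max}\kappa\log n/(\lambda_0\sqrt n(1-\gamma))$ piece with probability $1-n^{-\kappa}$.
\end{itemize}
Sample splitting makes the preprocessor independent of the policy-learning sample, so I would condition on the preprocessing event and multiply the probabilities. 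Combining with the $(1-\gamma)^{-2}$ propagation factor gives the third and fourth terms with the $(1-\gamma)^{-3}$ scaling.

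\textbf{Main obstacle.} The hardest step is the perturbed-OLS analysis. Both the covariates and the responses $\hat{\tilde R}_t+\gamma\max_a\hat Q(\hat{\tilde S}_{t+1},a)$ carry errors that are dependent across time within a trajectory. Keeping the $\delta_{T,n}$ dependence linear while inverting a perturbed Gram matrix is delicate. I would first try the decomposition $\hat\Sigma^{-1}\hat b-\Sigma^{-1}b=\hat\Sigma^{-1}(\hat b-b)+\hat\Sigma^{-1}(\Sigma-\hat\Sigma)\Sigma^{-1}b$, bounding each piece with operator norms and the simultaneous error bound $\delta_{T,n}$.
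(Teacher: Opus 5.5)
\textbf{Gap: the deployment-error bracket.} Splitting through $\bar\pi$ breaks this step. Expanding $V^{\bar\pi}-V^{\tilde\pi_n^z}$ with the performance-difference lemma gives per-step advantages of $Q^{\bar\pi}$, namely $Q^{\bar\pi}(\tilde s_t,\bar\pi(\tilde s_t))-Q^{\bar\pi}(\tilde s_t,a_t)$ with $a_t=\hat\pi(\hat{\tilde s}_{t,n})$. Neither of your tools applies to this quantity:
\begin{itemize}
\item $Q^{\bar\pi}$ need not lie in $\mathcal F$, because completeness in Assumption~\ref{ass:fqi} is only assumed for the optimality operator $\mathcal T$. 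So $Q^{\bar\pi}$ is not known to be $\mathcal BL$-Lipschitz.
\item $\bar\pi$ is greedy with respect to $\hat f$, not with respect to $Q^{\bar\pi}$.
\end{itemize}
Greediness only yields $\hat f(\tilde s_t,\bar\pi(\tilde s_t))-\hat f(\tilde s_t,a_t)\le 2\mathcal BL\|\hat{\tilde s}_{t,n}-\tilde s_t\|_\infty$. Passing from $\hat f$ to $Q^{\bar\pi}$ costs an extra $2\|Q^{\bar\pi}-\hat f\|_\infty$ per step, and this cost cannot be avoided. Suppose $\hat f(\cdot,a_1)-\hat f(\cdot,a_2)$ changes sign between $\tilde s_t$ and $\hat{\tilde s}_{t,n}$. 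Then the two policies choose different actions, and if $\hat f$ is inaccurate their true values can differ by order $R_{\max}/(1-\gamma)$, however small $M_{t,n}$ is.

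So the per-step bound $2\mathcal BLM_{t,n}$ you claim for the second bracket is false in general. The natural repair uses $\|Q^{\bar\pi}-\hat f\|_\infty\le\frac{1+\gamma}{1-\gamma}\|Q^*-\hat f\|_\infty$. This adds a term of order $\|Q^*-\hat f\|_\infty/(1-\gamma)^2$ on top of the $\frac{2}{1-\gamma}\|Q^*-\hat f\|_\infty$ already paid in the first bracket. After inserting the FQI bound it scales like $(1-\gamma)^{-4}$, so it does not give the stated inequality.

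\textbf{How the paper avoids this.} It never introduces $\bar\pi$. It applies the performance-difference lemma once, between $\pi^*$ and the deployed history-dependent policy $\tilde\pi_n^z$, so the advantages are those of $Q^*$. Since $Q^*=(w^*)^T\phi\in\mathcal F$ with $\|w^*\|_1\le\mathcal B$, and the features are $L$-Lipschitz, $Q^*$ is $\mathcal BL$-Lipschitz in the state. (This $L$ comes from Assumption~\ref{ass:lipschitz_conditional_quantile}, not Assumption~\ref{ass:fqi}.) Inserting $\hat f$ at the \emph{estimated} state gives
\[
Q^*(\tilde s_t,\pi^*(\tilde s_t))-Q^*(\tilde s_t,a_t)\le 2\mathcal BL\|\hat{\tilde s}_{t,n}-\tilde s_t\|_\infty+2\|Q^*-\hat f\|_\infty .
\]
Here the nonpositive term $\hat f(\hat{\tilde s}_{t,n},\pi^*(\tilde s_t))-\hat f(\hat{\tilde s}_{t,n},a_t)$ is discarded by greediness of $\hat\pi$. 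Summing, with the split at $\omega$, gives your first two terms plus $\frac{2}{1-\gamma}\|Q^*-\hat f\|_\infty$. From that point your learning-error analysis matches the paper: error propagation, perturbed OLS on the event $\delta_{T,n}\le\lambda_0/(4dL)$, and conditioning on the independent preprocessing sample. The paper simply imports the per-iteration bound from Theorem~3 of \cite{wang2025counterfactuallyfairreinforcementlearning}.
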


\begin{theorem}[Counterfactual unfairness bound]
\label{thm:unfairness_bound}
Suppose Assumptions~\ref{ass:state_monotonicity}-\ref{ass:reward_monotonicity} and \ref{ass:space}-\ref{ass:margin} hold. Also, suppose the trajectory for each individual in the training set has exactly $T$ time steps. Let $\xi_n = \frac{C_1dLR_{max}\delta_{T,n}}{\lambda_0(\lambda_0 - 4dL\delta_{T,n})(1-\gamma)^2} + \frac{C_2dR_{max}\kappa \log(n)}{(1-\gamma)^2\lambda_0\sqrt{n}} + \frac{\gamma^BR_{max}}{1-\gamma}$ and $\eta_n = n^{-\kappa} + d\exp(-n\lambda_0/8)$, where $C_1, C_2$ are as defined in Theorem \ref{thm:regret_bound} and $\kappa > 0$ is an arbitrary constant. For any $z', z'' \in \mathcal{Z}$ and $t \geq 0$, 
\begin{align*}
    \mathbb{E}\left[\|\tilde{\pi}^{z'}_{t,n}(\tilde{h}_t) - \tilde{\pi}^{z''}_{t,n}(\tilde{h}_t)\|_2\right] \leq& \mathbb{E} \left[O(\xi_n^{\alpha}) + O((\mathcal{B}L\delta_{t,n})^{\alpha})\right] + \sqrt{2}\left[\mathbb{P}\left(\delta_{T,n} > \lambda_0/(4dL)\right) + \eta_n\right]
\end{align*}
where $O$ stands for the big-$O$ notation, $B$ is the number of FQI iterations, $\gamma \in (0, 1)$ is the discount factor, and $R_{max}$ and $\alpha$ are positive constants defined in Assumptions \ref{ass:space} and \ref{ass:margin}, respectively.
\end{theorem}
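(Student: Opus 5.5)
The plan is to bound the gap between the two policies $\tilde{\pi}^{z'}_{t,n}$ and $\tilde{\pi}^{z''}_{t,n}$ by comparing each to a common ``oracle-input'' policy. Write $\tilde{s}_t$ for the true augmented state and $\hat{\tilde{s}}^{z}_{t,n}=g^{z}_{t,n}(\tilde{h}_t)$ for its estimate built from observed attribute $z$. The CF characterization of \citet{wang2025counterfactuallyfairreinforcementlearning} says $\tilde{s}_t$ does not depend on the observed $z$. The triangle inequality then gives
\begin{align*}
\|\tilde{\pi}^{z'}_{t,n}(\tilde{h}_t)-\tilde{\pi}^{z''}_{t,n}(\tilde{h}_t)\|_2 \le \|\hat{\pi}(\hat{\tilde{s}}^{z'}_{t,n})-\hat{\pi}(\tilde{s}_t)\|_2+\|\hat{\pi}(\tilde{s}_t)-\hat{\pi}(\hat{\tilde{s}}^{z''}_{t,n})\|_2 ,
\end{align*}
so it suffices to bound a generic term $\mathbb{E}\|\hat{\pi}(\hat{\tilde{s}}^{z}_{t,n})-\hat{\pi}(\tilde{s}_t)\|_2$ for fixed $z$. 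Here $\hat{\pi}(s)$ is the (one-hot) greedy action of the FQI estimate $\hat{Q}$. Since the distance between two one-hot vectors is at most $\sqrt{2}$, we have $\|\hat{\pi}(s)-\hat{\pi}(s')\|_2\le \sqrt{2}\,\mathbf{1}\{\arg\max_a\hat{Q}(s,a)\ne\arg\max_a\hat{Q}(s',a)\}$.

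Next I split on the good event $\mathcal{E}=\{\delta_{T,n}\le\lambda_0/(4dL)\}\cap\mathcal{E}_{\mathrm{FQI}}$. Here $\mathcal{E}_{\mathrm{FQI}}$ is the event, of probability at least $1-\eta_n$, on which the FQI/OLS concentration used in the proof of Theorem~\ref{thm:regret_bound} holds. These are the minimum-eigenvalue control $d\exp(-n\lambda_0/8)$ and the $n^{-\kappa}$ deviation bound. On $\mathcal{E}^c$ I bound the indicator by $1$. This produces the term $\sqrt{2}[\mathbb{P}(\delta_{T,n}>\lambda_0/(4dL))+\eta_n]$ via a union bound.

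On $\mathcal{E}$, I reuse the intermediate result from the proof of Theorem~\ref{thm:regret_bound}: $\sup_{s,a}|\hat{Q}(s,a)-Q^*(s,a)|\le\xi_n$. This is exactly why $\xi_n$ carries one fewer power of $(1-\gamma)$ than the regret bound. By the Lipschitz assumption on the feature map/Q-function (constants $\mathcal{B},L$ in Assumptions~\ref{ass:space}, \ref{ass:fqi}), we get $|Q^*(\hat{\tilde{s}}^{z}_{t,n},a)-Q^*(\tilde{s}_t,a)|\le\mathcal{B}L\|\hat{\tilde{s}}^z_{t,n}-\tilde{s}_t\|\le\mathcal{B}L\delta_{t,n}$. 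Hence, for both inputs, $\hat{Q}$ lies within $\epsilon:=\xi_n+\mathcal{B}L\delta_{t,n}$ of $Q^*(\tilde{s}_t,\cdot)$. If the greedy actions differ, the optimal-action gap of $Q^*$ at $\tilde{s}_t$ must be at most $2\epsilon$. The margin condition (Assumption~\ref{ass:margin}) then gives $\mathbb{P}(\text{gap}(\tilde{s}_t)\le 2\epsilon)\le C(2\epsilon)^\alpha$.

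The main obstacle is that $\epsilon$ is random: $\delta_{t,n}$ depends on the preprocessor sample and is correlated with $\tilde{h}_t$. I would condition on the preprocessor and FQI training samples, using sample splitting and independence of the new trajectory, so that $\epsilon$ is fixed when applying the margin condition to the law of $\tilde{s}_t$. I would then take the outer expectation, using $(a+b)^\alpha\le 2^{\alpha}(a^\alpha+b^\alpha)$, to obtain $\mathbb{E}[O(\xi_n^\alpha)+O((\mathcal{B}L\delta_{t,n})^\alpha)]$. If $\delta_{t,n}$ is history-specific rather than a sample-level bound, I would instead use its supremum version as defined in Section~\ref{subsec:proof_bounds}, which is dominated by $\delta_{T,n}$ for $t\le T$. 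Summing the two triangle terms only changes constants.
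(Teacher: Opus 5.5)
Your proposal is correct. It shares the paper's outer structure: split on the good event, bound the policy difference by $\sqrt2$ off it, use $\mathbb{P}(\mathcal{A}_n\mid\Xi_n,\mathcal{B}_n)\ge 1-\eta_n$, and then marginalize over $\Xi_n$. The core step, however, is genuinely different.

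The paper never proves the good-event bound $O(\xi_n^\alpha)+O((\mathcal{B}L\delta_{t,n})^\alpha)$ itself. It imports it from the proof of Theorem~4 of Wang et al.\ (2025). Assumption~\ref{ass:margin} has a second clause (a margin condition at the estimated input $g^z_{t,n}(\tilde h_t)$), and the bound has two separate $O(\cdot)$ terms. Both suggest that route makes two margin applications: one for $\hat f$ versus $Q^*$ at the estimated input, and one for $Q^*$ at the estimated versus the true input.

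You instead route both policies through the common oracle input $\hat\pi(\tilde s_t)$. This is legitimate because $\tilde s_t$ does not depend on the observed attribute, and Lemma~\ref{lem:state_bound} gives $\|g^{z}_{t,n}(\tilde h_t)-\tilde s_t\|_\infty\le M_{t,n}\le\delta_{t,n}$ uniformly over histories. You then need a single perturbation argument. Both $\hat f(g^z_{t,n}(\tilde h_t),\cdot)$ and $\hat f(\tilde s_t,\cdot)$ lie within $\epsilon=\xi_n+\mathcal{B}L\delta_{t,n}$ of $Q^*(\tilde s_t,\cdot)$, so disagreement forces the gap at $\tilde s_t$ to be at most $2\epsilon$. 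This assumes, as the paper also implicitly does, that $Q^*(\tilde s_t,\cdot)$ has a unique maximizer almost surely. What your route buys: a self-contained proof that uses only the first clause of Assumption~\ref{ass:margin}, at the cost of the elementary inequality $(a+b)^\alpha\le 2^\alpha(a^\alpha+b^\alpha)$.

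There are two bookkeeping points.

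First, the $\sqrt2$ multiplying $\mathbb{P}(\delta_{T,n}>\lambda_0/(4dL))+\eta_n$ is an explicit constant, not inside an $O(\cdot)$. Your write-up applies the triangle inequality before splitting on $\mathcal{E}$. Done that way, each of the two terms contributes $\sqrt2$ on $\mathcal{E}^c$, and you end up with $2\sqrt2$. To get the stated constant, split first: bound the original difference of the two action vectors by $\sqrt2$ on $\mathcal{E}^c$, and use the triangle inequality only on $\mathcal{E}$.

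Second, your hedges are unnecessary. $\delta_{t,n}$, and hence $\epsilon$, is defined through sup-norm estimation errors, so it is $\Xi_n$-measurable rather than history-specific. You also need not condition on the FQI sample. Bound $\mathbf{1}_{\mathcal{E}}\,\mathbf{1}\{\text{greedy actions differ}\}\le\mathbf{1}\{\mathrm{gap}(\tilde s_t)\le 2\epsilon\}$ and take expectations given $\Xi_n$. That is exactly the form in which Assumption~\ref{ass:margin} is stated. It also avoids the paper's implicit need for the margin condition to hold conditionally on $\mathcal{A}_n$.
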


It is worth noting that increasing the number of FQI iterations generally both reduces the suboptimality gap and tightens the unfairness bounds. See more discussions of the bounds in Section~\ref{sec:bound_discussion} of the Appendix.

\section{Numerical Experiments}
\label{sec:experiments}

In this section, we conduct simulation studies to compare the performance of CFSMDM relative to alternative methods using data generated from a known CMDP with nonadditive noise. We also present
additional numerical experiments in Section \ref{sec:additional_experiments} of the Appendix. In particular, in Section \ref{sec:additional_cf_value_experiment} of the Appendix, we conduct an additional numerical experiment using data generated from a CMDP with more restrictive additive noise scenarios where we show the proposed CFSMDM algorithm is comparable in optimal policy value and unfairness control relative to specially designed alternatives.

\paragraph{Methods to be compared.} The methods that are evaluated and compared in this section are:
\begin{itemize}[leftmargin=*]
    \item \textbf{Full:} The standard method that learns the optimal policy using all available covariates, including the sensitive attribute, as the state variable.
    \item \textbf{Unaware:} The method that learns the optimal policy using all available covariates except the sensitive attribute as the state variable. This method is an intuitive, yet often insufficient, way to achieve fairness.
    \item \textbf{Random:} A random policy that selects each available action with equal probability. This method should always be counterfactually fair because its decision-making is independent of both the training trajectory and deployment-time variables, but the quality of the decisions will suffer.
    \item \textbf{FLAP\_M:} The FLAP algorithm with (single-stage) marginal distribution mapping, as introduced in \cite{chen2024flap}, adapted to the RL setting. In our implementation of FLAP\_M, marginal distribution mapping is performed using empirical quantiles. 
    \item \textbf{ECOCF\_M:} The method introduced in \cite{wang2023adjusting}, adapted to the RL setting. In our implementation of ECOCF\_M, the counterfactual states are estimated using (single-stage) marginal distribution mapping with empirical quantiles. 
    \item \textbf{CFSDP:} The method introduced in \cite{wang2025counterfactuallyfairreinforcementlearning}. In our implementation of CFSDP, the transition kernel is learned using a neural network. 
    \item \textbf{CFSMDM:} The method proposed by this paper. In our implementation of CFSMDM, the quantiles of the conditional state and reward distributions are learned using standard quantile regression with a linear specification \citep{koenker1978regression, koenker2005qrtextbook}.
\end{itemize}
More details on the implementation of the baselines above can be found in Section~\ref{sec:details_experiments} of the Appendix. For all of the above methods except ``Random'', policy learning is performed using FQI \citep{riedmiller2005fqi}. The ``Random'' method is a policy on its own, so no policy learning is needed.

\paragraph{Evaluation metrics.} We are interested in the discounted cumulative reward (which we call ``policy value'' for brevity) and the level of counterfactual unfairness achieved by the policy learned using each of the above methods. In particular, the level of counterfactual fairness is assessed using the following CF metric proposed by \cite{wang2025counterfactuallyfairreinforcementlearning}:
\begin{equation*}
    \max_{z', z \in \mathcal{Z}} \frac{1}{NT} \sum_{i=1}^N \sum_{t=1}^T \mathbf{I} \left( A_t^{Z \leftarrow z'} (\bar{U}_t(h_{i,t})) \neq A_t^{Z \leftarrow z} (\bar{U}_t(h_{i,t})) \right).
\end{equation*}
Intuitively, this metric calculates the maximum empirical rate at which the policy of interest selects different actions for the same individual in the factual and counterfactual worlds across all individuals and time steps, with the maximum taken over all pairs of distinct sensitive attributes $z$ and $z'$ in $\mathcal{Z}$. This metric must lie in the interval $[0, 1]$, where $0$ and $1$ represent perfect and least empirical counterfactual fairness, respectively.

\paragraph{Experiment setup.} We generate data from a CMDP with nonadditive noise that satisfies Assumptions \ref{ass:state_monotonicity} and \ref{ass:reward_monotonicity}. The detailed CMDP setup can be found in Section~\ref{sec:details_experiments} in the Appendix. Let $\delta$ denote the strength of impact of the sensitive attribute on the state and reward variables. We run two experiments with different objectives:
\begin{itemize}[leftmargin=*]
    \item \textbf{Experiment 1:} This experiment evaluates how the policy value and the CF metric change with $N$, the number of individuals in the training dataset. In particular, for the training dataset, we fix $T=20, \delta=2.0$ and vary $N \in \{100, 200, 500, 1000, 2000\}$.
    \item \textbf{Experiment 2:} This experiment evaluates how the CF metric changes with $\delta$. In particular, for the training dataset, we fix $T=20, N=500$ and vary $\delta \in \{0.0, 1.0, 2.0, 3.0, 4.0\}$.
\end{itemize}
In both experiments, the training trajectories are generated using the ``Random'' policy as the behavioral policy. In particular, the sets of trajectories used for preprocessor training (if needed for the method of interest) and policy learning share the same sensitive attributes, while the remaining trajectory components are generated independently conditional on the shared sensitive attributes. The policy value resulting from the methods of interest is approximated by the discounted cumulative reward from a simulated trajectory with $10000$ individuals and $20$ horizons, generated from the known CMDP using the policy learned with the methods of interest. Similarly, the CF metric is calculated from a simulated dataset containing counterfactual trajectories of $10000$ individuals with $20$ horizons, generated from the known CMDP using the policy learned with the methods of interest. 

\paragraph{Results.} Figure~\ref{fig:results_simulation} summarizes the experiment results. Panels (a)-(c) summarize the results from Experiment 1, and Panel (d) summarizes the results from Experiment 2. In particular, panel (a) shows that CFSMDM outperforms all baselines other than ``Random'' in terms of fairness control. Moreover, although CFSMDM does not achieve perfect counterfactual fairness due to finite-sample counterfactual state and reward estimation errors, its CF metric decreases as the training sample size $N$ increases, which supports the consistency of CFSMDM. In panel (b), ``Full'' achieves the highest policy value because it uses the original states and rewards for reward maximization. CFSMDM achieves a lower policy value than ``Full'' and ``Unaware'', which demonstrates a tradeoff between fairness and policy value. Panel (c) further illustrates this tradeoff between fairness and policy value, as fairer methods generally result in lower policy values. Notably, FLAP\_M and ECOCF\_M are unfairer than CFSMDM due to their failure to account for the sequential nature of the underlying environment. Similarly, compared to CFSMDM, CFSDP is relatively unfair and achieves a lower policy value, possibly due to the violation of the additive noise assumption. CFSDP also appears unstable in both fairness control and policy value, as shown by the wide shaded bands in panels (a) and (b). In panel (d), CFSMDM remains relatively fair as we inject more unfairness into the underlying CMDP by increasing $\delta$, which shows that CFSMDM is robust against the level of unfairness present in the training trajectories.

\begin{figure}
    \centering
    \includegraphics[width=1\linewidth]{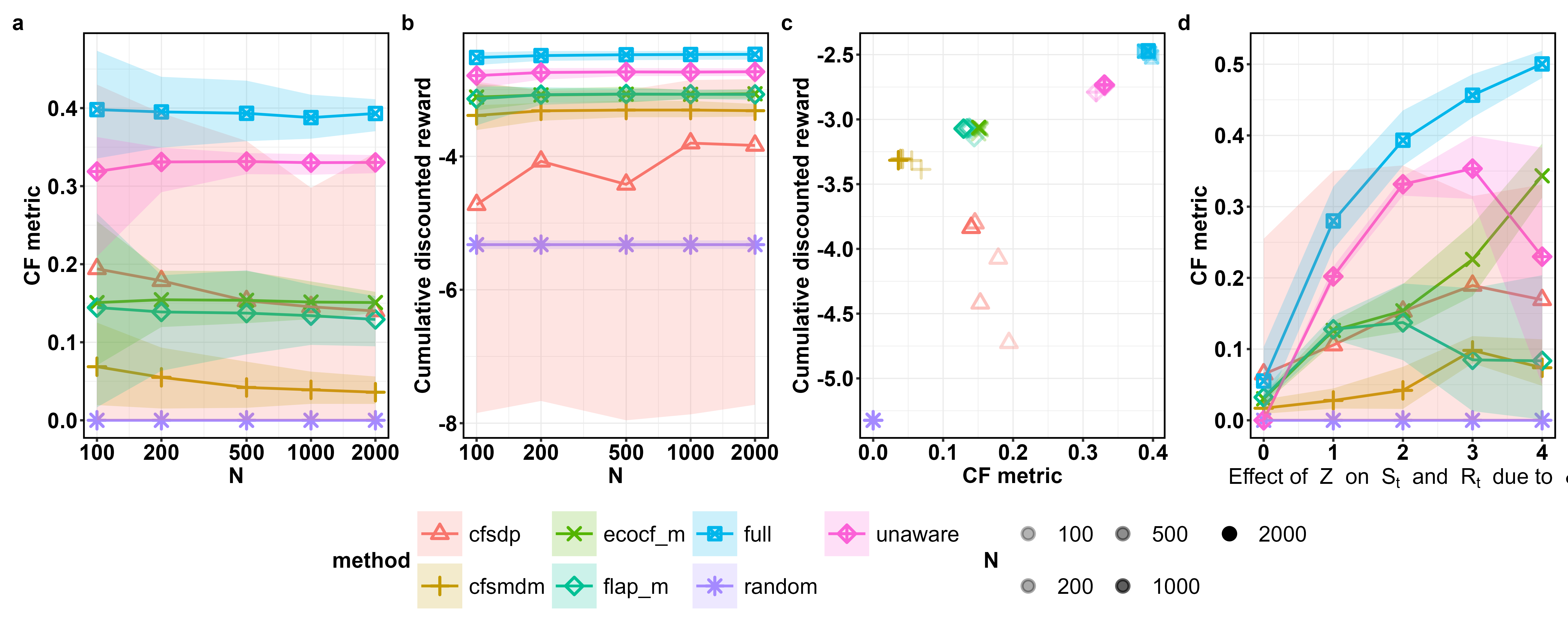}
    \caption{Experiment results under nonadditive noise: (a) CF metric vs. Sample size ($N$); (b) Policy value vs. Sample size ($N$); (c) Policy value vs. CF metric; (d) CF metric vs. Size of effect of $Z$ on $S_t$ and $R_t$ ($\delta$). Results are aggregated over $50$ replications with different random seeds. Point represents the empirical mean. Shaded area represents the band between $0.025$ and $0.975$ empirical quantiles.}
    \label{fig:results_simulation}
\end{figure}

\section{Application to the PowerED Study on Reducing Opioid Misuse}
\label{sec:rda}

In this section, we use CFSMDM and other baseline methods to analyze a dataset from the PowerED study \citep{piette2023powerED}. The dataset consists of 207 patients, each participating in the intervention over a period of 12 weeks (one RL decision per week). We consider sex, education, age, and ethnicity, all binarized, as potential sensitive attributes, and we use weekly pain score and weekly pain interference score as the state variables. The reward is based on participants' weekly self-reports and calculated as \texttt{7 - weekly self-reported opioid medication risk score}.

The methods being run in this section are ``Full'', ``Unaware'', ``Random'', FLAP\_M, ECOCF\_M, CFSDP, and CFSMDM, all of which are as defined in Section~\ref{sec:experiments}. In particular, CFSDP estimates the transition kernel using a neural network, and CFSMDM estimates the conditional quantiles using quantile regression with a linear specification. We are interested in the counterfactual unfairness level and policy value achieved by CFSMDM and the baseline methods. Details on the implementation of the real data analysis can be found in Section~\ref{sec:details_rda} of the Appendix.

Table \ref{tab:results_rda} summarizes the results of the analysis. For all candidate sensitive attributes, ``Full'' has the worst level of counterfactual fairness. Also, for all candidate sensitive attributes, ``Random'' is perfectly counterfactually fair but has made low-quality decisions resulting in low values. For most sensitive attributes, ``Unaware'', FLAP\_M, and ECOCF\_M have significantly better levels of counterfactual fairness than ``Full'' but worse than CFSDP and CFSMDM; the unfairness of ``Unaware'' reflects the fact that simply excluding the sensitive attribute from sequential decision-making (as was done in the clinical trial where this RL decision-process was used) is often insufficient for removing all the unfairness in the training trajectories. CFSMDM achieves the best counterfactual fairness for most sensitive attributes, and it noticeably outperforms CFSDP for ``education'' and ``ethnicity''. However, for ``sex'', CFSMDM is slightly more unfair than CFSDP and ``Unaware''. One possible explanation for this is that excluding the sensitive attribute from decision-making already removes most of the unfairness (e.g., because sex is uncorrelated with the included state variables), so the fairness improvement due to CFSMDM is outweighed by the unfairness introduced by finite-sample estimation errors in CFSMDM.

\begin{table}
    \centering
    \caption{CF metric and value of different methods when applied to the PowerED dataset. The analysis is repeated for $50$ times using different random seeds. The number without parentheses is the empirical mean, and the number inside the parentheses is the standard deviation. All values are rounded to two decimal points.}
    \label{tab:results_rda}
    \begin{tabular}{cccccc}
        \toprule
        Metric & Method & Education & Ethnicity & Sex & Age \\
        \midrule
        \multirow{7}{*}{Unfairness} & Full & 0.45 (0.14) & 0.46 (0.15) & 0.66 (0.15) & 0.35 (0.17) \\
                                    & Random & 0.00 (0.00) & 0.00 (0.00) & 0.00 (0.00) & 0.00 (0.00) \\
                                    & Unaware & 0.17 (0.04) & 0.17 (0.04) & 0.06 (0.02) & 0.13 (0.03) \\
                                    & CFSDP & 0.14 (0.04) & 0.18 (0.05) & 0.07 (0.02) & 0.10 (0.03) \\
                                    & ECOCF\_M & 0.16 (0.03) & 0.20 (0.04) & 0.14 (0.04) & 0.15 (0.03) \\
                                    & FLAP\_M & 0.17 (0.03) & 0.20 (0.04) & 0.14 (0.04) & 0.17 (0.04) \\
                                    & \textbf{CFSMDM} & \textbf{0.11 (0.03)} & \textbf{0.10 (0.03)} & \textbf{0.09 (0.05)} & \textbf{0.10 (0.03)} \\
        \midrule
        \multirow{7}{*}{Value}      & Full & 57.25 (0.56) & 57.26 (0.69) & 57.49 (0.52) & 57.35 (0.66) \\
                                    & Random & 57.04 (0.46) & 56.92 (0.66) & 57.02 (0.52) & 57.04 (0.55) \\
                                    & Unaware & 57.38 (0.54) & 57.36 (0.73) & 57.49 (0.54) & 57.47 (0.65) \\
                                    & CFSDP & 57.28 (0.56) & 57.19 (0.71) & 57.37 (0.57) & 57.36 (0.67) \\
                                    & ECOCF\_M & 57.30 (0.56) & 57.21 (0.73) & 57.44 (0.57) & 57.43 (0.63) \\
                                    & FLAP\_M & 57.33 (0.52) & 57.27 (0.72) & 57.48 (0.58) & 57.37 (0.65) \\
                                    & \textbf{CFSMDM} & \textbf{57.27 (0.51)} & \textbf{57.26 (0.77)} & \textbf{57.28 (0.55)} & \textbf{57.41 (0.61)} \\
        \bottomrule
    \end{tabular}
\end{table}

\begin{remark}
In our analysis of the PowerED study data, the CF metric depends on the quantile models used in SMDM to estimate the counterfactual states of the test dataset. In particular, if the quantile models are inaccurate due to misspecification of the quantile regression or violations of Assumptions \ref{ass:state_monotonicity} or \ref{ass:reward_monotonicity}, then the CF metric presented in this section might also be inaccurate.
\end{remark}

\section{Conclusion}
\label{sec:conclusion}

In this paper, we have introduced a novel and general algorithm of sequential data preprocessing (CFSMDM) which, when used in tandem with off-the-shelf RL policy learning algorithms,  achieves CF. Specifically, we first estimate all the counterfactual states and rewards by matching the quantiles of conditional counterfactual distributions, and construct a preprocessed trajectory dataset. The vector of all estimated counterfactual states serves as the augmented state variable and a weighted sum of all estimated counterfactual rewards serves as the augmented reward variable. We can then perform offline policy learning using the preprocessed trajectory dataset with the augmented states and rewards. Theoretically, we derive suboptimality gap and counterfactual unfairness bounds for the policy learned using data preprocessed by CFSMDM.

Our CFSMDM algorithm, together with the CFSDP algorithm by \cite{wang2025counterfactuallyfairreinforcementlearning}, provides fairness-aware researchers and professionals with tools for ensuring CF in RL with distinct advantages and drawbacks. Compared to CFSDP, CFSMDM requires strictly weaker assumptions and is therefore more general and robust. On the other hand, CFSDP is easier to implement and less computationally heavy than CFSMDM because it only estimates the means of the conditional counterfactual state and reward distributions, while CFSMDM requires fitting multiple quantile models corresponding to a grid of quantile levels of conditional counterfactual state and reward distributions for quantile matching.

\section*{Data Availability Statement}


An anonymized and limited copy of the PowerED dataset that motivates and supports the findings in this paper is available upon reasonable request, subject to the limitations outlined in the study's Human Subjects Approval and written informed consent statement for study participants. The Python code for the simulations and the data analysis is available and will be provided when the manuscript is published.







\newpage
\bibliographystyle{apalike}
\bibliography{references}

@misc{hallak2015contextualmarkovdecisionprocesses,
      title={Contextual Markov Decision Processes}, 
      author={Assaf Hallak and Di Castro, Dotan and Shie Mannor},
      year={2015},
      eprint={1502.02259},
      archivePrefix={arXiv},
      primaryClass={stat.ML},
      url={https://arxiv.org/abs/1502.02259}, 
      howpublished={\textit{arXiv preprint arXiv:1502.02259}}
}

@misc{wang2025counterfactuallyfairreinforcementlearning,
      title={Counterfactually Fair Reinforcement Learning via Sequential Data Preprocessing}, 
      author={Jitao Wang and Chengchun Shi and John D. Piette and Joshua R. Loftus and Donglin Zeng and Zhenke Wu},
      year={2025},
      eprint={2501.06366},
      archivePrefix={arXiv},
      primaryClass={stat.ML},
      url={https://arxiv.org/abs/2501.06366}, 
      howpublished={\textit{arXiv preprint arXiv:2501.06366}}
}

@misc{kusner2018counterfactualfairness,
      title={Counterfactual Fairness}, 
      author={Matt J. Kusner and Joshua R. Loftus and Chris Russell and Ricardo Silva},
      year={2018},
      eprint={1703.06856},
      archivePrefix={arXiv},
      primaryClass={stat.ML},
      url={https://arxiv.org/abs/1703.06856}, 
      howpublished={\textit{arXiv preprint arXiv:1703.06856}}
}

@book{pearl2016causal,
  title={Causal inference in statistics: A primer},
  author={Pearl, Judea and Glymour, Madelyn and Jewell, Nicholas P},
  year={2016},
  publisher={John Wiley \& Sons}
}

@article{koenker1978regression,
 ISSN = {00129682, 14680262},
 URL = {http://www.jstor.org/stable/1913643},
 author = {Roger Koenker and Gilbert Bassett},
 journal = {Econometrica},
 number = {1},
 pages = {33--50},
 publisher = {[Wiley, Econometric Society]},
 title = {Regression Quantiles},
 urldate = {2026-07-18},
 volume = {46},
 year = {1978}
}

@article{taylor2000qrnn,
author = {Taylor, James W.},
title = {A quantile regression neural network approach to estimating the conditional density of multiperiod returns},
journal = {Journal of Forecasting},
volume = {19},
number = {4},
pages = {299-311},
doi = {https://doi.org/10.1002/1099-131X(200007)19:4<299::AID-FOR775>3.0.CO;2-V},
url = {https://onlinelibrary.wiley.com/doi/abs/10.1002/1099-131X%28200007%2919%3A4%3C299%3A%3AAID-FOR775%3E3.0.CO%3B2-V},
eprint = {https://onlinelibrary.wiley.com/doi/pdf/10.1002/1099-131X%28200007%2919%3A4%3C299%3A%3AAID-FOR775%3E3.0.CO%3B2-V},
year = {2000}
}

@article{oscar2022qrrelu,
  author  = {Oscar Hernan Madrid Padilla and Wesley Tansey and Yanzhen Chen},
  title   = {Quantile regression with ReLU Networks: Estimators and minimax rates},
  journal = {Journal of Machine Learning Research},
  year    = {2022},
  volume  = {23},
  number  = {247},
  pages   = {1--42},
  url     = {http://jmlr.org/papers/v23/21-0309.html}
}

@article{nicolai2006qrforests,
  author  = {Nicolai Meinshausen},
  title   = {Quantile Regression Forests},
  journal = {Journal of Machine Learning Research},
  year    = {2006},
  volume  = {7},
  number  = {35},
  pages   = {983--999},
  url     = {http://jmlr.org/papers/v7/meinshausen06a.html}
}

@article{chernozhukov2013inference,
author = {Chernozhukov, Victor and Fernández-Val, Iván and Melly, Blaise},
title = {Inference on Counterfactual Distributions},
journal = {Econometrica},
volume = {81},
number = {6},
pages = {2205-2268},
doi = {https://doi.org/10.3982/ECTA10582},
url = {https://onlinelibrary.wiley.com/doi/abs/10.3982/ECTA10582},
eprint = {https://onlinelibrary.wiley.com/doi/pdf/10.3982/ECTA10582},
year = {2013}
}

@article{chen2024flap,
author = {Haoyu Chen and Wenbin Lu and Rui Song and Pulak Ghosh},
title = {On Learning and Testing of Counterfactual Fairness through Data Preprocessing},
journal = {Journal of the American Statistical Association},
volume = {119},
number = {546},
pages = {1286--1296},
year = {2024},
publisher = {Taylor \& Francis},
doi = {10.1080/01621459.2023.2186885},


URL = { 
    
        https://doi.org/10.1080/01621459.2023.2186885
    
    

},
eprint = { 
    
        https://doi.org/10.1080/01621459.2023.2186885
    
    

}

}

@book{koenker2005qrtextbook, place={Cambridge}, series={Econometric Society Monographs}, title={Quantile Regression}, publisher={Cambridge University Press}, author={Koenker, Roger}, year={2005}, collection={Econometric Society Monographs}}

@InProceedings{riedmiller2005fqi,
author="Riedmiller, Martin",
editor="Gama, Jo{\~a}o
and Camacho, Rui
and Brazdil, Pavel B.
and Jorge, Al{\'i}pio M{\'a}rio
and Torgo, Lu{\'i}s",
title="Neural Fitted Q Iteration -- First Experiences with a Data Efficient Neural Reinforcement Learning Method",
booktitle="Machine Learning: ECML 2005",
year="2005",
publisher="Springer Berlin Heidelberg",
address="Berlin, Heidelberg",
pages="317--328",
isbn="978-3-540-31692-3"
}

@Article{piette2023powerED,
author="Piette, John D
and Thomas, Laura
and Newman, Sean
and Marinec, Nicolle
and Krauss, Joel
and Chen, Jenny
and Wu, Zhenke
and Bohnert, Amy S B",
title="An Automatically Adaptive Digital Health Intervention to Decrease Opioid-Related Risk While Conserving Counselor Time: Quantitative Analysis of Treatment Decisions Based on Artificial Intelligence and Patient-Reported Risk Measures",
journal="J Med Internet Res",
year="2023",
month="Jul",
day="11",
volume="25",
pages="e44165",
issn="1438-8871",
doi="10.2196/44165",
url="https://www.jmir.org/2023/1/e44165",
url="https://doi.org/10.2196/44165",
url="http://www.ncbi.nlm.nih.gov/pubmed/37432726"
}

@InProceedings{le2019Batch,
  title = 	 {Batch Policy Learning under Constraints},
  author =       {Le, Hoang and Voloshin, Cameron and Yue, Yisong},
  booktitle = 	 {Proceedings of the 36th International Conference on Machine Learning},
  pages = 	 {3703--3712},
  year = 	 {2019},
  editor = 	 {Chaudhuri, Kamalika and Salakhutdinov, Ruslan},
  volume = 	 {97},
  series = 	 {Proceedings of Machine Learning Research},
  month = 	 {09--15 Jun},
  publisher =    {PMLR},
  url = 	 {https://proceedings.mlr.press/v97/le19a.html}
}

@article{hollingshead2016pain,
title = {The Pain Experience of Hispanic Americans: A Critical Literature Review and Conceptual Model},
journal = {The Journal of Pain},
volume = {17},
number = {5},
pages = {513-528},
year = {2016},
issn = {1526-5900},
doi = {https://doi.org/10.1016/j.jpain.2015.10.022},
url = {https://www.sciencedirect.com/science/article/pii/S1526590015009591},
author = {Nicole A. Hollingshead and Leslie Ashburn-Nardo and Jesse C. Stewart and Adam T. Hirsh}
}

@article{
wang2023adjusting,
title={Adjusting Machine Learning Decisions for Equal Opportunity and Counterfactual Fairness},
author={Yixin Wang and Dhanya Sridhar and David Blei},
journal={Transactions on Machine Learning Research},
issn={2835-8856},
year={2023},
url={https://openreview.net/forum?id=P6NcRPb13w},
note={}
}

@misc{bian2026double,
      title={Double Fairness Policy Learning: Integrating Action Fairness and Outcome Fairness in Decision-making}, 
      author={Zeyu Bian and Lan Wang and Chengchun Shi and Zhengling Qi},
      year={2026},
      eprint={2601.19186},
      archivePrefix={arXiv},
      primaryClass={stat.ML},
      url={https://arxiv.org/abs/2601.19186}, 
      howpublished={\textit{arXiv preprint arXiv:2601.19186}}
}

@misc{distefano2020counterfactual,
      title={Counterfactual fairness: removing direct effects through regularization}, 
      author={Di Stefano, Pietro G. and James M. Hickey and Vlasios Vasileiou},
      year={2020},
      eprint={2002.10774},
      archivePrefix={arXiv},
      primaryClass={cs.AI},
      url={https://arxiv.org/abs/2002.10774}, 
      howpublished={\textit{arXiv preprint arXiv:2002.10774}}
}

@inproceedings{zuo2022counterfactual,
author = {Zuo, Aoqi and Wei, Susan and Liu, Tongliang and Han, Bo and Zhang, Kun and Gong, Mingming},
title = {Counterfactual fairness with partially known causal graph},
year = {2022},
isbn = {9781713871088},
publisher = {Curran Associates Inc.},
address = {Red Hook, NY, USA},
booktitle = {Proceedings of the 36th International Conference on Neural Information Processing Systems},
articleno = {91},
numpages = {15},
location = {New Orleans, LA, USA},
series = {NIPS '22}
}

@inproceedings{
chen2025cflb,
title={Causal Logistic Bandits with Counterfactual Fairness Constraints},
author={Jiajun Chen and Jin Tian and Christopher John Quinn},
booktitle={Forty-second International Conference on Machine Learning},
year={2025},
url={https://openreview.net/forum?id=1N4Y0Yj8th}
}

@article{huang2022achieving, title={Achieving Counterfactual Fairness for Causal Bandit}, volume={36}, url={https://ojs.aaai.org/index.php/AAAI/article/view/20653}, DOI={10.1609/aaai.v36i6.20653}, abstractNote={In online recommendation, customers arrive in a sequential and stochastic manner from an underlying distribution and the online decision model recommends a chosen item for each arriving individual based on some strategy. We study how to recommend an item at each step to maximize the expected reward while achieving user-side fairness for customers, i.e., customers who share similar profiles will receive a similar reward regardless of their sensitive attributes and items being recommended. By incorporating causal inference into bandits and adopting soft intervention to model the arm selection strategy, we first propose the d-separation based UCB algorithm (D-UCB) to explore the utilization of the d-separation set in reducing the amount of exploration needed to achieve low cumulative regret. Based on that, we then propose the fair causal bandit (F-UCB) for achieving the counterfactual individual fairness. Both theoretical analysis and empirical evaluation demonstrate effectiveness of our algorithms.}, number={6}, journal={Proceedings of the AAAI Conference on Artificial Intelligence}, author={Huang, Wen and Zhang, Lu and Wu, Xintao}, year={2022}, month={Jun.}, pages={6952–6959} }

@inproceedings{damour2020fairness,
author = {D'Amour, Alexander and Srinivasan, Hansa and Atwood, James and Baljekar, Pallavi and Sculley, D. and Halpern, Yoni},
title = {Fairness is not static: deeper understanding of long term fairness via simulation studies},
year = {2020},
isbn = {9781450369367},
publisher = {Association for Computing Machinery},
address = {New York, NY, USA},
url = {https://doi.org/10.1145/3351095.3372878},
doi = {10.1145/3351095.3372878},
booktitle = {Proceedings of the 2020 Conference on Fairness, Accountability, and Transparency},
pages = {525–534},
numpages = {10},
location = {Barcelona, Spain},
series = {FAT* '20}
}

@InProceedings{creager2020causal4fairness,
  title = 	 {Causal Modeling for Fairness In Dynamical Systems},
  author =       {Creager, Elliot and Madras, David and Pitassi, Toniann and Zemel, Richard},
  booktitle = 	 {Proceedings of the 37th International Conference on Machine Learning},
  pages = 	 {2185--2195},
  year = 	 {2020},
  editor = 	 {Daumé III, Hal and Singh, Aarti},
  volume = 	 {119},
  series = 	 {Proceedings of Machine Learning Research},
  month = 	 {13--18 Jul},
  publisher =    {PMLR},
  url = 	 {https://proceedings.mlr.press/v119/creager20a.html}
}

@InProceedings{liu2018delayed,
  title = 	 {Delayed Impact of Fair Machine Learning},
  author =       {Liu, Lydia T. and Dean, Sarah and Rolf, Esther and Simchowitz, Max and Hardt, Moritz},
  booktitle = 	 {Proceedings of the 35th International Conference on Machine Learning},
  pages = 	 {3150--3158},
  year = 	 {2018},
  editor = 	 {Dy, Jennifer and Krause, Andreas},
  volume = 	 {80},
  series = 	 {Proceedings of Machine Learning Research},
  month = 	 {10--15 Jul},
  publisher =    {PMLR},
  url = 	 {https://proceedings.mlr.press/v80/liu18c.html}
}

@article{chernozhukov2010rearrangement,
author = {Chernozhukov, Victor and Fernández-Val, Iván and Galichon, Alfred},
title = {Quantile and Probability Curves Without Crossing},
journal = {Econometrica},
volume = {78},
number = {3},
pages = {1093-1125},
doi = {https://doi.org/10.3982/ECTA7880},
url = {https://onlinelibrary.wiley.com/doi/abs/10.3982/ECTA7880},
eprint = {https://onlinelibrary.wiley.com/doi/pdf/10.3982/ECTA7880},
year = {2010}
}

@misc{kingma2015adam,
      title={Adam: A Method for Stochastic Optimization}, 
      author={Diederik P. Kingma and Jimmy Ba},
      year={2017},
      eprint={1412.6980},
      archivePrefix={arXiv},
      primaryClass={cs.LG},
      url={https://arxiv.org/abs/1412.6980}, 
      howpublished={\textit{arXiv preprint arXiv:1412.6980}}
}

@inproceedings{seabold2010statsmodels,
  title={statsmodels: Econometric and statistical modeling with python},
  author={Seabold, Skipper and Perktold, Josef},
  booktitle={9th Python in Science Conference},
  year={2010},
}

@inproceedings{kakade2002approximate,
author = {Kakade, Sham and Langford, John},
title = {Approximately Optimal Approximate Reinforcement Learning},
year = {2002},
isbn = {1558608737},
publisher = {Morgan Kaufmann Publishers Inc.},
address = {San Francisco, CA, USA},
booktitle = {Proceedings of the Nineteenth International Conference on Machine Learning},
pages = {267–274},
numpages = {8},
series = {ICML '02}
}

@article{chernozhukov2005ivqte,
author = {Chernozhukov, Victor and Hansen, Christian},
title = {An IV Model of Quantile Treatment Effects},
journal = {Econometrica},
volume = {73},
number = {1},
pages = {245-261},
doi = {https://doi.org/10.1111/j.1468-0262.2005.00570.x},
url = {https://onlinelibrary.wiley.com/doi/abs/10.1111/j.1468-0262.2005.00570.x},
eprint = {https://onlinelibrary.wiley.com/doi/pdf/10.1111/j.1468-0262.2005.00570.x},
year = {2005}
}

@article{li2024dcqte,
author = {Ting Li and Chengchun Shi and Zhaohua Lu and Yi Li and Hongtu Zhu},
title = {Evaluating Dynamic Conditional Quantile Treatment Effects with Applications in Ridesharing},
journal = {Journal of the American Statistical Association},
volume = {119},
number = {547},
pages = {1736--1750},
year = {2024},
publisher = {Taylor \& Francis},
doi = {10.1080/01621459.2024.2314316},


URL = { 
    
        https://doi.org/10.1080/01621459.2024.2314316
    
    

},
eprint = { 
    
        https://doi.org/10.1080/01621459.2024.2314316
    
    

}

}

@article{perdomo2023linearOPE,
  author  = {Juan C. Perdomo and Akshay Krishnamurthy and Peter Bartlett and Sham Kakade},
  title   = {A Complete Characterization of Linear Estimators for Offline Policy Evaluation},
  journal = {Journal of Machine Learning Research},
  year    = {2023},
  volume  = {24},
  number  = {284},
  pages   = {1--50},
  url     = {http://jmlr.org/papers/v24/22-0341.html}
}

@InProceedings{chen2019information,
  title = 	 {Information-Theoretic Considerations in Batch Reinforcement Learning},
  author =       {Chen, Jinglin and Jiang, Nan},
  booktitle = 	 {Proceedings of the 36th International Conference on Machine Learning},
  pages = 	 {1042--1051},
  year = 	 {2019},
  editor = 	 {Chaudhuri, Kamalika and Salakhutdinov, Ruslan},
  volume = 	 {97},
  series = 	 {Proceedings of Machine Learning Research},
  month = 	 {09--15 Jun},
  publisher =    {PMLR},
  url = 	 {https://proceedings.mlr.press/v97/chen19e.html}
}

@misc{hu2023fastratesregretoffline,
      title={Fast Rates for the Regret of Offline Reinforcement Learning}, 
      author={Yichun Hu and Nathan Kallus and Masatoshi Uehara},
      year={2023},
      eprint={2102.00479},
      archivePrefix={arXiv},
      primaryClass={cs.LG},
      url={https://arxiv.org/abs/2102.00479}, 
      howpublished={\textit{arXiv preprint arXiv:2102.00479}}
}


\newpage
\appendix

\setcounter{figure}{0}
\renewcommand{\thefigure}{A\arabic{figure}}
\setcounter{table}{0}
\renewcommand{\thetable}{A\arabic{table}}

\setcounter{theorem}{0}
\renewcommand{\thetheorem}{A\arabic{theorem}}
\setcounter{assumption}{0}
\renewcommand{\theassumption}{A\arabic{assumption}}

\setcounter{proposition}{0}
\renewcommand{\theproposition}{A\arabic{proposition}}

\setcounter{remark}{0}
\renewcommand{\theremark}{A\arabic{remark}}

\setcounter{lemma}{0}
\renewcommand{\thelemma}{A\arabic{lemma}}

\section{Table of Key Notation}

\begin{table}[H]
\centering
\caption{Summary of Key Notation}
\label{tab:notation}
\renewcommand{\arraystretch}{1.5} 
\small 
\begin{tabularx}{\textwidth}{@{} l X @{}} 
\toprule
\textbf{Symbol} & \textbf{Description} \\
\midrule

\rowcolor{gray!10} \multicolumn{2}{l}{\textit{Problem Setup \& Data-generating Mechanism}} \\
$Z, \mathcal{Z}$ & Sensitive attribute and its space $\mathcal{Z}=\{z^{(1)},...,z^{(K)}\}$. \\
$S_t, A_t, R_t$ & State, action, and reward at time $t$, with $S_t=(S_t^{(1)}, \dots, S_t^{(M)})$. \\
$\bar{S}_t, \bar{A}_t, \bar{R}_t$ & Historical state, action, and reward up to time $t$. \\
$U_t^S, U_t^A, U_t^R$ & Exogenous noise variables for state, action, and reward at time $t$. \\
$\bar{U}_t^S, \bar{U}_t^A, \bar{U}_t^R$ & Historical exogenous noise variables for state, action, and reward up to time $t$. \\
$H_t$ & $\{Z, \bar{S}_t, \bar{A}_{t-1}, \bar{R}_{t-1}\}$, history up to time $t$. \\

\midrule
\rowcolor{gray!10} \multicolumn{2}{l}{\textit{Counterfactual Fairness \& SCM}} \\
$S_t^{Z \leftarrow z}(\overline{U}_t(h_t))$ & Counterfactual state at time $t$ for an individual with observed history $h_t$ had their sensitive attribute been $z$ \textit{while experiencing the same action sequence as the observed action sequence $\bar{a}_{t-1}$}; sometimes written as $s_t^z$ for simplicity. \\
$A_t^{Z \leftarrow z}(\overline{U}_t(h_t))$ & Counterfactual action at time $t$ for an individual with observed history $h_t$ had their sensitive attribute been $z$ \textit{while experiencing the same action sequence as the observed action sequence $\bar{a}_{t-1}$}. \\
$R_t^{Z \leftarrow z}(\overline{U}_{t+1}(h_{t+1}))$ & Counterfactual reward at time $t$ for an individual with observed history $h_t$ had their sensitive attribute been $z$ \textit{while experiencing the same action sequence as the observed action sequence $\bar{a}_{t}$}; sometimes written as $r_t^z$ for simplicity. \\
$\hat{S}_t^{Z \leftarrow z}(\overline{U}_t(h_t))$, $\hat{R}_t^{Z \leftarrow z}(\overline{U}_{t+1}(h_{t+1}))$ & Estimated counterfactual states and rewards, respectively, under $Z=z$ for an individual with observed history $h_t$; sometimes written as $\hat{s}_t^z$ and $\hat{r}_t^z$, respectively, for simplicity. \\
$\tilde{s}_t$ & $\big(S_t^{Z \leftarrow z^{(1)}}(\overline{U}_t(h_t)), \dots, S_t^{Z \leftarrow z^{(K)}}(\overline{U}_t(h_t))\big)$, augmented state invariant to the observed sensitive attribute. \\
$\tilde{r}_t$ & $\sum_{k=1}^K \mathbb{P}(Z=z^{(k)})R_t^{Z \leftarrow z^{(k)}}(\overline{U}_{t+1}(h_{t+1}))$, augmented reward invariant to the observed sensitive attribute. \\
$\hat{\tilde{s}}_t, \hat{\tilde{r}}_t$ & Estimated augmented state and reward used for policy learning. \\

\midrule
\rowcolor{gray!10} \multicolumn{2}{l}{\textit{Proof of CFSMDM Performance Guarantees}} \\
$\hat{\pi}$ & Policy learned by CFSMDM, written as a stationary policy with respect to the estimated augmented state $\hat{\tilde{s}}_t$. \\
$\tilde{\pi}_n^{z}$ & Policy learned by CFSMDM, written as a history-dependent policy with respect to the true augmented state $\tilde{s}_t$. \\
$Q_0^{(i)}(\tau, z)$, $\hat{Q}_{0,n}^{(i)}(\tau, z)$ & True and estimated $\tau$-th conditional quantile of $S_0^{(i)}$ given $Z=z$. \\
$Q^{(i)}(\tau, z, s, a)$, $\hat{Q}_n^{(i)}(\tau, z, s, a)$ & True and estimated $\tau$-th conditional quantile of $S_t^{(i)}$ for $t \ge 1$ given $Z=z, S_{t-1}=s, A_{t-1}=a$. \\
$W(\tau, z, s, a)$, $\hat{W}_n(\tau, z, s, a)$ & True and estimated $\tau$-th conditional quantile of the reward $R_t$ given $Z=z, S_{t}=s, A_{t}=a$. \\
$\tau_{j,t}^{(i)}, \theta_{j,t}$ & True quantile levels of observed state ($i$-th dimension) and reward, respectively. \\
$\hat{\tau}_{j,t,n}^{(i)}, \hat{\theta}_{j,t,n}$ & Estimated quantile levels of observed state ($i$-th dimension) and reward, respectively. \\
$\|\cdot\|_{\infty}$ & Supremum norm (when applied to functions) or max norm (when applied to vectors). \\
\bottomrule
\end{tabularx}
\end{table}

\section{Comparison of Counterfactual Fairness and Other Fairness Metrics}
\label{sec:cf_comparison}
To further compare counterfactual fairness (CF) with other fairness criteria, notably demographic parity and equal opportunity, we present the following single-stage decision-making example \citep{wang2023adjusting}. Consider the setting where a university uses ML models to assist with making undergraduate admissions decisions. For simplicity, assume an applicant's features only include the biological sex (i.e. female or male), denoted by $S$, and entrance exam score, denoted by $E$. Suppose biologically female students tend to face systemic barriers in early education that negatively impact their entrance exam scores, which is a manifestation of historical bias, and the university wants to enforce some algorithmic fairness criteria to ensure that the admissions decision is fair to applicants of any biological sex. We discuss below what different fairness criteria would enforce in this setting.

\paragraph{Demographic parity:} If demographic parity is enforced, then the university would ensure that the proportion of female applicants being admitted is equal to the proportion of male applicants being admitted. Mathematically, if $\hat{Y}=1$ stands for "offered admission" and $\hat{Y}=0$ stands for "denied admission", then demographic parity guarantees
\begin{equation*}
    \mathbb{P}(\hat{Y}=1 | S=female) = \mathbb{P}(\hat{Y}=1 | S=male).
\end{equation*}
Demographic parity seeks to match the statistical proportions without considering the applicants' actual qualifications, so it could cause unfairness if applicants in one group are indeed more qualified than those from the other group. Also, demographic parity only focuses on fairness at the group level. As a result, the university can admit only females who faced fewer barriers, while leaving out the others who faced more barriers, without violating demographic parity; this might fail to enforce fairness as intended.

\paragraph{Equal opportunity:} If equal opportunity is enforced, then the university would ensure that among truly qualified applicants, the proportion of females being admitted is equal to the proportion of males being admitted. Suppose $\hat{Y}=1$ stands for "offered admission" and $\hat{Y}=0$ stands for "denied admission". Also, suppose $Y=1$ stands for "qualified for admission" and $Y=0$ stands for "not qualified for admission". Then, mathematically, equal opportunity guarantees 
\begin{equation*}
    \mathbb{P}(\hat{Y}=1 | S=female, Y=1) = \mathbb{P}(\hat{Y}=1 | S=male, Y=1).
\end{equation*}
While equal opportunity takes into account the actual qualifications, it disregards the mechanism by which group differences arise. Moreover, similar to demographic parity, equal opportunity is also group-based, and the university can systematically favor qualified female applicants who faced less barriers at the cost of other qualified female applicants who faced more barriers.

\paragraph{Counterfactual fairness:} If CF is enforced, then the university would ensure that each female applicant would have the same probability of being admitted had they been male, and each male applicant would have the same probability of being admitted had they been female. CF focuses on how each applicant's probability of being admitted would change in a counterfactual world where the applicant's biological sex is switched to the opposite value. In this counterfactual world, not only the applicant's biological sex changes, but the applicant's entrance exam score can also change because they might face a different level of educational barriers as a member of the opposite sex. Therefore, rather than match group-level metrics, CF directly models how the sensitive attribute affects each applicant's features. This enforces fairness at the individual level and corrects for historical barriers that each applicant experienced within the assumed causal graph.

The above example shows that, compared to other popular fairness criteria, CF has two notable distinctions. First, CF enforces fairness at an individual level rather than group level. This allows a counterfactually fair algorithm to make decisions that are fair for every individual, which mitigates the risk of the algorithm discriminating against some subgroups of individuals within the sensitive group based specifically on pathways originating from the sensitive attributes. Second, CF takes into account not only the different sensitive attributes but also how the sensitive attribute affects other covariates used for decision-making. Therefore, in addition to addressing the unfairness that directly results from the sensitive attributes, CF also corrects unfairness that is indirectly transmitted through other non-sensitive decision-making variables that are correlated with the sensitive attributes.

\section{Intuition Behind Sequential Marginal Distribution Mapping}
\label{sec:smdm_intuition}
In this section, we explain some intuitions for why sequential marginal distribution mapping works. We focus on the intuition for counterfactual state estimation using SMDM (similarly for counterfactual reward estimation). We consider the problem of estimating the $i$-th component of the counterfactual states at some $t \geq 1$ in a CMDP with state transition kernel
\begin{equation*}
S_t^{(i)} = g(Z, A_{t-1}, S_{t-1}) + U_t^{S^{(i)}}.
\end{equation*}
It is easy to check that this transition kernel satisfies Assumption \ref{ass:state_monotonicity}, so SMDM is applicable. 

Suppose the observed sensitive attribute is $z$, and we want to estimate the counterfactual under $z'$. For ease of presentation, for $\tilde{z} \in \{z, z'\}$, we write
\begin{align*}
&s_t^{\tilde{z},(i)} = S_t^{Z \leftarrow \tilde{z}, (i)}(\bar{U}(h_t)), \text{ } s_{t-1}^{\tilde{z},(i)} = S_t^{Z \leftarrow \tilde{z}, (i)}(\bar{U}(h_{t-1})), \\
&s_t^{\tilde{z}} = S_t^{Z \leftarrow \tilde{z}}(\bar{U}(h_t)), \text{ } s_{t-1}^{\tilde{z}} = S_t^{Z \leftarrow \tilde{z}}(\bar{U}(h_{t-1})).
\end{align*}
Also, let $u_t^{S^{(i)}}$ be the realized value of $U_t^{S^{(i)}}$ for our individual of interest. Note that the structural equation representations of $s_t^{z}$ and $s_t^{z'}$ are
\begin{equation*}
s_t^{z, (i)} = g(z, a_{t-1}, s_{t-1}^{z}) + u_t^{S^{(i)}}, \text{ } s_t^{z', (i)} = g(z', a_{t-1}, s_{t-1}^{z'}) + u_t^{S^{(i)}}.
\end{equation*}
Meanwhile, since $S_t^{(i)} = g(Z, A_{t-1}, S_{t-1}) + U_t^{S^{(i)}}$, the conditional distributions of $S_t^{(i)}$ are
\begin{align*}
&\mathbb{P}(S_t^{(i)} | Z=z, S_{t-1}=s_{t-1}^{z}, A_{t-1}=a_{t-1}) = \mathbb{P}(g(z, a_{t-1}, s_{t-1}^{z}) + U_t^{S^{(i)}}) \\
&\mathbb{P}(S_t^{(i)} | Z=z', S_{t-1}=s_{t-1}^{z'}, A_{t-1}=a_{t-1}) = \mathbb{P}(g(z', a_{t-1}, s_{t-1}^{z'}) + U_t^{S^{(i)}}).
\end{align*}
In this case, we can see that the quantile level of $s_t^{z, (i)}$ on $\mathbb{P}(S_t^{(i)} | Z=z, S_{t-1}=s_{t-1}^{z}, A_{t-1}=a_{t-1})$ and the quantile level of $s_t^{z', (i)}$ on $\mathbb{P}(S_t^{(i)} | Z=z', S_{t-1}=s_{t-1}^{z'}, A_{t-1}=a_{t-1})$ are both equal to the quantile level of the realized exogenous variable $u_t^{S^{(i)}}$ on $\mathbb{P}(U_t^{S^{(i)}})$, which is in line with Theorem \ref{thm:state_matching}.

More precisely, suppose $U_t^{S^{(i)}} \sim Uniform[-1, 1]$. Then 
\begin{align*}
&S_t^{(i)} | Z=z, S_{t-1}=s_{t-1}^{z}, A_{t-1}=a_{t-1} \sim Uniform[g(z, a_{t-1}, s_{t-1}^{z}) - 1, g(z, a_{t-1}, s_{t-1}^{z}) + 1] \\
&S_t^{(i)} | Z=z', S_{t-1}=s_{t-1}^{z'}, A_{t-1}=a_{t-1} \sim Uniform[g(z', a_{t-1}, s_{t-1}^{z}) - 1, g(z', a_{t-1}, s_{t-1}^{z}) + 1].
\end{align*}
Also, suppose $u_t^{S^{(i)}}=0.5$ so that it is on the $75$th percentile of $\mathbb{P}(U_t^{S^{(i)}})$. Then $s_t^{z, (i)} = g(z, a_{t-1}, s_{t-1}^{z}) + 0.5$, which is on the $75$th percentile of $\mathbb{P}(S_t^{(i)} | Z=z, S_{t-1}=s_{t-1}^{z}, A_{t-1}=a_{t-1})$. Similarly, $s_t^{z' (i)} = g(z', a_{t-1}, s_{t-1}^{z'}) + 0.5$, which is on the $75$th percentile of $\mathbb{P}(S_t^{(i)} | Z=z', S_{t-1}=s_{t-1}^{z'}, A_{t-1}=a_{t-1})$. This is in line with our previous claim that $s_t^{z, (i)}$ and $s_t^{z', (i)}$ are on the same quantile on their respective conditional distributions.

Finally, we note that the noise does not need to be additive as in the example above. The above analysis holds as long as the CMDP satisfies Assumption \ref{ass:state_monotonicity} (or Assumption \ref{ass:reward_monotonicity} if we are interested in the counterfactual rewards). It is the strict monotonicity of $S_t^{(i)}$ in $U_t^{S^{(i)}}$ that guarantees $s_t^{z, (i)}$ and $s_t^{z', (i)}$ are on the same quantile on their respective conditional distributions.

\section{Pseudocodes of SMDM and CFSMDM}
\label{sec:pseudocode}

\begin{algorithm}[H]
\caption{Sequential Marginal Distribution Mapping (SMDM)}\label{alg:smdm}

\textbf{Input:} Original data $\mathcal{D} = \{s_{j,0}, s_{j,t}, z_j, a_{j,t-1}, r_{j,t-1}: j = 1, \dots, N, t = 1, \dots, T\}$, where $s_{j,t} = (s_{j,t}^{(1)}, \dots, s_{j,t}^{(M)})$

\begin{algorithmic}[1]


\For{$i = 1, \dots, M$} \Comment{\textit{Step 1: Fit models of the environment dynamics.}}

    \State Fit models $\hat{Q}_0^{(i)}(\tau, z)$ and $\hat{Q}^{(i)}(\tau, z, s, a)$ using $\mathcal{D}$ that estimates the conditional quantile functions $Q_0^{(i)}(\tau, z) = \inf\{x \in \mathcal{S}: \mathbb{P}(S_0^{(i)} \leq x|Z=z) = \tau\}$ and $Q^{(i)}(\tau, z, s, a) = \inf\{x \in \mathcal{S}: \mathbb{P}(S_t^{(i)} \leq x|Z=z, S_{t-1}=s, A_{t-1}=a) = \tau, t \geq 1\}$, respectively.

\EndFor

\State Fit model $\hat{W}(\tau, z, s, a)$ using $\mathcal{D}$ that estimates the conditional quantile function $W(\tau, z, s, a) = \inf\{x \in \mathcal{R}: \mathbb{P}(R_t \leq x|Z=z, S_{t-1}=s, A_{t-1}=a) = \tau, t \geq 1\}$.

\State $\forall z \in \mathcal{Z}$, estimate $\hat{\mathbb{P}}(Z=z)$ by the empirical mean.

\For{$j = 1, \dots, N$} 

    \For{$i = 1, \dots, M$} \Comment{\textit{Step 2: Preprocessing at $t=0$.}}

        \State Estimate $\tau_{j,0}^{(i)} = \mathbb{P}(S_0^{(i)} \leq s_{j,0}^{(i)}|Z=z_j)$ using $\hat{Q}_0$. Denote the estimate by $\hat{\tau}_{j,0}^{(i)}$.
        
        \State $\hat{s}_{j,0}^{Z \leftarrow z_j, (i)} = s_{j,0}^{(i)}$.
    
        \State $\forall z' \in \mathcal{Z} \backslash \{z_j\}$, set $\hat{s}_{j,0}^{Z \leftarrow z', (i)} = \hat{Q}_0^{(i)}(\hat{\tau}_{j,0}^{(i)}, z')$.

    \EndFor

    \State $\forall z \in \mathcal{Z}$, $\hat{s}_{j,0}^{Z \leftarrow z} = [\hat{s}_{j,0}^{Z \leftarrow z, (1)}, \dots, \hat{s}_{j,0}^{Z \leftarrow z, (M)}]$

    \State $\hat{\tilde{s}}_{j,0} = [\hat{s}_{j,0}^{Z \leftarrow z^{(1)}}, \dots, \hat{s}_{j,0}^{Z \leftarrow z^{(K)}}]^T$
    
    \For{$t = 1, \dots, T$} \Comment{\textit{Step 3: Preprocessing for $t \geq 1$.}}
    
        \State Estimate $\theta_{j,t-1} = \mathbb{P}(R_{t-1} \leq r_{j,t-1}|Z=z_j, S_{t-1}=s_{j,t-1}, A_{t-1}=a_{j,t-1})$ using $\hat{W}$. Denote the estimate by $\hat{\theta}_{j,t-1}$
        
        \State $\hat{r}_{j,t-1}^{Z \leftarrow z_j} = r_{j,t-1}$.
        
        \State $\forall z' \in \mathcal{Z} \backslash \{z_j\}$, set $\hat{r}_{j,t-1}^{Z \leftarrow z'} = \hat{W}(\hat{\theta}_{j,t-1}, z', \hat{s}_{j,t-1}^{Z \leftarrow z'}, a_{j,t-1})$.

        \State $\hat{\tilde{r}}_{j,t-1} = \sum_{k=1}^{K} \hat{r}_{j,t-1}^{Z \leftarrow z^{(k)}} \hat{\mathbb{P}}(Z=z^{(k)})$

        \For{$i = 1, \dots, M$}
        
            \State Estimate $\tau_{j,t}^{(i)} = \mathbb{P}(S_t^{(i)} \leq s_{j,t}^{(i)}|Z=z_j, S_{t-1}=s_{j,t-1}, A_{t-1}=a_{j,t-1})$ using $\hat{Q}$. Denote the estimate by $\hat{\tau}_{j,t}^{(i)}$
            
            \State $\hat{s}_{j,t}^{Z \leftarrow z_j, (i)} = s_{j,t}^{(i)}$.
            
            \State $\forall z' \in \mathcal{Z} \backslash \{z_j\}$, set $\hat{s}_{j,t}^{Z \leftarrow z',(i)} = \hat{Q}(\hat{\tau}_{j,t}^{(i)}, z', \hat{s
            }_{j,t-1}^{Z \leftarrow z'}, a_{j,t-1})$.

        \EndFor

        \State $\forall z \in \mathcal{Z}$, $\hat{s}_{j,t}^{Z \leftarrow z} = [\hat{s}_{j,t}^{Z \leftarrow z, (1)}, \dots, \hat{s}_{j,t}^{Z \leftarrow z, (M)}]$

        \State $\hat{\tilde{s}}_{j,t} = [\hat{s}_{j,t}^{Z \leftarrow z^{(1)}}, \dots, \hat{s}_{j,t}^{Z \leftarrow z^{(K)}}]^T$
        
    \EndFor
    
\EndFor

\end{algorithmic}

\textbf{Output:} Preprocessed experience tuples $\{\hat{\tilde{s}}_{j,0}, \hat{\tilde{s}}_{j,t}, a_{j,t-1}, \hat{\tilde{r}}_{j,t-1}: j = 1, \dots, N, t = 1, \dots, T\}$.

\end{algorithm}

\begin{algorithm}[H]
\caption{Counterfactually Fair Policy Learning via SMDM (CFSMDM)}\label{alg:cfsmdm}

\textbf{Input:} Original data $\mathcal{D} = \{s_{j,0}, s_{j,t}, z_j, a_{j,t-1}, r_{j,t-1}: j = 1, \dots, N, t = 1, \dots, T\}$, where $s_{j,t} = (s_{j,t}^{(1)}, \dots, s_{j,t}^{(M)})$

\begin{algorithmic}[1]

\State Preprocess $\mathcal{D}$ using SMDM to obtain preprocessed experience tuples $\tilde{\mathcal{D}} = \{\hat{\tilde{s}}_{j,0}, \hat{\tilde{s}}_{j,t}, a_{j,t-1}, \hat{\tilde{r}}_{j,t-1}: j = 1, \dots, N, t = 1, \dots, T\}$.

\State Train a policy $\hat{\pi}$ using $\tilde{\mathcal{D}}$.

\end{algorithmic}

\textbf{Output:} Learned policy $\hat{\pi}$, which should be close to being counterfactually fair.

\end{algorithm}

\section{Connections Between CFSMDM and CFSDP}
\label{sec:cfsmdm_cfsdp}
\citet{wang2025counterfactuallyfairreinforcementlearning} proposed a data preprocessing method for performing counterfactually fair policy learning, which we call CFSDP. CFSDP also enforces CF by performing policy learning on estimates of the augmented states $\tilde{S}_t$ and the augmented rewards $\tilde{R}_t$, but it differs from CFSMDM in how the augmented states and rewards are estimated. As a result of the difference in estimation methods, while CFSMDM requires Assumptions \ref{ass:state_monotonicity} and \ref{ass:reward_monotonicity}, CFSDP makes an additivity assumption: for $t \geq 0$, the exogenous variables $U_t^S$ and $U_t^R$ are additive to $S_t$ and $R_t$, respectively.

The following theorem states that the additivity assumption of CFSDP is strictly stronger than Assumptions \ref{ass:state_monotonicity} and \ref{ass:reward_monotonicity}, which means CFSMDM can be considered strictly more general than CFSDP.

\begin{theorem}
\label{thm:strict_weak}
The additivity assumption implies Assumptions \ref{ass:state_monotonicity} and \ref{ass:reward_monotonicity}, but Assumptions \ref{ass:state_monotonicity} and \ref{ass:reward_monotonicity} do not imply the additivity assumption in general.
\end{theorem}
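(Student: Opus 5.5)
The proof has two parts: first show that additivity implies both monotonicity assumptions, then exhibit one CMDP that satisfies Assumptions~\ref{ass:state_monotonicity} and~\ref{ass:reward_monotonicity} but is not additive.

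\emph{First direction.} Write the additivity assumption in its structural form. For $t=0$ it reads $S_0^{(i)} = g_0^{(i)}(Z) + U_0^{S^{(i)}}$. For $t\ge 1$ it reads $S_t^{(i)} = g^{(i)}(Z, A_{t-1}, S_{t-1}) + U_t^{S^{(i)}}$. For rewards it reads $R_t = h(Z, S_t, A_t) + U_t^R$. Here $g_0^{(i)}$, $g^{(i)}$ and $h$ are deterministic functions. So $f_0^{S^{(i)}}(z,u) = g_0^{(i)}(z) + u$, and similarly for $f^{S^{(i)}}$ and $f^R$. Fix any level of the non-noise arguments. Each map is then a translation $u \mapsto c + u$, which is deterministic and strictly increasing in $u$ on all of $\mathbb{R}$, hence on the support of the noise. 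This gives Assumptions~\ref{ass:state_monotonicity} and~\ref{ass:reward_monotonicity} directly.

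\emph{Second direction.} I will construct an explicit counterexample: a CMDP whose structural functions are strictly increasing in the noise but cannot be written in additive form. Take $M=1$ and $\mathcal{Z}=\{0,1\}$. Define
\[
S_0 = (1+Z)\,U_0^S, \qquad S_t = (1+Z)\,U_t^S + S_{t-1} + A_{t-1}, \qquad R_t = \exp(U_t^R) + S_t + A_t,
\]
with all noises standard normal. Monotonicity is immediate, since $1+z>0$ and $\exp$ is strictly increasing.

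The main obstacle is making ``not additive'' rigorous. Additivity asks only for the existence of some representation $S_t = g(\cdot) + \tilde U$, and a different choice of noise variable might seem to rescue it. I would handle this with a distributional invariant. If additivity held, the conditional law of $S_0$ given $Z=z$ would be the law of $g_0(z)+U_0^S$, with the noise law not depending on $z$. Its variance would therefore not depend on $z$. In the example, however, $\mathrm{Var}(S_0\mid Z=z)=(1+z)^2$, which differs between $z=0$ and $z=1$. This is a contradiction.

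For the reward I would use a similar invariant. Given $(Z,S_t,A_t)$, the residual $R_t - h(\cdot)$ would have to be a fixed noise distribution. Here it is lognormal up to a shift, which is allowed, so the reward equation alone is not a counterexample. The state equation already suffices, since the two assumptions are required jointly. To make the argument robust, I would state explicitly that the additive noise must be exogenous and therefore independent of $Z$ and of the past. That independence is exactly what forces the conditional law of $S_t - g(\cdot)$ to be invariant across $z$, and the scale factor $1+Z$ violates it.
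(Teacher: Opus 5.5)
Your proof is correct. The first direction is the same as the paper's, but the second direction takes a genuinely different route.

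For the counterexample, the paper reuses its simulation environment (Data-generating CMDP~1). Its structural equations are cube roots of linear combinations of $Z$, the past state and action, and Gaussian noise. The paper then simply asserts that this CMDP satisfies Assumptions~\ref{ass:state_monotonicity}--\ref{ass:reward_monotonicity} but not additivity. You instead build a much simpler heteroscedastic model, $S_0=(1+Z)U_0^S$, and you actually \emph{prove} non-additivity with an invariant. Under any additive representation whose noise is exogenous and independent of $Z$, the conditional law of $S_0-g_0(Z)$ given $Z=z$ cannot depend on $z$. Yet $\mathrm{Var}(S_0\mid Z=z)=(1+z)^2$ does.

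Your route buys rigor and a stronger conclusion. It excludes not only the literal form $f(z,u)=g(z)+u$ but any observationally equivalent additive SCM with a re-chosen noise variable. That is the reading that matters for whether CFSDP could still be applied.

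The paper's route buys relevance: it shows that the very environment used in the nonadditive experiments lies outside CFSDP's assumption. Its non-additivity in your strong sense does hold for $\delta\neq 0$, but the paper never argues it. One way to check it: the density $3x^2\phi\big((x^3-c_z)/\sigma\big)/\sigma$ of $(c_z+V)^{1/3}$ vanishes only at $x=0$. Hence two such laws can be translates only with zero shift, which forces $c_0=c_1$.

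Your observation that $R_t=\exp(U_t^R)+S_t+A_t$ is itself additive after renaming the noise is accurate and harmless. Failure of the state equation already breaks the joint additivity assumption.
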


Now we argue that the CFSDP method can be seen as a simplified version of the CFSMDM method, where the simplification is made possible by the additive noise assumption. We focus on the counterfactual states in the following argument, and the argument for the counterfactual rewards should be similar. Under the additivity assumption and an assumption that $U_t^{S^{(i)}}$ is scalar-valued with mean zero, for an individual with sensitive attribute $z^{(k)}$ and observed history $h_t$, we have
\begin{align*}
&S_t^{Z \leftarrow z^{(k)}, (i)}(\bar{U}_t(h_t)) \\
=&f(z^{(k)}, S_t^{Z \leftarrow z^{(k)}}(\bar{U}_t(h_t)), A_{t-1}) + U_t^{S^{(i)}} \\
=&\mathbb{E}[S_t^{Z \leftarrow z^{(k)}, (i)}(\bar{U}_t(h_t))|S_{t-1}=S_{t-1}^{Z \leftarrow z^{(k)}}(\bar{U}_t(h_t))] + U_t^{S^{(i)}} \\
=&\mathbb{E}[S_t^{(i)}|Z=z^{(k)}, S_{t-1}=S_{t-1}^{Z \leftarrow z^{(k)}}(\bar{U}_t(h_t)), A_{t-1}=a_{t-1}] + U_t^{S^{(i)}},
\end{align*}
where the second equality follows from the assumptions that $U_t^{S^{(i)}}$ is mean zero, and the third equality is due to exchangeability implied by Figure \ref{fig:cmdp_dag}. Assuming the conditional expectation and past counterfactual states are known, this shows that the only stochasticity in the conditional distribution $\mathbb{P}(S_t^{(i)}|Z=z^{(k)}, S_{t-1}=S_{t-1}^{Z \leftarrow z^{(k)}}(\bar{U}_{t-1}(h_{t-1})), A_{t-1}=a_{t-1})$ comes from $U_t^{S^{(i)}}$. Moreover, since $S_t^{Z \leftarrow z^{(k)}, (i)}(\bar{U}_t(h_t))$ is strictly increasing in $U_t^{S^{(i)}}$, the quantile level of $U_t^{S^{(i)}}$ in $\mathbb{P}(U_t^{S^{(i)}})$ is the same as the quantile level of $S_t^{Z \leftarrow z^{(k)} (i)}(\bar{U}_t(h_t))$ in $\mathbb{P}(S_t^{(i)}|Z=z^{(k)}, S_{t-1}=S_{t-1}^{Z \leftarrow z^{(k)}}(\bar{U}_{t-1}(h_{t-1})), A_{t-1}=a_{t-1})$. Note that CFSDP performs the following operation to recover the counterfactual state under some sensitive attribute $z^{(j)}$ with $j \neq k$:
\begin{align*}
&S_t^{Z \leftarrow z^{(j)}, (i)}(\bar{U}_t(h_t)) \\
=&S_t^{Z \leftarrow z^{(k)}, (i)}(\bar{U}_t(h_t)) \\
&-\mathbb{E}[S_t^{(i)}|Z=z^{(k)}, S_{t-1}=S_{t-1}^{Z \leftarrow z^{(k)}}(\bar{U}_{t-1}(h_{t-1})), A_{t-1}=a_{t-1}] \\
&+\mathbb{E}[S_t^{(i)}|Z=z^{(j)}, S_{t-1}=S_{t-1}^{Z \leftarrow z^{(j)}}(\bar{U}_{t-1}(h_{t-1})), A_{t-1}=a_{t-1}] \\
=& \mathbb{E}[S_t^{(i)}|Z=z^{(j)}, S_{t-1}=S_{t-1}^{Z \leftarrow z^{(j)}}(\bar{U}_{t-1}(h_{t-1})), A_{t-1}=a_{t-1}] + U_t^{S^{(i)}}.
\end{align*}
By this operation, CFSDP ensures $S_t^{Z \leftarrow z^{(k)} (i)}(\bar{U}_t(h_t))$ and $S_t^{Z \leftarrow z^{(j)} (i)}(\bar{U}_t(h_t))$ share the same value of $U_t^{S^{(i)}}$ and thereby the same quantile level on their respective conditional distributions. Therefore, the counterfactual state estimation method in CFSDP can be seen as a shortcut for sequential marginal distribution mapping that is specifically tailored to structural equation models satisfying the additive noise assumption. 

\section{Connections Between CFSMDM and FLAP}
\label{sec:cfsdmd_flap}
Consider CFSMDM with only the initial time step, where the agent takes an action based on the state and possibly the sensitive attribute, but no reward is produced. If we view the action taken as an ML prediction or classification, then the problem in this setting can be seen as a single-stage ML problem, and CFSMDM enforces the predictions made by the predictor are counterfactually fair under the definition given in \cite{kusner2018counterfactualfairness}. In this case, CFSMDM reduces to the FLAP algorithm based on the marginal distribution mapping procedure, which is introduced in \cite{chen2024flap}. In this light, CFSMDM can be seen as a generalization of the marginal distribution mapping-based FLAP algorithm to the sequential decision-making setting.

\section{The FQI and FQE Algorithms}
\label{sec:fqi_fqe}
This section presents the pseudocodes of the FQI and FQE algorithms used in the numerical experiments and the real data analysis.

\begin{algorithm}[H]
    \caption{Fitted Q iteration \citep{riedmiller2005fqi}}

    \textbf{Input:} Dataset $\mathcal{D} = \{(s_i, a_i, r_i, s'_i) : i = 1, \dots, n\}$, function class $\mathcal{F}$.
    
    \begin{algorithmic}[1]
        
        \State Initialize $\hat{f}_0 \in \mathcal{F}$ randomly.
        \For{$b = 1, 2, \dots, B$}
            \State Compute target $y_i = r_i + \gamma \max_a \hat{f}_{b-1}(s'_i, a)$
            \State Build training dataset $\widetilde{D}_b = \{(s_i, a_i), y_i\}_{i=1}^n$
            \State Solve a supervised learning problem:
            \[ \hat{f}_b = \arg \min_{f \in \mathcal{F}} \frac{1}{n} \sum_{i=1}^{n} (f(s_i, a_i) - y_i)^2 \]
        \EndFor
        
    \end{algorithmic}

    \textbf{Output:} Estimated optimal Q function $\hat{f}_B$.
    
\end{algorithm}

\vspace{0.2cm}

\begin{algorithm}[H]
    \caption{Fitted Q Evaluation \citep{le2019Batch}}

    \textbf{Input:} Dataset $\mathcal{D} = \{(s_i, a_i, r_i, s'_i) : i = 1, \dots, n\}$. Function class $\mathcal{F}$. Policy $\pi$ to be evaluated.
    
    \begin{algorithmic}[1]
        
        \State Initialize $\hat{f}_0 \in \mathcal{F}$ randomly.
        \For{$b = 1, 2, \dots, B$}
            \State Compute target $y_i = r_i + \gamma \hat{f}_{b-1}(s'_i, \pi(s'_i)) \quad \forall i$
            \State Build training dataset $\widetilde{D}_b = \{(s_i, a_i), y_i\}_{i=1}^n$
            \State Solve a supervised learning problem:
            \[ \hat{f}_b = \arg \min_{f \in \mathcal{F}} \frac{1}{n} \sum_{i=1}^{n} (f(s_i, a_i) - y_i)^2 \]
        \EndFor
        
    \end{algorithmic}

    \textbf{Output:} $\widehat{V}^\pi(s) = \hat{f}_B(s, \pi(s)) \quad \forall s.$
    
\end{algorithm}

\section{Detailed Discussion of the Suboptimality Gap and Unfairness Bounds}
\label{sec:bound_discussion}

\paragraph{Suboptimality gap bound.} 
The first term in the bound controls the state and reward estimation error at the deployment stage up to time $\omega-1$; it tends to grow as $\omega$ increases, since a large $\omega$ increases $1 - \gamma^{\omega}$ and also requires taking the maximum over $M_{t,n}$ for more time steps. The second term in the bound covers the state and reward estimation errors for all time steps starting from time $\omega$; it decreases exponentially to $0$ as $\omega$ grows. The third term in the bound is due to the state and reward estimation error for the training trajectories (which have exactly $T$ time steps). The fourth term in the bound accounts for the one-step FQI regression error. The fifth term is related to the initialization bias in FQI, and it approaches $0$ exponentially as the number of FQI iterations increases. 

\paragraph{Unfairness bound.} The first term in the expectation term in the bound accounts for the FQI estimation error resulting from FQI regression error, initialization bias, and state and reward estimation errors. The second term in the expectation term in the bound is related to the counterfactual state and reward estimation error at the deployment stage up to time $t$. Smaller FQI regression error and deployment-time state and reward estimation error will both result in lower unfairness. The rest of the bound adjusts for the possibility of large FQI estimation errors and large state and reward estimation errors.

\begin{remark}
Both the suboptimality gap and counterfactual unfairness bounds involve terms related to the probability $\mathbb{P}\left(\delta_{T,n} \le \frac{\lambda_0}{4dL}\right)$, which depends on the uniform estimation errors of the state and reward conditional quantiles. For estimators such as linear quantile regression and quantile regression neural networks, it can be shown by concentration inequalities that, under certain conditions, this probability converges to 1 exponentially fast (at rate $1 - \exp{(-\Omega(n))}$) with respect to the sample size $n$.
\end{remark}

\section{Proofs of Theorems}
\label{sec:proofs}

\subsection{Proof of Theorems \ref{thm:state_matching} and \ref{thm:reward_matching}}
\label{subsec:proof_matching}

\paragraph{Theorem \ref{thm:state_matching}.} 
\textit{Suppose Assumption~\ref{ass:state_monotonicity} hold, and consider an individual with observed history $h_t=(z, \bar{s}_{t}, \bar{a}_{t-1}, \bar{r}_{t-1})$. Then, for $t=0$, the quantile level of $S_0^{Z \leftarrow z', (i)}(\bar{U}_0(h_0))$ with respect to the law of $f_0^{S^{(i)}}(z', U^{S^{(i)}}_0)$ does not vary by $z' \in \mathcal{Z}$. }

\textit{Similarly, for later time steps $t\geq 1$, the quantile level of $S_t^{Z \leftarrow z', (i)}(\bar{U}_t(h_t))$ with respect to the law of $f^{S^{(i)}}(z', a_{t-1}, S_{t-1}^{Z \leftarrow z'}(\bar{U}_{t-1}(h_{t-1})), U_t^{S^{(i)}})$ does not vary by $z' \in \mathcal{Z}$.}

\paragraph{Theorem \ref{thm:reward_matching}.} 
\textit{Suppose Assumption~\ref{ass:reward_monotonicity} hold, and consider an individual with observed history $h_t=(z, \bar{s}_{t}, \bar{a}_{t-1}, \bar{r}_{t-1})$. Then, for each time step $t \geq 0$, the quantile level of $R_t^{Z \leftarrow z'}(\bar{U}_{t+1}(h_{t+1}))$ with respect to the law of $f^{R}(z', a_{t}, S_{t}^{Z \leftarrow z'}(\bar{U}_{t}(h_{t})), U_t^{R})$ does not vary by $z' \in \mathcal{Z}$.}

\begin{proof}
We present a proof of Theorem \ref{thm:state_matching}. The proof of Theorem \ref{thm:reward_matching} is analogous. 

For Theorem \ref{thm:state_matching}, we first treat the $t=0$ case. Here, it suffices to show 
\begin{equation*}
\mathbb{P}(S_0^{(i)} \leq S_0^{Z \leftarrow z^{(k)}, (i)}(\bar{U}_0(h_0)) | Z=z^{(k)}) = \mathbb{P}(S_0^{(i)} \leq S_0^{Z \leftarrow z^{(j)}, (i)}(\bar{U}_0(h_0)) | Z=z^{(j)})
\end{equation*}
for every $k,j \in \{1, \dots, K\}$. For ease of presentation, for each $z \in \mathcal{Z}$ and $i \in \{1, \dots, M\}$, we use $s_0^{z,(i)}$ to denote $S_0^{Z \leftarrow z, (i)}(\bar{U}_0(h_0))$, and we use $u_0^{S^{(i)}}$ to denote the realized value of the exogenous variable $U_0^{S^{(i)}}$. Note that for each $i \in \{1, \dots, M\}$, we have 
\begin{align*}
&\mathbb{P}(S_0^{(i)} \leq s_0^{z^{(k)}, (i)} | Z=z^{(k)}) \\
=& \mathbb{P}(f_0^{S^{(i)}}(Z, U_0^{S^{(i)}}) \leq f_0^{S^{(i)}}(z^{(k)}, u_0^{S^{(i)}}) | Z=z^{(k)}) \\
=& \mathbb{P}(f_0^{S^{(i)}}(z^{(k)}, U_0^{S^{(i)}}) \leq f_0^{S^{(i)}}(z^{(k)}, u_0^{S^{(i)}})) \\
=& \mathbb{P}(U_0^{S^{(i)}} \leq u_0^{S^{(i)}}) \\
=& \mathbb{P}(f_0^{S^{(i)}}(z^{(j)}, U_0^{S^{(i)}}) \leq f_0^{S^{(i)}}(z^{(j)}, u_0^{S^{(i)}})) \\
=& \mathbb{P}(f_0^{S^{(i)}}(Z, U_0^{S^{(i)}}) \leq f_0^{S^{(i)}}(z^{(j)}, u_0^{S^{(i)}}) | Z=z^{(j)}) \\
=& \mathbb{P}(S_0^{(i)} \leq s_0^{z^{(j)}, (i)} | Z=z^{(j)}).
\end{align*}
In the derivation above, the first and sixth equalities are due to the structural equation representations of $S_0^{(i)}$, $s_0^{z^{(k)}, (i)}$, and $s_0^{z^{(j)}, (i)}$, and the third and fourth equalities are due to Assumption~\ref{ass:state_monotonicity}. This proves the $t=0$ case of Theorem \ref{thm:state_matching}.

Now we treat the $t \geq 1$ case. For this case, it suffices to show
\begin{align*}
&\mathbb{P}(S_t^{(i)} \leq S_t^{Z \leftarrow z^{(k)}, (i)}(\bar{U}_t(h_t)) | Z=z^{(k)}, S_{t-1}=S_{t-1}^{Z \leftarrow z^{(k)}}(\bar{U}_{t-1}(h_{t-1})), A_{t-1}=a_{t-1}) \notag \\
=&\mathbb{P}(S_t^{(i)} \leq S_t^{Z \leftarrow z^{(j)}, (i)}(\bar{U}_t(h_t)) | Z=z^{(j)}, S_{t-1}=S_{t-1}^{Z \leftarrow z^{(j)}}(\bar{U}_{t-1}(h_{t-1})), A_{t-1}=a_{t-1}).
\end{align*}
for $k,j \in \{1, \dots, K\}$. For ease of presentation, for each $t \geq 1$, $z \in \mathcal{Z}$, and $i \in \{1, \dots, M\}$, we use $s_t^{z, (i)}$ to denote $S_t^{Z \leftarrow z, (i)}(\bar{U}_t(h_t))$, and we use $u_t^{S^{(i)}}$ to denote the realized value of the exogenous variable $U_t^{S^{(i)}}$. Note that for each $i \in \{1, \dots, M\}$, we have 
\begin{align*}
&\mathbb{P}(S_t^{(i)} \leq s_t^{z^{(k)}, (i)} | Z=z^{(k)}, S_{t-1}=s_{t-1}^{z^{(k)}}, A_{t-1}=a_{t-1}) \\
=& \mathbb{P}(f^{S^{(i)}}(z^{(k)}, a_{t-1}, s_{t-1}^{z^{(k)}}, U_t^{S^{(i)}}) \leq f^{S^{(i)}}(z^{(k)}, a_{t-1}, s_{t-1}^{z^{(k)}}, u_t^{S^{(i)}})) \\
=& \mathbb{P}(U_t^{S^{(i)}} \leq u_t^{S^{(i)}}) \\
=& \mathbb{P}(f^{S^{(i)}}(z^{(j)}, a_{t-1}, s_{t-1}^{z^{(k)}}, U_t^{S^{(i)}}) \leq f^{S^{(i)}}(z^{(j)}, a_{t-1}, s_{t-1}^{z^{(j)}}, u_t^{S^{(i)}})) \\
=& \mathbb{P}(S_t^{(i)} \leq s_t^{z^{(j)}, (i)} | Z=z^{(j)}, S_{t-1}=s_{t-1}^{z^{(j)}}, A_{t-1}=a_{t-1}).
\end{align*}
In the derivation above, the first and fourth equalities are due to the structural equation representations of $S_t^{(i)}$, $s_t^{z^{(k)}, (i)}$, and $s_t^{z^{(j)}, (i)}$, and the second and third equalities are due to Assumption~\ref{ass:state_monotonicity}. This proves the $t \geq 1$ case of Theorem \ref{thm:state_matching}.

This completes the proof.
\end{proof}

\subsection{Proof of Theorem \ref{thm:strict_weak}}
\label{subsec:proof_weak}

\paragraph{Theorem \ref{thm:strict_weak}.} \textit{The additivity assumption implies Assumptions \ref{ass:state_monotonicity} and \ref{ass:reward_monotonicity}, but Assumptions \ref{ass:state_monotonicity} and \ref{ass:reward_monotonicity} do not imply the additivity assumption in general.}
\begin{proof}
We first show that the additivity assumption implies Assumptions \ref{ass:state_monotonicity} and \ref{ass:reward_monotonicity}. Suppose the additivity assumption is satisfied. In this case, we can write 
\begin{align*}
S_0^{(i)} =& f_0^{S^{(i)}}(Z, U_0^{S^{(i)}}) = g_0^{S^{(i)}}(Z) + U_0^{S^{(i)}} \text{ (for $i \in \{1, \dots, M\}$)}, \\
S_t^{(i)} =& f^{S^{(i)}}(Z, A_{t-1}, S_{t-1}, U_t^{S^{(i)}}) = g^{S^{(i)}}(Z, A_{t-1}, S_{t-1}) + U_t^{S^{(i)}} \\
&\text{ (for $i \in \{1, \dots, M\}$ and $t \geq 1$)}, \\
R_t =& f^R(Z, S_t, A_t, U_t^R) = g^R(Z, S_t, A_t) + U_t^R \text{ (for $t \geq 0$)}
\end{align*}
for some functions $\{f_0^{S^{(i)}}\}_{i=0}^M$, $\{f^{S^{(i)}}\}_{i=0}^M$, and $f^R$. Then Assumptions \ref{ass:state_monotonicity} and \ref{ass:reward_monotonicity} are obviously satisfied.

Now we show Assumptions \ref{ass:state_monotonicity} and \ref{ass:reward_monotonicity} do not imply the additivity assumption in general. Consider the CMDP
\begin{align*}
S_0^{(1)} =& (-0.45 + 1.5\delta Z + U_{0}^{S,(1)} + 0.5 U_{0}^{S,(3)})^{\frac{1}{3}}, \\
S_0^{(2)} =& (-0.75 + 2.5\delta Z + U_{0}^{S,(1)} + U_{0}^{S,(2)} + 0.5 U_{0}^{S,(3)})^{\frac{1}{3}}, \\
S_t^{(1)} =& (-0.45 + 0.3S_{t-1}^{(1)}(A_{t-1} - 0.5) + 0.3\delta S_{t-1}^{(1)}(Z - 0.5) + 0.45\delta (Z - 0.5)(A_{t-1} - 0.5) + \\
& 0.5U_{t}^{S,(1)} + 0.25 U_{t}^{S,(3)})^{\frac{1}{3}} \text{ (for $t \geq 1$)}, \\
S_t^{(2)} =& (-0.9 + 0.4S_{t-1}^{(1)}(A_{t-1} - 0.5) + 0.2S_{t-1}^{(2)}(A_{t-1} - 0.5) + 0.6\delta S_{t-1}^{(1)}(Z - 0.5) \\
&+ 0.2\delta S_{t-1}^{(2)}(Z - 0.5) + 0.9\delta (Z - 0.5)(A_{t-1} - 0.5) + 0.5U_{t}^{S,(1)} \\
&+ 0.9U_{t}^{S,(2)} + 0.5 U_{t}^{S,(3)})^{\frac{1}{3}} \text{ (for $t \geq 1$)}, \\
R_t = (&-0.3 + 0.2\delta S_t^{(1)}Z + 0.5S_t^{(1)}A_t + 0.2\delta S_t^{(2)}Z + 0.5S_t^{(2)}A_t - 1.0\delta ZA_t + 1.0U_{t}^R)^{\frac{1}{3}}.
\end{align*}
In this environment, we can view $U_0^{S, (1)} + 0.5U_0^{S, (3)}$ and $U_0^{S, (1)} + U_0^{S, (2)} + 0.5U_0^{S, (3)}$ as the scalar exogenous variables of $S_0^{(1)}$ and $S_0^{(2)}$, respectively. For $t \geq 1$, we can also view $0.5U_t^{S, (1)} + 0.25U_t^{S, (3)}$ and $0.5U_t^{S, (1)} + 0.9U_t^{S, (2)} + 0.5U_t^{S, (3)}$ as the scalar exogenous variables of $S_t^{(1)}$ and $S_t^{(2)}$, respectively. This CMDP satisfies Assumptions \ref{ass:state_monotonicity} and \ref{ass:reward_monotonicity} but does not satisfy the additivity assumption, which makes it a counterexample.
\end{proof}

\subsection{Proofs of Theorems \ref{thm:regret_bound} and \ref{thm:unfairness_bound}}
\label{subsec:proof_bounds}

\subsubsection{Notation}
We begin by introducing some notation. We first focus on notation for counterfactual state estimation. For convenience, we write  $s_t^{z,(i)}$ for $s_t^{Z \leftarrow z,(i)}(\bar{U}_t(h_t))$ and $s_t^{z}$ for $S_t^{Z \leftarrow z}(\bar{U}_t(h_t))$ for $t \geq 0$ and $i \in \{1, \dots, M\}$. For $i \in \{1, \dots, M\}$, let $Q_0^{(i)}(\omega, z)$ be the $\omega$-th quantile of the conditional distribution $\mathbb{P}(S_0^{(i)}|Z=z)$, and let $Q^{(i)}(\omega, z, s, a)$ be the $\omega$-th quantile of the conditional distribution $\mathbb{P}(S_t^{(i)}|Z=z, S_{t-1}=s, A_{t-1}=a)$ for $t \geq 1$. That is, 
\begin{align*}
    &Q_0^{(i)}(\omega, z) = \inf\{x \in \mathcal{S}: \mathbb{P}(S_0^{(i)} \leq x|Z=z) = \omega\}, \\
    &Q^{(i)}(\omega, z, s, a) = \inf\{x \in \mathcal{S}: \mathbb{P}(S_t^{(i)} \leq x|Z=z, S_{t-1}=s, A_{t-1}=a) = \omega\}.
\end{align*}
Note that the conditional quantile function is the same for time steps $t \geq 1$ because the environment is assumed to be stationary. Let $\hat{Q}_{0,n}^{(i)}(\omega, z)$ and $\hat{Q}_n^{(i)}(\omega, z, s, a)$ be estimators of $Q_0^{(i)}(\omega, z)$ and $Q^{(i)}(\omega, z, s, a)$, respectively. Moreover, for each individual $j$ and $t \geq 0$, let $\tau_{j,t}^{(i)}$ be the quantile level of the observed state $s_{j,t}^{(i)}$ on its conditional distribution under the observed sensitive attribute; that is, 
\begin{align*}
&\tau_{j,0}^{(i)} = \mathbb{P}(S_0^{(i)} \leq s_{j,0}^{(i)}|Z=z_j), \\
&\tau_{j,t}^{(i)} = \mathbb{P}(S_t^{(i)} \leq s_{j,t}^{(i)}|Z=z_j, S_{t-1}=s_{j,t-1}, A_{t-1}=a_{j,t-1}) \text{ (for $t \geq 1$)}
\end{align*}
where $z_j$ is the observed sensitive attribute of the $j$-th individual. Since $\tau_{j,t}^{(i)}$ is unobserved, let $\hat{\tau}_{j,t,n}^{(i)}$ be an estimator of $\tau_{j,t}^{(i)}$. For simplicity, we consider the case where the quantile level is estimated using the generalized inverse of the estimated conditional quantile functions. That is, 
\begin{align*}
    &\hat{\tau}_{j,0}^{(i)} = [\hat{Q}_{0,n}^{(i)}]^{-1}(s_{j,0}^{(i)}, z_j), \\
    &\hat{\tau}_{j,t}^{(i)} = [\hat{Q}_n^{(i)}]^{-1}(s_{j,t}^{(i)}, z_j, s_{j,t-1}, a_{j,t-1}) \text{ (for $t \geq 1$)}.
\end{align*}
We write $\tau_{j,t} = [\tau_{j,t}^{(1)}, \dots, \tau_{j,t}^{(M)}]^T$ and $\hat{\tau}_{j,t,n} = [\hat{\tau}_{j,t,n}^{(1)}, \dots, \hat{\tau}_{j,t,n}^{(M)}]^T$.

Now, for each individual $j$ and $t \geq 1$, we let the augmented state vector be 
\begin{align*}
&\tilde{s}_{j,0} = \begin{bmatrix} s_{j,0}^{z^{(1)}}(\tau_{j,0}) \\ \vdots \\ s_{j,0}^{z^{(K)}}(\tau_{j,0}) \end{bmatrix}, \text{ } \tilde{s}_{j,t} = \begin{bmatrix} s_{j,t}^{z^{(1)}}(\tau_{j,t}, s_{j,t-1}^{z^{(1)}}, a_{j,t-1}) \\ \vdots \\ s_{j,t}^{z^{(K)}}(\tau_{j,t}, s_{j,t-1}^{z^{(K)}}, a_{j,t-1}) \end{bmatrix},
\end{align*}
where, for each $z \in \{z^{(1)}, \dots, z^{(K)}\}$,
\begin{align*}
&s_{j,0}^z = \begin{bmatrix} Q_0^{(1)}(\tau_{j,0}^{(1)}, z) \\ \vdots \\ Q_0^{(M)}(\tau_{j,0}^{(M)}, z) \end{bmatrix}, \text{ }
s_{j,t}^z = \begin{bmatrix} Q^{(1)}(\tau_{j,t}^{(1)}, z, s_{j,t-1}^z, a_{j,t-1}) \\ \vdots \\ Q^{(M)}(\tau_{j,t}^{(M)}, z, s_{j,t-1}^z, a_{j,t-1}) \end{bmatrix}.
\end{align*}
Note that, under Assumption \ref{ass:state_monotonicity}, $Q_0^{(i)}(\omega, z)$ and $Q^{(i)}(\omega, z, s, a)$ are both strictly increasing in $\omega$. Therefore, for $i \in \{1, \dots, M\}$, $Q_0^{(i)}(\tau_{j,0}^{(i)}, z) = s_{j,0}^{z,(i)}$ and $Q^{(i)}(\tau_{j,t}^{(i)}, z, s_{j,t-1}^{z}, a_{j,t-1}) = s_{j,t}^{z,(i)}$ by Theorem \ref{thm:state_matching}. Since the above quantities are not fully observed, we estimate them by 
\begin{align*}
&\hat{\tilde{s}}_{j,0,n} = \begin{bmatrix} \hat{s}_{j,0,n}^{z^{(1)}}(\hat{\tau}_{j,0,n}) \\ \vdots \\ \hat{s}_{j,0,n}^{z^{(K)}}(\hat{\tau}_{j,0,n}) \end{bmatrix}, \text{ } \hat{\tilde{s}}_{j,t,n} = \begin{bmatrix} \hat{s}_{j,t,n}^{z^{(1)}}(\hat{\tau}_{j,t,n}, \hat{s}_{j,t-1,n}^{z^{(1)}}, a_{j,t-1}) \\ \vdots \\ \hat{s}_{j,t,n}^{z^{(K)}}(\hat{\tau}_{j,t,n}, \hat{s}_{j,t-1,n}^{z^{(K)}}, a_{j,t-1}) \end{bmatrix},
\end{align*}
where, for each $z \in \{z^{(1)}, \dots, z^{(K)}\}$,
\begin{align*}
&\hat{s}_{j,0,n}^z = \begin{bmatrix} \hat{Q}_{0,n}^{(1)}(\hat{\tau}_{j,0,n}^{(1)}, z) \\ \vdots \\ \hat{Q}_{0,n}^{(M)}(\hat{\tau}_{j,0,n}^{(M)}, z) \end{bmatrix}, \text{ }
\hat{s}_{j,t,n}^z = \begin{bmatrix} \hat{Q}_n^{(1)}(\hat{\tau}_{j,t,n}^{(1)}, z, \hat{s}_{j,t-1,n}^z, a_{j,t-1}) \\ \vdots \\ Q_n^{(M)}(\hat{\tau}_{j,t,n}^{(M)}, z, \hat{s}_{j,t-1,n}^z, a_{j,t-1}) \end{bmatrix}.
\end{align*}

Now, we define similar notation for the rewards. For convenience, we write $r_t^{z}$ for $R_t^{Z \leftarrow z}(\bar{U}_{t+1}(h_{t+1}))$ for $t \geq 0$. Let $W(\omega, z, s, a)$ be the $\omega$-th quantile of the conditional distribution $\mathbb{P}(R_t|Z=z, S_{t}=s, A_{t}=a)$ for $t \geq 0$ (that is, $W(\omega, z, s, a) = \inf\{x \in \mathcal{R}: \mathbb{P}(R_t \leq x|Z=z, S_t=s, A_t=a) = \omega\}$), and let $\hat{W}_n(\omega, z, s, a)$ be an estimator of $W(\omega, z, s, a)$. Again, the conditional quantile function $W$ is fixed for all $t \geq 0$ because the environment is assumed to be stationary. Moreover, for each individual $j$ and $t \geq 0$, let $\theta_{j,t}$ be the quantile level of the observed reward $r_{j,t}$ on its conditional distribution under the observed sensitive attribute; that is, $\theta_{j,t} = \mathbb{P}(R_t \leq r_{j,t}|Z=z_j, S_t=s_{j,t}, A_t=a_{j,t})$ for $t \geq 0$. Since $\theta_{j,t}$ is usually unobserved, let $\hat{\theta}_{j,t,n}$ be an estimator of $\theta_{j,t}$. For simplicity, we consider the case where the quantile level is estimated using the generalized inverse of the estimated conditional quantile function. That is, $\hat{\theta}_{j,t} = \hat{W}_n^{-1}(r_{j,t}, z_j, s_{j,t}, a_{j,t})$. For each individual $j$ and $t \geq 0$, we let the augmented reward be 
\begin{align*}
&\tilde{r}_{j,t} = \sum_{k=1}^K \mathbb{P}(Z=z^{(k)}) W(\theta_{j,t}, z^{(k)}, s_{j,t}^{z^{(k)}}, a_t).
\end{align*}
Note that, under Assumption \ref{ass:reward_monotonicity}, $W(\omega, z, s, a)$ is strictly increasing in $\omega$, so Theorem \ref{thm:reward_matching} implies that $W(\theta_{j,t}, z, s_{j,t}^z, a_{j,t}) = r_{j,t}^z$. Since the counterfactual rewards $r_{j,t}^z$ are unobserved, we estimate the augmented rewards by 
\begin{align*}
&\hat{\tilde{r}}_{j,t,n} = \sum_{k=1}^K \hat{\mathbb{P}}_n(Z=z^{(k)}) \hat{W}_n(\hat{\theta}_{j,t,n}, z^{(k)}, \hat{s}_{j,t,n}^{z^{(k)}}, a_{j,t}).
\end{align*}
where $\mathbb{P}_n$ denotes the empirical probability measure.

For ease of notation, unless it is necessary to identify the individual associated with the variable, we omit the subscript $j$ for simplicity.

\begin{remark}
Recall that in our analysis, we consider estimating the quantile levels using the generalized inverse of the estimated conditional quantile functions. For the generalized inverse to be well-defined, techniques such as quantile rearrangement \citep{chernozhukov2010rearrangement} might be employed to ensure the monotonicity of the estimated conditional quantile functions.
\end{remark}

\subsubsection{Assumptions for Theoretical Analysis}
\label{subsubsec:assumptions}
In this section, we introduce the assumptions needed for our theoretical analysis. 

\begin{assumption}[Boundedness of State and Reward Spaces]
\label{ass:space}
The state space $\mathcal{S}$ is a bounded subset $\mathbb{R}^M$. Then the space of the augmented state is $\tilde{\mathcal{S}} = \mathcal{S}^K$, and let $M_{\tilde{\mathcal{S}}} = \sup_{s,s' \in \tilde{\mathcal{S}}} \|s - s'\|_{\infty} < \infty$. The reward space $\mathcal{R}$ is also a bounded subset of $\mathbb{R}$, and $R_{max}$ is a positive constant such that $|r| \leq R_{max}$ for all $r \in \mathcal{R}$. Note that here the space $\tilde{\mathcal{R}}$ remains a bounded subset of $\mathbb{R}$.
\end{assumption}

\begin{assumption}
\label{ass:fqi}
Suppose the following hold:
\begin{enumerate}[leftmargin=*]
    \item (Class of the Q Function). \[ \mathcal{F} = \{w^T \phi(s, a) : w \in \mathbb{R}^d, \|w\|_1 \leq \mathcal{B}\}, \] where $\phi$ is a feature map $\tilde{\mathcal{S}} \times \mathcal{A} \to \mathbb{R}^d$ with $\|\phi(s, a)\|_\infty \leq 1$ and component functions $\{\phi_i(s,a)\}_{i=1}^d$. Also, assume that $Q^*$ is estimated in FQI using ordinary least squares.
    \item (Completeness). Let $\mathcal{T}$ be the Bellman optimality operator. That is, for $f : \tilde{\mathcal{S}} \times \mathcal{A} \to \mathbb{R}$, \[ \mathcal{T}f(s, a) = r(s, a) + \gamma \mathbb{E}_{s' \sim P(\cdot|s, a)} \max_{a' \in \mathcal{A}} f(s', a'). \] Assume that $\mathcal{T}f \in \mathcal{F}$ for any $f \in \mathcal{F}$.
    \item (Feature Coverage). There exists a constant $\lambda_0 > 0$ such that the smallest eigenvalue of $\mathbb{E}_{s,a \sim \mu_b}[\phi(s, a)\phi(s, a)^T]$ is greater than or equal to $\lambda_0$. 
\end{enumerate}
\end{assumption}

\begin{assumption}[Lipschitz Continuous Conditional Quantile Functions and Feature Vector]
\label{ass:lipschitz_conditional_quantile}
The true conditional quantile functions $\{Q_0^{(i)}(\tau, z)\}_{i=1}^M$, $\{Q^{(i)}(\tau, z, s, a)\}_{i=1}^M$, and $W(\tau, z, s, a)$ are Lipschitz continuous in the quantile level $\tau$ with Lipschitz constant $H$. The true conditional quantile functions $\{Q^{(i)}(\tau, z, s, a)\}_{i=1}^M$, $W(\tau, z, s, a)$, and the feature vector components $\{\phi_i(s, a)\}_{i=1}^d$ of the Q function class $\mathcal{F}$ are Lipschitz continuous in the state $s$ with Lipschitz constant $L$ in the max norm. 
\end{assumption}

\begin{assumption}[Margin]
\label{ass:margin}
Assume there exists some positive constant $\alpha>0$ such that
\begin{align*}
&\mathbb{P}_{\tilde{s}_t \sim \tilde{\pi}_n^zP_t} \left( \max_a Q^{*}(\tilde{s}_t, a) - \max_{a' \in \mathcal{A} \setminus \arg\max_a Q^{opt}(\tilde{s}_t, a)} Q^{*}(\tilde{s}_t, a') \leq \epsilon \middle| \Xi_n \right) = O(\epsilon^\alpha) \\
&\mathbb{P}_{\tilde{s}_t \sim \tilde{\pi}_n^zP_t} \left( \max_a Q^{*}(g_{t,n}^z(\tilde{h}_t), a) - \max_{a' \in \mathcal{A} \setminus \arg\max_a Q^{opt}(g_{t,n}^z(\tilde{h}_t), a)} Q^{*}(g_{t,n}^z(\tilde{h}_t), a') \leq \epsilon \middle| \Xi_n \right) \\
&= O(\epsilon^\alpha).
\end{align*}
for all $z \in \mathcal{Z}$ and $t \geq 0$, where $g_{t,n}^z$ is as defined in Section~\ref{sec:theory} of the main paper.
\end{assumption}

Assumption~\ref{ass:space} restricts the state and reward spaces to compact subsets of the Euclidean space. Assumption~\ref{ass:fqi} is needed to provide an upper bound for the estimation error of the optimal Q function in FQI, which is crucial for bounding the suboptimality gap and level of counterfactual unfairness. In particular, the completeness assumption is common in analyses of FQI \cite{hu2023fastratesregretoffline, chen2019information, perdomo2023linearOPE}, and the feature coverage assumption ensures the convergence of the FQI estimator \cite{hu2023fastratesregretoffline}. Assumption~\ref{ass:lipschitz_conditional_quantile} requires that the true conditional quantile functions are Lipschitz continuous in the quantile level $\tau$ and the state $s$. It also requires that the feature vector of the Q function space $\mathcal{F}$ is Lipschitz continuous in the state $s$. Assumption~\ref{ass:margin} ensures that the optimal action can be distinguished from suboptimal ones with a nontrivial margin, which is needed for bounding the level of counterfactual unfairness.

\subsubsection{Uniform Bound of State and Reward Estimation Errors}
In this section, we derive bounds of the augmented state estimation error $\|\hat{\tilde{s}}_{j,t,n} - \tilde{s}_{j,t}\|_{\infty}$ and the augmented reward estimation error $|\hat{\tilde{r}}_{j,t,n} - \tilde{r}_{j,t}|$ that are uniform across all individuals regardless of their respective trajectories. Since state and reward estimation errors are a source of the suboptimality gap and counterfactual unfairness, bounding these errors is necessary for bounding the suboptimality gap and level of counterfactual unfairness.

\paragraph{Definition of the bounds.} Define \begin{align*}
    M_{0,n} =& \max_{i \in \{1, \dots, M\}} (\|\hat{Q}_{0,n}^{(i)} - Q_0^{(i)}\|_{\infty} + H \|[\hat{Q}_{0,n}^{(i)}]^{-1} - [Q^{(i)}_0]^{-1}\|_{\infty}) \\
    M_{t,n} =& \max_{i \in \{1, \dots, M\}} (\|\hat{Q}_n^{(i)} - Q^{(i)}\|_{\infty} + H \|[\hat{Q}_n^{(i)}]^{-1} - [Q^{(i)}]^{-1}\|_{\infty} + L M_{t-1,n}) \text{ (for $t \geq 1$)} \\
    D_{t,n} =& K \left(\max_{i \in \{1, \dots, M\}} (\|\hat{W}_n - W\|_{\infty} + H \|[\hat{W}_n]^{-1} - [W]^{-1}\|_{\infty} + L M_{t,n})\right) \\
    &+ R_{max} \sum_{z \in \mathcal{Z}} |\hat{\mathbb{P}}_n(Z=z) - \mathbb{P}(Z=z)| \text{ (for $t \geq 0$)}
    \\
    \delta_{t, n} =& \max_{t' \leq t} M_{t',n} + \max_{t' \leq t} D_{t',n} \text{ (for $t \geq 0$)},
\end{align*}
where the sup norm $\|\cdot\|_{\infty}$ is taken over all inputs of the corresponding function. Note that $M_{t,n}$, which will be shown later to bound the augmented state estimation error at time $t$, depends on the estimation accuracy of the conditional quantile functions for the states, the estimation accuracy of the quantile level of the observed state, and, if $t \geq 1$, the estimation accuracy of the previous time step's augmented state (which is reflected by $M_{t-1,n}$). Similarly, $D_{t,n}$, which will be shown later to bound the augmented reward estimation error at time $t$, depends on the estimation accuracy of the conditional quantile function for the reward, the estimation accuracy of the quantile level of the observed reward, the estimation accuracy of the current time step's augmented state (which is reflected by $M_{t,n}$), and the estimation accuracy of the distribution of the sensitive attribute in the population. Finally, $\delta_{t,n}$ depends on $M_{t',n}$ and $D_{t',n}$ for all time steps $t'$ before time $t$. For all $t \geq 0$, $M_{t,n}$, $D_{t,n}$, and $\delta_{t,n}$ are nonnegative and independent from the trajectory experienced by any individual.

Lemmas \ref{lem:state_bound} and \ref{lem:reward_bound} below establish uniform bounds for the estimation errors of the augmented states and rewards, respectively.

\begin{lemma}
\label{lem:state_bound}
Suppose Assumptions \ref{ass:state_monotonicity} and \ref{ass:lipschitz_conditional_quantile} hold. For each $t \geq 0$, $\|\hat{\tilde{s}}_{j,t,n} - \tilde{s}_{j,t}\|_{\infty} \leq M_{t,n}$ for all individuals $j$ regardless of the individual's trajectory.
\end{lemma}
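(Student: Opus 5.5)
We argue by induction on $t$, working componentwise. Since $\|\hat{\tilde{s}}_{j,t,n} - \tilde{s}_{j,t}\|_{\infty} = \max_{z \in \mathcal{Z}} \max_{i} |\hat{s}^{z,(i)}_{j,t,n} - s^{z,(i)}_{j,t}|$, it suffices to bound each coordinate uniformly in $z$, $i$ and the individual $j$. First, by Assumption~\ref{ass:state_monotonicity} and Theorem~\ref{thm:state_matching}, the true augmented state satisfies $s^{z,(i)}_{j,0} = Q_0^{(i)}(\tau^{(i)}_{j,0}, z)$ and $s^{z,(i)}_{j,t} = Q^{(i)}(\tau^{(i)}_{j,t}, z, s^{z}_{j,t-1}, a_{j,t-1})$, as noted in the Notation subsection. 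Moreover, $\tau^{(i)}_{j,t} = [Q^{(i)}]^{-1}(s^{(i)}_{j,t}, z_j, s_{j,t-1}, a_{j,t-1})$, which follows from strict monotonicity and continuity of the quantile function in $\tau$. For $t=0$ the analogous identity holds with $Q_0^{(i)}$.

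\textbf{Base case $t=0$.} We use the triangle inequality:
\begin{align*}
|\hat{Q}_{0,n}^{(i)}(\hat{\tau}^{(i)}_{j,0,n}, z) - Q_0^{(i)}(\tau^{(i)}_{j,0}, z)| \le\;& |\hat{Q}_{0,n}^{(i)}(\hat{\tau}^{(i)}_{j,0,n}, z) - Q_0^{(i)}(\hat{\tau}^{(i)}_{j,0,n}, z)| \\
&+ |Q_0^{(i)}(\hat{\tau}^{(i)}_{j,0,n}, z) - Q_0^{(i)}(\tau^{(i)}_{j,0}, z)|.
\end{align*}
The first term is at most $\|\hat{Q}_{0,n}^{(i)} - Q_0^{(i)}\|_\infty$. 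For the second, Lipschitz continuity in $\tau$ (Assumption~\ref{ass:lipschitz_conditional_quantile}) gives the bound $H|\hat{\tau}^{(i)}_{j,0,n} - \tau^{(i)}_{j,0}|$. Both quantile levels are the respective inverses evaluated at the same observed point $(s^{(i)}_{j,0}, z_j)$, so this difference is at most $\|[\hat{Q}_{0,n}^{(i)}]^{-1} - [Q_0^{(i)}]^{-1}\|_\infty$. Taking the maximum over $i$ and $z$ yields $M_{0,n}$. Note that for $z = z_j$ the algorithm copies the observed state, so that coordinate has error zero, which is trivially covered.

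\textbf{Inductive step $t \ge 1$.} Assume $\|\hat{s}^{z}_{j,t-1,n} - s^{z}_{j,t-1}\|_\infty \le M_{t-1,n}$ for all $z$ and $j$. We insert two intermediate terms:
\begin{align*}
&|\hat{Q}_n^{(i)}(\hat{\tau}, z, \hat{s}^{z}_{t-1}, a) - Q^{(i)}(\tau, z, s^{z}_{t-1}, a)| \\
\le\;& \|\hat{Q}_n^{(i)} - Q^{(i)}\|_\infty + |Q^{(i)}(\hat{\tau}, z, \hat{s}^{z}_{t-1}, a) - Q^{(i)}(\tau, z, \hat{s}^{z}_{t-1}, a)| \\
&+ |Q^{(i)}(\tau, z, \hat{s}^{z}_{t-1}, a) - Q^{(i)}(\tau, z, s^{z}_{t-1}, a)|.
\end{align*}
The middle term is at most $H|\hat{\tau} - \tau| \le H\|[\hat{Q}_n^{(i)}]^{-1} - [Q^{(i)}]^{-1}\|_\infty$. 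This works because $\hat{\tau}^{(i)}_{j,t,n}$ and $\tau^{(i)}_{j,t}$ are both inverses evaluated at the same fully observed conditioning point $(s^{(i)}_{j,t}, z_j, s_{j,t-1}, a_{j,t-1})$, with no estimated quantities entering. The last term is at most $L\|\hat{s}^{z}_{t-1} - s^{z}_{t-1}\|_\infty \le L M_{t-1,n}$, by Lipschitz continuity in $s$ under the max norm together with the induction hypothesis. Summing the three bounds and maximizing over $i$ and $z$ gives exactly $M_{t,n}$. None of the bounds depend on $j$ or its trajectory, since all norms are sup norms over the full input domain.

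\textbf{Main obstacle.} The delicate point is justifying $\tau = Q^{-1}(\text{observed state})$ and using $\|\hat{Q}^{-1} - Q^{-1}\|_\infty$ as a sup over a common domain. This requires the true quantile function to be continuous and strictly increasing in $\tau$, which follows from Assumption~\ref{ass:state_monotonicity} together with Lipschitz continuity. The estimated $\hat{Q}$ must also be monotone so that its generalized inverse is well defined, for example after rearrangement, as noted in the Remark in the Notation subsection. A second subtlety is that the estimated counterfactual state $\hat{s}^{z}_{t-1}$ may leave $\mathcal{S}$. We will assume, or enforce by truncation, that the quantile functions are defined and Lipschitz on the relevant domain, so that the sup norms and the Lipschitz constant $L$ apply there.
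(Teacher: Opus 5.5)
Your proposal is correct and follows essentially the same argument as the paper. Both proceed by induction on $t$, using a two-term triangle-inequality split in the base case and a three-term split in the inductive step. The pieces are bounded by $\|\hat{Q}-Q\|_\infty$, by $H\|[\hat{Q}]^{-1}-[Q]^{-1}\|_\infty$, and by $L M_{t-1,n}$. The $z=z_j$ coordinate is handled trivially, and a final maximum is taken over $i$ and $z$.

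The technical caveats you flag are also used implicitly in the paper: identifying $\tau$ with the true inverse evaluated at the observed point, needing $\hat{Q}$ to be monotone so its generalized inverse exists, and the domain over which the sup norms are taken. The paper handles the monotonicity issue through its rearrangement remark and the domain issue by taking sup norms over all inputs.
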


\begin{proof}
We first show that $\|\hat{s}_{j,t,n}^{z} - s_{j,t}^{z}\|_{\infty} \leq M_{t,n}$ for 
$t \geq 0$ and any $z \in \mathcal{Z}$. Suppose the observed sensitive attribute is $z^*$. If $z=z^*$, then $\hat{s}_{t,n}^z = s_t^z$, so the statement holds trivially because $M_{t,n} \geq 0$ for all $t \geq 0$. Now we consider the scenario where $z \neq z^*$. In this scenario, we prove the statement by induction.

We first prove the case for $t = 0$. For each $i \in \{1, \dots, M\}$, we have 
\begin{align*}
& |\hat{s}_{j,0,n}^{z,(i)} - s_{j,0}^{z, (i)}| \\
=& |\hat{Q}_{0,n}^{(i)}(\hat{\tau}_{j,0,n}^{(i)},z) - Q_0^{(i)}(\tau_{j,0}^{(i)},z)| \\
\leq& |\hat{Q}_{0,n}^{(i)}(\hat{\tau}_{j,0,n}^{(i)},z) - Q_0^{(i)}(\hat{\tau}_{j,0,n}^{(i)},z)| + |Q_0^{(i)}(\hat{\tau}_{j,0,n}^{(i)},z) - Q_0^{(i)}(\tau_{j,0}^{(i)},z)| \\
\leq& \|\hat{Q}_{0,n}^{(i)} - Q_0^{(i)}\|_{\infty} + |Q_0^{(i)}(\hat{\tau}_{j,0,n}^{(i)},z) - Q_0^{(i)}(\tau_{j,0}^{(i)},z)| \\
\leq& \|\hat{Q}_{0,n}^{(i)} - Q_0^{(i)}\|_{\infty} + H |\hat{\tau}_{j,0,n}^{(i)} - \tau_{j,0}^{(i)}| \\
\leq& \|\hat{Q}_{0,n}^{(i)} - Q_0^{(i)}\|_{\infty} + H \|[\hat{Q}_{0,n}^{(i)}]^{-1} - [Q^{(i)}_0]^{-1}\|_{\infty}.
\end{align*}
Given the above result, we further have
\begin{align*}
\|\hat{s}_{j,0,n}^{z} - s_{j,0}^{z}\|_{\infty} =& \max_{i \in \{1, \dots, M\}} |\hat{s}_{j,0,n}^{z,(i)} - s_{j,0}^{z, (i)}| \\
\leq& \max_{i \in \{1, \dots, M\}} (\|\hat{Q}_{0,n}^{(i)} - Q_0^{(i)}\|_{\infty} + H \|[\hat{Q}_{0,n}^{(i)}]^{-1} - [Q^{(i)}_0]^{-1}\|_{\infty}) \\
=& M_{0,n},
\end{align*}
which proves the $t=0$ case.

Now we prove the inductive step. Let $t \geq 1$. For each $i \in \{1, \dots, M\}$, we have
\begin{align*}
& |\hat{s}_{j,t,n}^{z,(i)} - s_{j,t}^{z, (i)}| \\
=& |\hat{Q}_n^{(i)}(\hat{\tau}_{j,t,n}^{(i)},z, \hat{s}_{j,t-1,n}^z, a_{j,t-1}) - Q^{(i)}(\tau_{j,t}^{(i)},z, s_{j,t-1}^z, a_{j,t-1})| \\
\leq& |\hat{Q}_n^{(i)}(\hat{\tau}_{j,t,n}^{(i)},z, \hat{s}_{j,t-1,n}^z, a_{j,t-1}) - Q^{(i)}(\hat{\tau}_{j,t,n}^{(i)},z, \hat{s}_{j,t-1,n}^z, a_{j,t-1})| \\
&+ |Q^{(i)}(\hat{\tau}_{j,t,n}^{(i)},z, \hat{s}_{j,t-1,n}^z, a_{j,t-1}) - Q^{(i)}(\tau_{j,t}^{(i)},z, \hat{s}_{j,t-1,n}^z, a_{j,t-1})| \\
& + |Q^{(i)}(\tau_{j,t}^{(i)},z, \hat{s}_{j,t-1,n}^z, a_{j,t-1}) - Q^{(i)}(\tau_{j,t}^{(i)},z, s_{j,t-1}^z, a_{j,t-1})| \\
\leq& \|\hat{Q}_n^{(i)} - Q^{(i)}\|_{\infty} + H |\hat{\tau}_{j,t,n}^{(i)} - \tau_{j,t}^{(i)}| + L \|\hat{s}_{j,t-1,n}^z - s_{j,t-1}^z\|_{\infty} \\
\leq& \|\hat{Q}_n^{(i)} - Q^{(i)}\|_{\infty} + H \|[\hat{Q}_n^{(i)}]^{-1} - [Q^{(i)}]^{-1}\|_{\infty} + L M_{t-1,n}.
\end{align*}
Given the above result, we further have
\begin{align*}
& \|\hat{s}_{j,t,n}^{z} - s_{j,t}^{z}\|_{\infty} \\
=& \max_{i \in \{t, \dots, M\}} |\hat{s}_{j,t,n}^{z,(i)} - s_{j,t}^{z, (i)}| \\
\leq& \max_{i \in \{t, \dots, M\}} (\|\hat{Q}_n^{(i)} - Q^{(i)}\|_{\infty} + H \|[\hat{Q}_n^{(i)}]^{-1} - [Q^{(i)}]^{-1}\|_{\infty} + L M_{t-1,n}) \\
=& M_{t,n},
\end{align*}
which proves the inductive step. Therefore, $\|\hat{s}_{j,t,n}^{z} - s_{j,t}^{z}\|_{\infty} \leq M_{t,n}$ for 
$t \geq 0$ and any $z \in \mathcal{Z}$.

Finally, note that
\begin{align*}
&\|\hat{\tilde{s}}_{j,t,n} - \tilde{s}_{j,t}\|_{\infty} = \max_{z \in \mathcal{Z}} \|\hat{s}_{j,t,n}^{z} - s_{j,t}^{z}\|_{\infty} \leq M_{t,n},
\end{align*}
which completes the proof.
\end{proof}

\begin{lemma}
\label{lem:reward_bound}
Suppose Assumptions \ref{ass:state_monotonicity}-\ref{ass:reward_monotonicity}, \ref{ass:space}, and \ref{ass:lipschitz_conditional_quantile} hold. For each $t \geq 0$, $|\hat{\tilde{r}}_{j,t,n} - \tilde{r}_{j,t}| \leq D_{t,n}$ for all individuals $j$ regardless of the individual's trajectory.
\end{lemma}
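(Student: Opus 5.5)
The plan is to mirror the proof of Lemma \ref{lem:state_bound}. We decompose the augmented reward error into a part coming from the counterfactual reward estimates and a part coming from the estimated sensitive-attribute probabilities, and then bound each counterfactual reward error by a three-term triangle inequality. Write $\hat r^{z}_{j,t,n} = \hat W_n(\hat\theta_{j,t,n}, z, \hat s^{z}_{j,t,n}, a_{j,t})$ and $r^{z}_{j,t} = W(\theta_{j,t}, z, s^{z}_{j,t}, a_{j,t})$. The latter equals the true counterfactual reward by Theorem \ref{thm:reward_matching} together with strict monotonicity of $W$ in the quantile level under Assumption \ref{ass:reward_monotonicity}.

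First I would add and subtract $\sum_k \mathbb{P}(Z=z^{(k)})\hat r^{z^{(k)}}_{j,t,n}$:
\begin{align*}
|\hat{\tilde r}_{j,t,n}-\tilde r_{j,t}| \le \sum_{k=1}^K \mathbb{P}(Z=z^{(k)})\,|\hat r^{z^{(k)}}_{j,t,n}-r^{z^{(k)}}_{j,t}| + \sum_{k=1}^K |\hat{\mathbb{P}}_n(Z=z^{(k)})-\mathbb{P}(Z=z^{(k)})|\,|\hat r^{z^{(k)}}_{j,t,n}|.
\end{align*}
For the second sum I bound $|\hat r^{z}_{j,t,n}|\le R_{max}$ using Assumption \ref{ass:space}. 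This requires the estimated quantile function to take values in $\mathcal{R}$, which I will state as implicit in $\hat W_n$ mapping into the reward space. That sum then gives exactly the $R_{max}\sum_{z}|\hat{\mathbb{P}}_n(Z=z)-\mathbb{P}(Z=z)|$ term. For the first sum I bound each probability by $1$, which produces the factor $K$ times the maximum over $z$ in $D_{t,n}$.

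Next, for fixed $z$, I bound $|\hat r^{z}_{j,t,n}-r^{z}_{j,t}|$ exactly as in the inductive step of Lemma \ref{lem:state_bound}. The three pieces are:
\begin{enumerate}
\item $|\hat W_n(\hat\theta,z,\hat s^z,a)-W(\hat\theta,z,\hat s^z,a)|\le\|\hat W_n-W\|_\infty$;
\item $|W(\hat\theta,z,\hat s^z,a)-W(\theta,z,\hat s^z,a)|\le H|\hat\theta_{j,t,n}-\theta_{j,t}|$, which is at most $H\|\hat W_n^{-1}-W^{-1}\|_\infty$. Here $\hat\theta=\hat W_n^{-1}(r_{j,t},z_j,s_{j,t},a_{j,t})$ and $\theta=W^{-1}(\cdot)$ are evaluated at the same observed arguments;
\item $|W(\theta,z,\hat s^z,a)-W(\theta,z,s^z,a)|\le L\|\hat s^{z}_{j,t,n}-s^{z}_{j,t}\|_\infty\le LM_{t,n}$, by Assumption \ref{ass:lipschitz_conditional_quantile} and the componentwise bound proved inside Lemma \ref{lem:state_bound}.
\end{enumerate}
When $z=z_j$ is the observed attribute, $\hat r^{z}=r^{z}$ and the bound is trivial. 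Combining the two steps yields $D_{t,n}$ uniformly over individuals.

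The main obstacle is the second piece: identifying $\theta_{j,t}$ with $W^{-1}$ evaluated at the observed reward, so that the quantile-level error is controlled by the sup-norm of the difference of the inverses. This needs strict monotonicity (Assumption \ref{ass:reward_monotonicity}) so that $W^{-1}$ is a genuine inverse and $\theta_{j,t}=W^{-1}(r_{j,t},\dots)$. Everything else is routine bookkeeping.
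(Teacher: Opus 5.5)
Your per-$z$ estimate matches the paper's proof exactly: the three pieces $\|\hat W_n-W\|_\infty$, $H\|\hat W_n^{-1}-W^{-1}\|_\infty$ and $LM_{t,n}$, plus the trivial case $z=z_j$. The quantile-level identification you flag as the main obstacle is also handled the same way there. The weak step is elsewhere. You add and subtract $\sum_k \mathbb{P}(Z=z^{(k)})\hat r^{z^{(k)}}_{j,t,n}$, so the probability-estimation error ends up multiplying the \emph{estimated} counterfactual reward. You then need $|\hat r^{z}_{j,t,n}|\le R_{max}$, i.e.\ that $\hat W_n$ maps into $\mathcal{R}$. That is not a hypothesis of the lemma: Assumption \ref{ass:space} bounds only true rewards, and the lemma places no range restriction on $\hat W_n$. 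The paper itself warns that quantile regression may produce out-of-range estimates, so this condition cannot simply be declared implicit.

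The paper avoids the issue by adding and subtracting the other cross term, $\sum_z\hat{\mathbb{P}}_n(Z=z)\,r^{z}_{j,t}$:
\begin{align*}
\hat{\tilde r}_{j,t,n}-\tilde r_{j,t}=\sum_{z\in\mathcal{Z}}\hat{\mathbb{P}}_n(Z=z)\bigl(\hat r^{z}_{j,t,n}-r^{z}_{j,t}\bigr)+\sum_{z\in\mathcal{Z}}\bigl(\hat{\mathbb{P}}_n(Z=z)-\mathbb{P}(Z=z)\bigr)r^{z}_{j,t}.
\end{align*}
In the first sum, $0\le\hat{\mathbb{P}}_n(Z=z)\le 1$ (it is an empirical frequency), which gives the factor $K$. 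In the second sum, $|r^{z}_{j,t}|\le R_{max}$ holds by Assumption \ref{ass:space}, because $r^{z}_{j,t}=W(\theta_{j,t},z,s^{z}_{j,t},a_{j,t})$ is a true counterfactual reward and hence lies in $\mathcal{R}$. With this swap, the rest of your argument goes through verbatim.

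Alternatively, you could keep your decomposition and use $|\hat r^{z}_{j,t,n}|\le R_{max}+|\hat r^{z}_{j,t,n}-r^{z}_{j,t}|$. The per-$z$ error then carries total weight $\sum_{z\neq z_j}\bigl(\mathbb{P}(Z=z)+|\hat{\mathbb{P}}_n(Z=z)-\mathbb{P}(Z=z)|\bigr)\le\min\{3,2(K-1)\}\le K$, which still yields $D_{t,n}$. The paper's choice is cleaner.
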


\begin{proof}
We first show that $|\hat{r}_{j,t,n}^{z} - r_{j,t}^{z}| \leq \|\hat{W}_n - W\|_{\infty} + H \|[\hat{W}_n]^{-1} - [W]^{-1}\|_{\infty} + L M_{t,n}$ for $t \geq 0$ and any $z \in \mathcal{Z}$. Let $t\geq 0$. Suppose the observed sensitive attribute is $z^*$. If $z=z^*$, then $\hat{r}_{t,n}^z = r_t^z$, so the statement holds trivially. Now we consider the scenario where $z \neq z^*$. In this case, note that
\begin{align*}
& |\hat{r}_{j,t,n}^{z} - r_{j,t}^{z}| \\
=& |\hat{W}_n(\hat{\theta}_{j,t,n},z, \hat{s}_{j,t,n}^z, a_{j,t}) - W(\theta_{j,t},z, s_{j,t}^z, a_{j,t})| \\
\leq& |\hat{W}_n(\hat{\theta}_{j,t,n},z, \hat{s}_{j,t,n}^z, a_{j,t}) - W(\hat{\theta}_{j,t,n},z, \hat{s}_{j,t,n}^z, a_{j,t})| \\
&+ |W(\hat{\theta}_{j,t,n},z, \hat{s}_{j,t,n}^z, a_{j,t}) - W(\theta_{j,t},z, \hat{s}_{j,t,n}^z, a_{j,t})| \\
& + |W(\theta_{j,t},z, \hat{s}_{j,t,n}^z, a_{j,t}) - W(\theta_{j,t},z, s_{j,t}^z, a_{j,t})| \\
\leq& \|\hat{W}_n - W\|_{\infty} + H |\hat{\theta}_{j,t,n} - \theta_{j,t}| + L \|\hat{s}_{j,t,n}^z - s_{j,t}^z\|_{\infty} \\
\leq& \|\hat{W}_n - W\|_{\infty} + H \|[\hat{W}_n]^{-1} - [W]^{-1}\|_{\infty} + L M_{t,n}.
\end{align*}
Therefore, $|\hat{r}_{j,t,n}^{z} - r_{j,t}^{z}| \leq \|\hat{W}_n - W\|_{\infty} + H \|[\hat{W}_n]^{-1} - [W]^{-1}\|_{\infty} + L M_{t,n}$ for $t \geq 0$ and any $z \in \mathcal{Z}$.

Then,
\begin{align*}
& |\hat{\tilde{r}}_{j,t,n} - \tilde{r}_{j,t}| \\
\leq& |\sum_{z \in \mathcal{Z}} (\hat{\mathbb{P}}_n(Z=z) \hat{r}_{j,t,n}^{z} - \mathbb{P}(Z=z) r_{j,t}^{z})| \\
=& |\sum_{z \in \mathcal{Z}} \hat{\mathbb{P}}_n(Z=z) (\hat{r}_{j,t,n}^{z} - r_{j,t}^{z}) + \sum_{z \in \mathcal{Z}} (\hat{\mathbb{P}}_n(Z=z) - \mathbb{P}(Z=z)) r_{j,t}^{z}| \\
\leq& \sum_{z \in \mathcal{Z}} |\hat{\mathbb{P}}_n(Z=z)| | \hat{r}_{j,t,n}^{z} - r_{j,t}^{z}| + \sum_{z \in \mathcal{Z}} |\hat{\mathbb{P}}_n(Z=z) - \mathbb{P}(Z=z)| |r_{j,t}^{z}| \\
\leq& \sum_{z \in \mathcal{Z}} | \hat{r}_{j,t,n}^{z} - r_{j,t}^{z}| + R_{max} \sum_{z \in \mathcal{Z}} |\hat{\mathbb{P}}_n(Z=z) - \mathbb{P}(Z=z)| \\
\leq& \sum_{z \in \mathcal{Z}} (\|\hat{W}_n - W\|_{\infty} + H \|[\hat{W}_n]^{-1} - [W]^{-1}\|_{\infty} + L M_{t,n}) \\
&+ R_{max} \sum_{z \in \mathcal{Z}} |\hat{\mathbb{P}}_n(Z=z) - \mathbb{P}(Z=z)| \\
=& D_{t,n},
\end{align*}
which completes the proof.
\end{proof}

Now, we show that by combining $M_{t,n}$ and $D_{t,n}$ into a single bound, $\delta_{t,n}$ simultaneously bounds both the state and reward estimation errors up to some specified time point. 

\begin{lemma}
\label{lem:delta_convergence}
Suppose Assumptions \ref{ass:state_monotonicity}-\ref{ass:reward_monotonicity}, \ref{ass:space}, and \ref{ass:lipschitz_conditional_quantile} hold. Then, for all $t^* \geq 0$,
\begin{equation*}
\max_{t \leq t^*}\|\hat{\tilde{s}}_{j,t,n} - \tilde{s}_{j,t}\|_{\infty} + \max_{t \leq t^*}|\hat{\tilde{r}}_{j,t,n} - \tilde{r}_{j,t}| \leq \delta_{t^*,n}
\end{equation*}
\end{lemma}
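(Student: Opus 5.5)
The claim follows directly from Lemmas \ref{lem:state_bound} and \ref{lem:reward_bound}, combined with the definition of $\delta_{t^*,n}$. Fix an individual $j$ and a horizon $t^* \geq 0$. The plan has three steps.

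First, bound the state term. For every $t \leq t^*$, Lemma \ref{lem:state_bound} (which requires Assumptions \ref{ass:state_monotonicity} and \ref{ass:lipschitz_conditional_quantile}) gives $\|\hat{\tilde{s}}_{j,t,n} - \tilde{s}_{j,t}\|_{\infty} \leq M_{t,n}$, and this holds uniformly over trajectories. Since $M_{t,n} \leq \max_{t' \leq t^*} M_{t',n}$ for each such $t$, taking the maximum over $t \leq t^*$ on the left yields
\begin{equation*}
\max_{t \leq t^*}\|\hat{\tilde{s}}_{j,t,n} - \tilde{s}_{j,t}\|_{\infty} \leq \max_{t' \leq t^*} M_{t',n}.
\end{equation*}

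Second, bound the reward term in the same way. Lemma \ref{lem:reward_bound} (which additionally uses Assumptions \ref{ass:reward_monotonicity} and \ref{ass:space}) gives $|\hat{\tilde{r}}_{j,t,n} - \tilde{r}_{j,t}| \leq D_{t,n}$ for each $t \leq t^*$. Maximizing over $t \leq t^*$ then gives
\begin{equation*}
\max_{t \leq t^*}|\hat{\tilde{r}}_{j,t,n} - \tilde{r}_{j,t}| \leq \max_{t' \leq t^*} D_{t',n}.
\end{equation*}

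Third, add the two inequalities. The right-hand side is exactly $\delta_{t^*,n} = \max_{t' \leq t^*} M_{t',n} + \max_{t' \leq t^*} D_{t',n}$ by definition, which proves the claim.

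There is no real obstacle here. The only points to check are that the maximum on the left is taken over the same index set $\{0,\dots,t^*\}$ as in the definition of $\delta_{t^*,n}$, and that both lemma bounds are deterministic and uniform in the trajectory. Because of this uniformity, the bound holds for every individual $j$ with no probabilistic qualifier.
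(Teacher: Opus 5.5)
Your proposal is correct and matches the paper's proof: the paper likewise applies Lemmas \ref{lem:state_bound} and \ref{lem:reward_bound} for each $t \le t^*$, takes maxima over $t \le t^*$, and adds them to obtain exactly $\delta_{t^*,n}$. Your remark that both bounds are deterministic and uniform over trajectories usefully spells out why the inequality needs no probabilistic qualifier.
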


\begin{proof}
By Lemmas \ref{lem:state_bound} and \ref{lem:reward_bound},
\begin{align*}
\max_{t \leq t^*}\|\hat{\tilde{s}}_{j,t,n} - \tilde{s}_{j,t}\|_{\infty} + \max_{t \leq t^*}|\hat{\tilde{r}}_{j,t,n} - \tilde{r}_{j,t}| \leq \max_{t \leq t^*}M_{t,n} + \max_{t \leq t^*}D_{t,n} = \delta_{t^*,n},
\end{align*}
which completes the proof.
\end{proof}

\subsubsection{Main Proofs of Theorems \ref{thm:regret_bound} and \ref{thm:unfairness_bound}}

\paragraph{Theorem \ref{thm:regret_bound}.} \textit{Suppose Assumptions~\ref{ass:state_monotonicity}-\ref{ass:reward_monotonicity} and \ref{ass:space}-\ref{ass:lipschitz_conditional_quantile} hold. Also, suppose the trajectory for each individual in the training set has exactly $T$ time steps. For any $\omega \in \mathbb{N}$ and $z \in \mathcal{Z}$, we have}
\begin{align*}
    SG(\tilde{\pi}_n^z) &\le 2\mathcal{B}L\frac{1-\gamma^{\omega}}{1-\gamma}\max_{t \leq \omega-1}M_{t,n} + 2\mathcal{B}L\frac{\gamma^{\omega}}{1-\gamma}M_{\tilde{\mathcal{S}}} \\
    &\quad + \frac{2C_1 d L R_{\max} \delta_{T,n}}{\lambda_0(\lambda_0 - 4dL\delta_{T,n})(1-\gamma)^3} + \frac{2C_2 d R_{\max} \kappa \log(n)}{(1-\gamma)^3 \lambda_0 \sqrt{n}} + \frac{2\gamma^B R_{\max}}{(1-\gamma)^2}
\end{align*}
\textit{with probability at least $\left(1 - n^{-\kappa} - d \exp(-n\lambda_0/8)\right) \mathbb{P}\left(\delta_{T,n} \le \frac{\lambda_0}{4dL}\right)$, where $C_1$ and $C_2$ are positive constants, $B$ is the number of FQI iterations, $\gamma \in (0, 1)$ is the discount factor, and $\mathcal{B}$, $L$, and $M_{\tilde{S}}$ are positive constants defined in Assumptions \ref{ass:space} and \ref{ass:fqi}.}

\begin{proof}
Let $\pi^*$ be the optimal policy in the Markov decision process with the augmented state and reward. By Theorem 2 of \cite{wang2025counterfactuallyfairreinforcementlearning}, $\pi^*$ is a stationary deterministic policy. Let $Q^*$ be the optimal Q function, and note that $Q^*$ is Lipschitz continuous in the sense that, for $s, s' \in \mathcal{S}$ and $a \in \mathcal{A}$, 
\begin{align*}
    |Q^*(s, a) - Q^*(s', a)| =& |(\phi(s,a) - \phi(s',a))^T \omega^*| \\
    \leq& \|\phi(s,a) - \phi(s',a)\|_{\infty} \|\omega^*\|_1 \\
    \leq& \mathcal{B} L \|s - s'\|_{\infty}.
\end{align*}
Also, let $\hat{f}$ be the Q function estimated by FQI, and let conditioning on $\Xi_n$ denote fixing the quantile estimators and empirical probabilities needed for preprocessing using SMDM. We have  
\begin{align}
    &SG(\tilde{\pi}_n^{z}) \nonumber \\
    =& \sum_{t=0}^{\infty} \gamma^t \mathbb{E}_{(\tilde{s}_t, a_t) \sim \tilde{\pi}_n^{z}P_t}[Q^*(\tilde{s}_t, \pi^*(\tilde{s}_t)) - Q^*(\tilde{s}_t, a_t) | \Xi_n] \text{ (note that $a_t = \tilde{\pi}_{t,n}^{z}(\tilde{h}_t)$)} \nonumber \\
    \leq& \sum_{t=0}^{\infty} \gamma^t \mathbb{E}_{(\tilde{s}_t, a_t) \sim \tilde{\pi}_n^{z}P_t}[Q^*(\tilde{s}_t, \pi^*(\tilde{s}_t)) - Q^*(\hat{\tilde{s}}_{t,n}, \pi^*(\tilde{s}_t)) + Q^*(\hat{\tilde{s}}_{t,n}, \pi^*(\tilde{s}_t)) - \hat{f}(\hat{\tilde{s}}_{t,n}, \pi^*(\tilde{s}_t)) \nonumber \\
    & + \hat{f}(\hat{\tilde{s}}_{t,n}, a_t) - Q^*(\hat{\tilde{s}}_{t,n}, a_t) + Q^*(\hat{\tilde{s}}_{t,n}, a_t) - Q^*(\tilde{s}_t, a_t) | \Xi_n] \nonumber \\
    \leq& \sum_{t=0}^{\infty} \gamma^t \mathbb{E}_{(\tilde{s}_t, a_t) \sim \tilde{\pi}_n^{z}P_t}[2\mathcal{B}L\|\hat{\tilde{s}}_{t,n} - \tilde{s}_t\|_{\infty} + 2\|Q^* - \hat{f}\|_{\infty} | \Xi_n] \nonumber \\
    =& 2\mathcal{B}L \sum_{t=0}^{\infty} \gamma^t \mathbb{E}_{(\tilde{s}_t, a_t) \sim \tilde{\pi}_n^{z}P_t}[\|\hat{\tilde{s}}_{t,n} - \tilde{s}_t\|_{\infty} | \Xi_n] + 2 \sum_{t=0}^{\infty} \gamma^t \mathbb{E}_{(\tilde{s}_t, a_t) \sim \tilde{\pi}_n^{z}P_t}[\|Q^* - \hat{f}\|_{\infty} | \Xi_n] \nonumber \\
    =& 2\mathcal{B}L \sum_{t=0}^{\infty} \gamma^t \mathbb{E}_{(\tilde{s}_t, a_t) \sim \tilde{\pi}_n^{z}P_t}[\|\hat{\tilde{s}}_{t,n} - \tilde{s}_t\|_{\infty} | \Xi_n] + \frac{2}{1-\gamma} \|Q^* - \hat{f}\|_{\infty} \nonumber \\
    =& 2\mathcal{B}L \left(\sum_{t=0}^{\omega-1} \gamma^t \mathbb{E}_{(\tilde{s}_t, a_t) \sim \tilde{\pi}_n^{z}P_t}[\|\hat{\tilde{s}}_{t,n} - \tilde{s}_t\|_{\infty} | \Xi_n] + \sum_{t=\omega}^{\infty} \gamma^t \mathbb{E}_{(\tilde{s}_t, a_t) \sim \tilde{\pi}_n^{z}P_t}[\|\hat{\tilde{s}}_{t,n} - \tilde{s}_t\|_{\infty} | \Xi_n]\right) \nonumber \\
    &+ \frac{2}{1-\gamma} \|Q^* - \hat{f}\|_{\infty} \nonumber \\
    \leq& 2\mathcal{B}L \left(\frac{1-\gamma^{\omega}}{1-\gamma}\max_{t \leq \omega-1}M_{t,n}+ \frac{\gamma^{\omega}}{1-\gamma}M_{\tilde{\mathcal{S}}}\right) + \frac{2}{1-\gamma} \|Q^* - \hat{f}\|_{\infty} \label{eq:regret_upper}
\end{align}
where the first equality is by the same argument as the one in the proof of Lemma 6.1 in \cite{kakade2002approximate}, and the first inequality is because $\hat{f}(\hat{\tilde{s}}_{t,n}, a_t) - \hat{f}(\hat{\tilde{s}}_{t,n}, \pi^*(\tilde{s}_t)) \geq 0$ by the greediness of $\hat{\pi}$ with respect to $\hat{f}(\hat{\tilde{s}}_{t,n}, \cdot)$. 

For the second term in \eqref{eq:regret_upper}, consider $n$ such that $n > d$. If $\delta_{T,n} \leq \frac{\lambda_0}{4dL}$, then, by the same argument as the one in the proof of Theorem 3 in \cite{wang2025counterfactuallyfairreinforcementlearning}, for any $\kappa>0$, 
\begin{align*}
& \|Q^* - \hat{f}\|_{\infty} \leq \sum_{t=0}^{B-1} \gamma^t \|\hat{f}_{B-t} - \mathcal{T}\hat{f}_{B-t-1}\|_{\infty} + \frac{\gamma^BR_{max}}{1-\gamma}, \\
& \mathbb{P}\left(\|\hat{f}_{B-t} - \mathcal{T}\hat{f}_{B-t-1}\|_{\infty} \leq \frac{C_1dLR_{max}\delta_{T,n}}{\lambda_0(\lambda_0 - 4dL\delta_{T,n})(1-\gamma)} + \frac{C_2dR_{max}\kappa \log(n)}{(1-\gamma)\lambda_0\sqrt{n}} \text{ } \forall t \middle| \Xi_n \right) \\
\geq& 1 - n^{-\kappa} - d \exp{(-n\lambda_0/8)}
\end{align*}
where $C_1$ and $C_2$ are positive constants and $\hat{f}_b$ denotes the estimated Q function after $b$ FQI iterations. This implies that, for any $\kappa>0$,
\begin{align}
    & \mathbb{P}\left(\|Q^* - \hat{f}\|_{\infty} \leq \frac{C_1dLR_{max}\delta_{T,n}}{\lambda_0(\lambda_0 - 4dL\delta_{T,n})(1-\gamma)^2} + \frac{C_2dR_{max}\kappa \log(n)}{(1-\gamma)^2\lambda_0\sqrt{n}} + \frac{\gamma^BR_{max}}{1-\gamma} \middle| \delta_{T,n} \leq \frac{\lambda_0}{4dL}\right) \nonumber \\
    =& \mathbb{E}\Big[\mathbb{P}\Big(\|Q^* - \hat{f}\|_{\infty} \leq \frac{C_1dLR_{max}\delta_{T,n}}{\lambda_0(\lambda_0 - 4dL\delta_{T,n})(1-\gamma)^2} + \frac{C_2dR_{max}\kappa \log(n)}{(1-\gamma)^2\lambda_0\sqrt{n}} \nonumber \\
    &+ \frac{\gamma^BR_{max}}{1-\gamma} \Bigm| \delta_{T,n} \leq \frac{\lambda_0}{4dL}, \Xi_n\Big) \Bigm| \delta_{T,n} \leq \frac{\lambda_0}{4dL} \Big] \nonumber \\
    \geq& 1 - n^{-\kappa} - d \exp(-n\lambda_0/8). \label{eq:q_function_bound}
\end{align}

Combining the above results gives
\begin{align}
    &\mathbb{P}\Big(SG(\tilde{\pi}_n^{z}) \leq \frac{2C_1dLR_{max}\delta_{T,n}}{\lambda_0(\lambda_0 - 4dL\delta_{T,n})(1-\gamma)^3} + \frac{2C_2dR_{max}\kappa \log(n)}{(1-\gamma)^3\lambda_0\sqrt{n}} + \frac{2\gamma^BR_{max}}{(1-\gamma)^2} \nonumber \\
    & + 2\mathcal{B}L\frac{1-\gamma^{\omega}}{1-\gamma}\max_{t \leq \omega-1}M_{t,n} + \frac{2\mathcal{B}L\gamma^{\omega}}{1-\gamma}M_{\tilde{\mathcal{S}}}\Big) \nonumber \\
    \geq& \mathbb{P}\Big(2\mathcal{B}L\frac{1-\gamma^{\omega}}{1-\gamma}\max_{t \leq \omega-1}M_{t,n} + \frac{2\mathcal{B}L\gamma^{\omega}}{1-\gamma}M_{\tilde{\mathcal{S}}} + \frac{2}{1-\gamma} \|Q^* - \hat{f}\|_{\infty} \nonumber \\ & \leq \frac{2C_1dLR_{max}\delta_{T,n}}{\lambda_0(\lambda_0 - 4dL\delta_{T,n})(1-\gamma)^3} + \frac{2C_2dR_{max}\kappa \log(n)}{(1-\gamma)^3\lambda_0\sqrt{n}} + \frac{2\gamma^BR_{max}}{(1-\gamma)^2} \nonumber \\
    &+ 2\mathcal{B}L\frac{1-\gamma^{\omega}}{1-\gamma}\max_{t \leq \omega-1}M_{t,n} + \frac{2\mathcal{B}L\gamma^{\omega}}{1-\gamma}M_{\tilde{\mathcal{S}}}\Big) \nonumber \\
    \geq& \mathbb{P}\left(\|Q^* - \hat{f}\|_{\infty} \leq \frac{C_1dLR_{max}\delta_{T,n}}{\lambda_0(\lambda_0 - 4dL\delta_{T,n})(1-\gamma)^2} + \frac{C_2dR_{max}\kappa \log(n)}{(1-\gamma)^2\lambda_0\sqrt{n}} + \frac{\gamma^BR_{max}}{1-\gamma}\right) \nonumber \\
    \geq& \mathbb{P}\Big( \left\{ \|Q^* - \hat{f}\|_{\infty} \leq \frac{C_1dLR_{max}\delta_{T,n}}{\lambda_0(\lambda_0 - 4dL\delta_{T,n})(1-\gamma)^2} + \frac{C_2dR_{max}\kappa \log(n)}{(1-\gamma)^2\lambda_0\sqrt{n}} + \frac{\gamma^BR_{max}}{1-\gamma} \right\} \nonumber \\
    &\cap \left\{ \delta_{T,n} \leq \frac{\lambda_0}{4dL} \right\} \Big) \nonumber \\
    =& \mathbb{P}\left(\|Q^* - \hat{f}\|_{\infty} \leq \frac{C_1dLR_{max}\delta_{T,n}}{\lambda_0(\lambda_0 - 4dL\delta_{T,n})(1-\gamma)^2} + \frac{C_2dR_{max}\kappa \log(n)}{(1-\gamma)^2\lambda_0\sqrt{n}} + \frac{\gamma^BR_{max}}{1-\gamma} \middle| \delta_{T,n} \leq \frac{\lambda_0}{4dL}\right) \nonumber \\
    &\times \mathbb{P} \left(\delta_{T,n} \leq \frac{\lambda_0}{4dL}\right) \nonumber \\
    \geq& \left(1 - n^{-\kappa} - d \exp(-n\lambda_0/8)\right) \mathbb{P} \left(\delta_{T,n} \leq \frac{\lambda_0}{4dL}\right). \label{eq:regret_prob_lower}
\end{align}
where the first inequality is due to \eqref{eq:regret_upper} and the final inequality is due to \eqref{eq:q_function_bound}.

This completes the proof.
\end{proof}

\paragraph{Theorem \ref{thm:unfairness_bound}.} \textit{Suppose Assumptions~\ref{ass:state_monotonicity}-\ref{ass:reward_monotonicity} and \ref{ass:space}-\ref{ass:margin} hold. Also, suppose the trajectory for each individual in the training set has exactly $T$ time steps. Let $\xi_n = \frac{C_1dLR_{max}\delta_{T,n}}{\lambda_0(\lambda_0 - 4dL\delta_{T,n})(1-\gamma)^2} + \frac{C_2dR_{max}\kappa \log(n)}{(1-\gamma)^2\lambda_0\sqrt{n}} + \frac{\gamma^BR_{max}}{1-\gamma}$ and $\eta_n = n^{-\kappa} + d\exp(-n\lambda_0/8)$, where $C_1, C_2$ are as defined in Theorem \ref{thm:regret_bound} and $\kappa > 0$ is an arbitrary constant. For any $z', z'' \in \mathcal{Z}$ and $t \geq 0$, }
\begin{align*}
    \mathbb{E}\left[\|\tilde{\pi}^{z'}_{t,n}(\tilde{h}_t) - \tilde{\pi}^{z''}_{t,n}(\tilde{h}_t)\|_2\right] \leq& \mathbb{E} \left[O(\xi_n^{\alpha}) + O((\mathcal{B}L\delta_{t,n})^{\alpha})\right] + \sqrt{2}\left[\mathbb{P}\left(\delta_{T,n} > \lambda_0/(4dL)\right) + \eta_n\right]
\end{align*}
\textit{where $O$ stands for the big-$O$ notation, $B$ is the number of FQI iterations, $\gamma \in (0, 1)$ is the discount factor, and $R_{max}$ and $\alpha$ are positive constants defined in Assumptions \ref{ass:space} and \ref{ass:margin}, respectively.}

\begin{proof}
Let $\mathcal{A}_n$ represent the event that $\|Q^* - \hat{f}\| \leq \xi_n$, and let $\mathcal{B}_n$ represent the event that $\delta_{T,n} \leq \frac{\lambda_0}{4dL}$. Also, let conditioning on $\Xi_n$ denote fixing the quantile estimators and empirical probabilities needed for preprocessing using SMDM. By the same argument as the one in the proof of Theorem 4 in \cite{wang2025counterfactuallyfairreinforcementlearning}, we have 
\begin{equation}
    \mathbb{E}_{(\tilde{s}_t,a_t) \sim \tilde{\pi}_n^{z'}P_t}[\|\tilde{\pi}^{z'}_{t,n}(\tilde{h}_t) - \tilde{\pi}^{z''}_{t,n}(\tilde{h}_t)\|_2 | \Xi_n, \mathcal{A}_n, \mathcal{B}_n] \leq O(\xi_n^{\alpha}) + O((\mathcal{B}L\delta_{t,n})^{\alpha}) \label{eq:unfairness_upper} 
\end{equation}
Then
\begin{align*}
    &\mathbb{E}_{(\tilde{s}_t,a_t) \sim \tilde{\pi}_n^{z'}P_t}\left[\|\tilde{\pi}^{z'}_{t,n}(\tilde{h}_t) - \tilde{\pi}^{z''}_{t,n}(\tilde{h}_t)\|_2 | \Xi_n\right] \\
    \leq& \mathbb{E}_{(\tilde{s}_t,a_t) \sim \tilde{\pi}_n^{z'}P_t}\left[\|\tilde{\pi}^{z'}_{t,n}(\tilde{h}_t) - \tilde{\pi}^{z''}_{t,n}(\tilde{h}_t)\|_2 \middle| \Xi_n, \mathcal{A}_n, \mathcal{B}_n\right] \mathbb{P}\left(\mathcal{A}_n \cap \mathcal{B}_n\middle|\Xi_n\right) \\
    &+ \mathbb{E}_{(\tilde{s}_t,a_t) \sim \tilde{\pi}_n^{z'}P_t}\left[\|\tilde{\pi}^{z'}_{t,n}(\tilde{h}_t) - \tilde{\pi}^{z''}_{t,n}(\tilde{h}_t)\|_2 \middle| \Xi_n, \left\{\mathcal{A}_n^C \cup \mathcal{B}_n^C\right\}\right] \mathbb{P}\left(\mathcal{A}_n^C \cup \mathcal{B}_n^C\middle|\Xi_n\right) \\
    \leq& \mathbb{E}_{(\tilde{s}_t,a_t) \sim \tilde{\pi}_n^{z'}P_t}\left[\|\tilde{\pi}^{z'}_{t,n}(\tilde{h}_t) - \tilde{\pi}^{z''}_{t,n}(\tilde{h}_t)\|_2 \middle| \Xi_n, \mathcal{A}_n, \mathcal{B}_n\right] + \sqrt{2}\left[1 - \mathbb{P}\left(\mathcal{A}_n \cap \mathcal{B}_n\middle|\Xi_n\right) \right] \\
    \leq& O(\xi_n^{\alpha}) + O((\mathcal{B}L\delta_{t,n})^{\alpha}) + \sqrt{2}\left[1 - \mathbb{P}\left(\mathcal{A}_n\middle|\Xi_n, \mathcal{B}_n\right) \mathbb{P}\left(\mathcal{B}_n\middle|\Xi_n\right) \right] \\
    \leq& O(\xi_n^{\alpha}) + O((\mathcal{B}L\delta_{t,n})^{\alpha}) + \sqrt{2}\left[1 - (1 - \eta_n)I\left(\mathcal{B}_n\right) \right] \\
    =& O(\xi_n^{\alpha}) + O((\mathcal{B}L\delta_{t,n})^{\alpha}) + \sqrt{2}\left[I\left(\mathcal{B}_n^C\right) + \eta_n I\left(\mathcal{B}_n\right) \right], 
\end{align*}
where the fourth inequality holds both due to \eqref{eq:unfairness_upper} and because $\mathbb{P}\left(\mathcal{A}_n\middle|\Xi_n, \mathcal{B}_n\right) \geq 1 - \eta_n$ by the argument in the proof of Theorem 3 in \cite{wang2025counterfactuallyfairreinforcementlearning}. Marginalizing over $\Xi_n$ gives
\begin{align*}
    &\mathbb{E}\left[\|\tilde{\pi}^{z'}_{t,n}(\tilde{h}_t) - \tilde{\pi}^{z''}_{t,n}(\tilde{h}_t)\|_2\right] \\
    \leq& \mathbb{E}\left[ O(\xi_n^{\alpha}) + O((\mathcal{B}L\delta_{t,n})^{\alpha}) + \sqrt{2}\left[I\left(\mathcal{B}_n^C\right) + \eta_n I\left(\mathcal{B}_n\right) \right] \right] \\
    =& \mathbb{E} \left[O(\xi_n^{\alpha}) + O((\mathcal{B}L\delta_{t,n})^{\alpha})\right] + \sqrt{2}\left[\mathbb{P}\left(\mathcal{B}_n^C\right) + \eta_n \mathbb{P}\left(\mathcal{B}_n\right) \right] \\
    \leq& \mathbb{E} \left[O(\xi_n^{\alpha}) + O((\mathcal{B}L\delta_{t,n})^{\alpha})\right] + \sqrt{2}\left[\mathbb{P}\left(\mathcal{B}_n^C\right) + \eta_n \right],
\end{align*}
which completes the proof.
\end{proof}

\section{Additional Numerical Experiments}
\label{sec:additional_experiments}
This section presents some additional numerical experiments that evaluate the performance of CFSMDM against baseline methods. Unless otherwise specified, the methods being compared in the additional numerical experiments are ``Full'', ``Unaware'', ``Random'', FLAP\_M, ECOCF\_M, CFSDP, and CFSMDM, which follow the same definition and implementation as their counterparts in Section \ref{sec:experiments}. Also, similar to Section \ref{sec:experiments}, policy learning is performed using FQI, and we evaluate the performance of the methods using the policy value and CF metric defined in Section \ref{sec:experiments}. More details about the implementation of the additional numerical experiments can be found in Section \ref{sec:details_experiments}.

\subsection{Additional Numerical Experiment on Fairness and Policy Value Under Varying $N$ and $\delta$}
\label{sec:additional_cf_value_experiment}

In this section, we conduct an additional numerical experiment to compare the level of counterfactual fairness and policy value achieved by CFSMDM and baseline methods under varying $N$ and $\delta$ using data generated from a known CMDP with \textit{additive} noise. Again, let $\delta$ denote the strength of impact of the sensitive attribute on the state and reward variables. This experiment aims to complement the experiment presented in section \ref{sec:experiments}, which compares the level of counterfactual fairness and policy value in a known CMDP with \textit{nonadditive} noise. 

We generate data from Data-generating CMDP 2 defined in Section \ref{sec:details_experiments}. This is a CMDP with univariate state variable and additive noise. Similar to Section \ref{sec:experiments}, we run two experiments with different objectives:
\begin{itemize}
    \item \textbf{Experiment 1':} This experiment evaluates how the policy value and the CF metric change with $N$, the number of individuals in the training dataset. In particular, for the training dataset, we fix $T=20, \delta=1.0$ and vary $N \in \{100, 200, 500, 1000, 2000\}$.
    \item \textbf{Experiment 2':} This experiment evaluates how the CF metric changes with $\delta$. In particular, for the training dataset, we fix $T=20, N=100$ and vary $\delta \in \{0.0, 0.5, 1.0, 1.5, 2.0\}$.
\end{itemize}
In both experiments, the training trajectories are generated using the ``Random'' policy as the behavioral policy. In particular, the sets of trajectories used for preprocessor training (if needed for the method of interest) and policy learning share the same sensitive attributes, while the remaining trajectory components are generated independently conditional on the shared sensitive attributes. The policy value resulting from the methods of interest is approximated by the discounted cumulative reward from a simulated trajectory with $10000$ individuals and $20$ horizons, generated from the known CMDP using the policy learned with the methods of interest. Similarly, the CF metric is calculated from a simulated dataset containing counterfactual trajectories of $10000$ individuals with $20$ horizons, generated from the known CMDP using the policy learned with the methods of interest. 

\paragraph{Results.} Figure~\ref{fig:results_additional_simulation} summarizes the experiment results. Panels (a)-(c) summarize the results from Experiment 1', and Panel (d) summarizes the results from Experiment 2'. In particular, panel (a) shows that CFSMDM and CFSDP achieves similar levels of performance in terms of fairness control, and both are significantly fairer than ``Full'' and ``Unaware''. Moreover, again, the CF metric achieved by CFSMDM decreases as the training sample size $N$ increases, which supports the consistency of CFSMDM. Meanwhile, while FLAP\_M and ECOCF\_M were fairer than ``Full'' and ``Unaware'', they still result in a noticeable level of unfairness. In panel (b), ``Full'' achieves the highest policy value because it uses the original states and rewards for reward maximization. CFSMDM and CFSDP achieve similar levels of policy values, but their policy values are lower than those of ``Full'' and ``Unaware'', which again demonstrates a tradeoff between fairness and policy value. Panel (c) shows that fairer methods tend to have lower policy values. In panel (d), CFSMDM remains relatively fair as we inject more unfairness into the underlying CMDP by increasing $\delta$, which shows that CFSMDM is robust against the level of unfairness present in the training trajectories.

\begin{figure}
    \centering
    \includegraphics[width=1\linewidth]{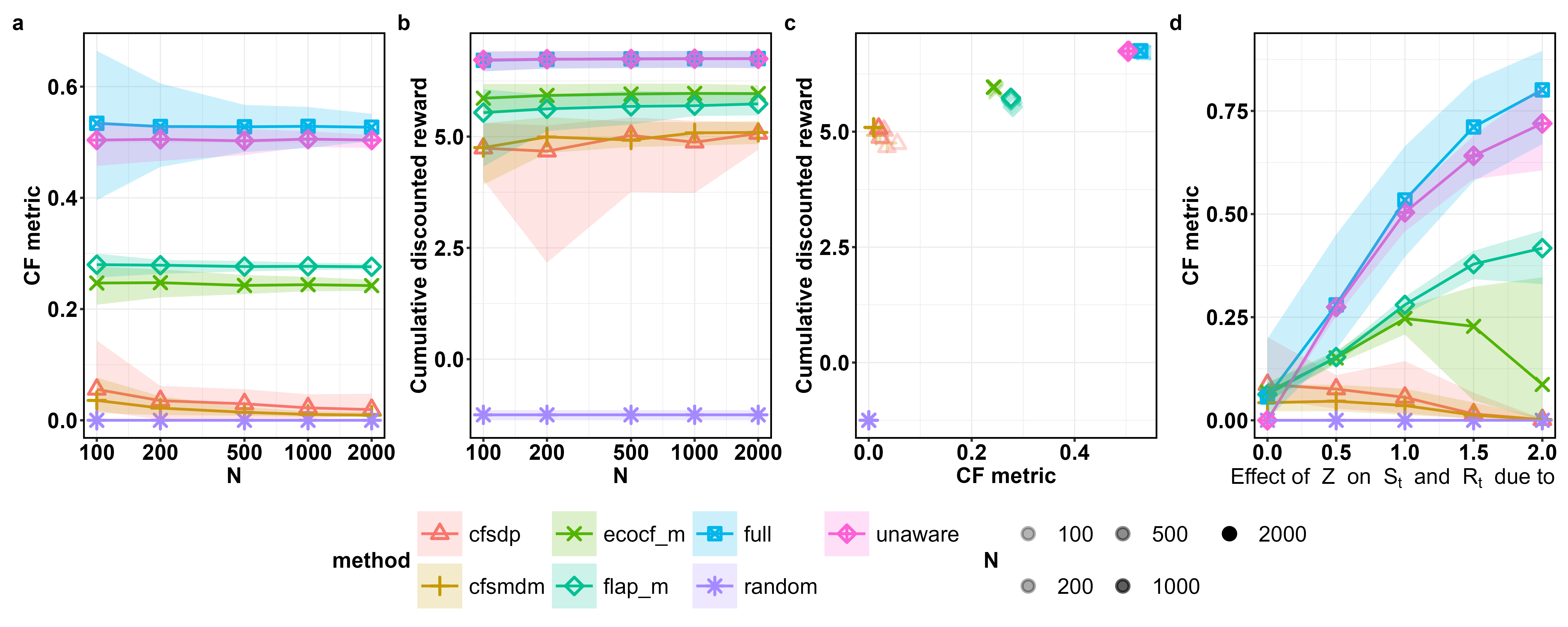}
    \caption{Experiment results under additive noise: (a) CF metric vs. Sample size ($N$); (b) Policy value vs. Sample size ($N$); (c) Policy value vs. CF metric; (d) CF metric vs. Size of effect of $Z$ on $S_t$ and $R_t$ ($\delta$). Results are aggregated over $50$ replications with different random seeds. Point represents the empirical mean. Shaded area represents the band between $0.025$ and $0.975$ empirical quantiles.}
    \label{fig:results_additional_simulation}
\end{figure}

\subsection{Additional Numerical Experiment on Fairness Under Varying $T$}
\label{sec:additional_T_experiment}

In this section, we conduct an additional numerical experiment to compare the level of counterfactual unfairness achieved by CFSMDM and baseline methods under varying $T$ (number of horizons in the training trajectory). The experiment is repeated for both Data-generating CMDP 1, which has bivariate state variable and nonadditive noise, and Data-generating CMDP 2, which has univariate state variable and additive noise. Both CMDPs are defined in Section \ref{sec:details_experiments}. In particular, for Data-generating CMDP 1, we fix $N=500$, $\delta=2.0$, and vary $T \in \{10, 20, 30, 50, 100\}$; for Data-generating CMDP 2, we fix $N=100$, $\delta=1.0$, and vary $T \in \{10, 20, 30, 50, 100\}$.

For both CMDPs, the training trajectories are generated using the ``Random'' policy as the behavioral policy. In particular, the sets of trajectories used for preprocessor training (if needed for the method of interest) and policy learning share the same sensitive attributes, while the remaining trajectory components are generated independently conditional on the shared sensitive attributes. For each $T \in \{10, 20, 30, 50, 100\}$, the CF metric is calculated from a simulated dataset containing counterfactual trajectories of $10000$ individuals with $T$ horizons, generated from the known CMDP using the policy learned with the methods of interest.

\paragraph{Results.} Figure \ref{fig:results_simulation_T} summarizes the experiment results. In both CMDPs, the CF metric achieved by CFSMDM decreases slightly as $T$ increases. This shows that, although the counterfactual state and reward estimation errors might accumulate across time, CFSMDM controls the level of unfairness successfully for large levels of $T$. One possible explanation for this is that since the transition tuples used for SMDM training are pooled across time, a larger $T$ provides more tuples for training the quantile models; this might improve the estimation accuracy of the counterfactual states and rewards.

\begin{figure}
    \centering
    \includegraphics[width=0.75\linewidth]{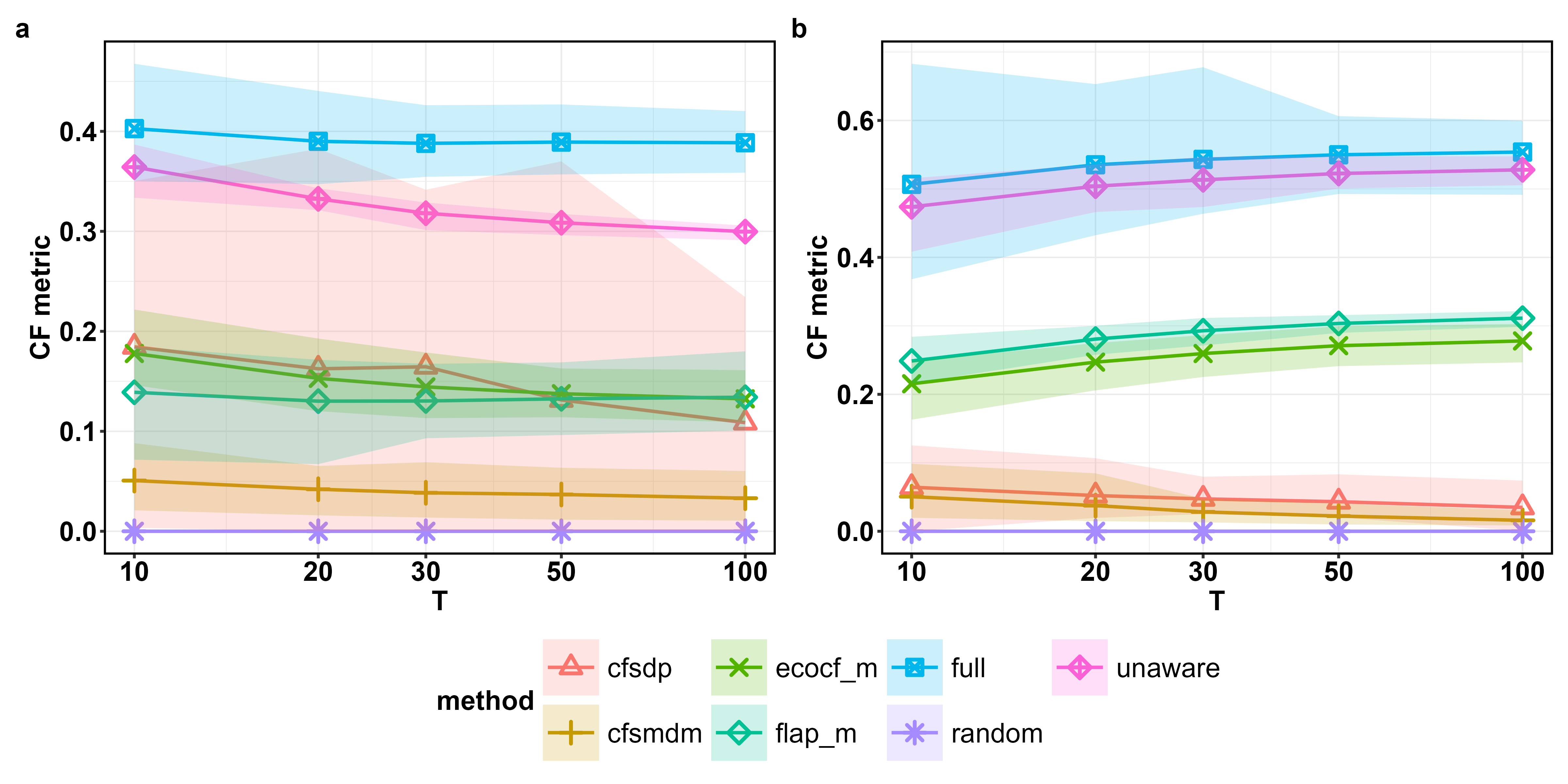}
    \caption{Level of counterfactual unfairness under varying levels of $T$. Panel (a) presents results under Data-generating CMDP 1. Panel (b) presents results under Data-generating CMDP 2. Results are aggregated over $50$ replications with different random seeds. Point represents the empirical mean. Shaded area represents the band between $0.025$ and $0.975$ empirical quantiles.}
    \label{fig:results_simulation_T}
\end{figure}

\subsection{Additional Numerical Experiments on Sensitivity of CFSMDM to the Number of Quantile Models}
\label{sec:additional_q_experiment}

CFSMDM requires estimating the conditional quantile function, which is often done by training quantile regression models for a grid of quantile levels in $(0,1)$. For example, in the previous experiments, the conditional quantile function is estimated by fitting quantile regression models for quantile levels $\tau \in \{0.01, 0.02, 0.03, \dots, 0.98, 0.99\}$. The number of quantile levels in this grid can be varied. Therefore, in this section, we conduct an additional numerical experiment to investigate the sensitivity of CFSMDM to the number of quantile levels in this grid. For notational convenience, we denote this number of quantile levels by $q$.

The experiment is repeated for both Data-generating CMDP 1, which has bivariate state variable and nonadditive noise, and Data-generating CMDP 2 (additive noise), which has univariate state variable and additive noise. Both CMDPs are defined in Section \ref{sec:details_experiments}. In particular, for Data-generating CMDP 1, we fix $N=500$, $T=20$, $\delta=2.0$, and vary $q \in \{9, 19, 49, 99, 199, 499\}$; for Data-generating CMDP 2, we fix $N=100$, $T=20$, $\delta=1.0$, and vary $q \in \{9, 19, 49, 99, 199, 499\}$. In our implementation, the $q$ quantile levels are evenly distributed on the interval $(0,1)$. For example, $q=49$ corresponds to $\tau \in \{0.02, 0.04, 0.06, \dots, 0.96, 0.98\}$, and $q=99$ corresponds to $\tau \in \{0.01, 0.02, 0.03, \dots, 0.98, 0.99\}$.

For both CMDPs, the training trajectories are generated using the ``Random'' policy as the behavioral policy. In particular, the sets of trajectories used for preprocessor training (if needed for the method of interest) and policy learning share the same sensitive attributes, while the remaining trajectory components are generated independently conditional on the shared sensitive attributes. The policy value resulting from the methods of interest is approximated by the discounted cumulative reward from a simulated trajectory with $10000$ individuals and $20$ horizons, generated from the known CMDP using the policy learned with the methods of interest. Similarly, the CF metric is calculated from a simulated dataset containing counterfactual trajectories of $10000$ individuals with $20$ horizons, generated from the known CMDP using the policy learned with the methods of interest. 

\paragraph{Results.} Tables \ref{tab:quantiles_metrics_m1} and \ref{tab:quantiles_metrics_m2} summarize the experiment results for Data-generating CMDP 1 and Data-generating CMDP 2, respectively. Overall, the performance of CFSMDM appears to be insensitive to the number of quantile levels fitted. Under Data-generating CMDP 1, the level of counterfactual unfairness is relatively low for all the $q$ levels tested. However, $q=9$ and $q=19$ resulted in higher unfairness than the other $q$ levels, possibly because a small $q$ degrades the quality of counterfactual state and reward estimation. $q=9$ also resulted in higher value than the other $q$ levels. Under Data-generating CMDP 2, the level of counterfactual unfairness and policy value are similar across all the $q$ levels tested. Also, the resulting policy is close to being counterfactually fair for all the $q$ levels tested.

\begin{table}
\centering
\begin{tabular}{lcc}
\hline
\textbf{$q$} & \textbf{CF Metric} & \textbf{Value} \\
\hline
$9$ & 0.0848 (0.0212) & -3.1881 (0.0911) \\
$19$ & 0.0528 (0.0152) & -3.2787 (0.0798) \\
$49$ & 0.0456 (0.0171) & -3.3039 (0.0840) \\
$99$ & 0.0444 (0.0173) & -3.3042 (0.0853) \\
$199$ & 0.0426 (0.0161) & -3.3035 (0.0791) \\
$499$ & 0.0430 (0.0159) & -3.3054 (0.0784) \\
\hline
\end{tabular}
\vspace{0.3cm}
\caption{CF metric and value of CFSMDM under varying levels of $q$ (number of quantile levels fitted) in Data-generating CMDP 1. Aggregated over $50$ replications with different random seeds. The number without parentheses is the empirical mean, and the number inside the parentheses is the standard deviation. All values are rounded to four decimal points.}
\label{tab:quantiles_metrics_m1}
\end{table}

\begin{table}[ht]
\centering
\begin{tabular}{lcc}
\hline
\textbf{$q$} & \textbf{CF Metric} & \textbf{Value} \\
\hline
$9$ & 0.0438 (0.0171) & 4.7583 (0.2918) \\
$19$ & 0.0394 (0.0173) & 4.7356 (0.5918) \\
$49$ & 0.0372 (0.0169) & 4.7817 (0.4774) \\
$99$ & 0.0368 (0.0204) & 4.7609 (0.8844) \\
$199$ & 0.0362 (0.0191) & 4.8301 (0.6153) \\
$499$ & 0.0347 (0.0162) & 4.7962 (0.6757) \\
\hline
\end{tabular}
\vspace{0.3cm}
\caption{CF metric and value of CFSMDM under varying levels of $q$ (number of quantile levels fitted) in Data-generating CMDP 2. Aggregated over $50$ replications with different random seeds. The number without parentheses is the empirical mean, and the number inside the parentheses is the standard deviation. All values are rounded to four decimal points.}
\label{tab:quantiles_metrics_m2}
\end{table}

\section{Additional Implementation Details of Numerical Experiments}
\label{sec:details_experiments}

\paragraph{Data-generating CMDP 1:} We define below Data-generating CMDP 1, which has a bivariate state variable and nonadditive noise:
\begin{align*}
S_0^{(1)} =& (-0.45 + 1.5\delta Z + U_{0}^{S,(1)} + 0.5 U_{0}^{S,(3)})^{\frac{1}{3}}, \\
S_0^{(2)} =& (-0.75 + 2.5\delta Z + U_{0}^{S,(1)} + U_{0}^{S,(2)} + 0.5 U_{0}^{S,(3)})^{\frac{1}{3}}, \\
S_t^{(1)} =& (-0.45 + 0.3S_{t-1}^{(1)}(A_{t-1} - 0.5) + 0.3\delta S_{t-1}^{(1)}(Z - 0.5) + 0.45\delta (Z - 0.5)(A_{t-1} - 0.5) \\
&+ 0.5U_{t}^{S,(1)} + 0.25 U_{t}^{S,(3)})^{\frac{1}{3}} \text{ (for $t \geq 1$)}, \\
S_t^{(2)} =& (-0.9 + 0.4S_{t-1}^{(1)}(A_{t-1} - 0.5) + 0.2S_{t-1}^{(2)}(A_{t-1} - 0.5) + 0.6\delta S_{t-1}^{(1)}(Z - 0.5) \\
&+ 0.2\delta S_{t-1}^{(2)}(Z - 0.5) + 0.9\delta (Z - 0.5)(A_{t-1} - 0.5) + 0.5U_{t}^{S,(1)} \\
&+ 0.9U_{t}^{S,(2)} + 0.5 U_{t}^{S,(3)})^{\frac{1}{3}} \text{ (for $t \geq 1$)}, \\
R_t = (&-0.3 + 0.2\delta S_t^{(1)}Z + 0.5S_t^{(1)}A_t + 0.2\delta S_t^{(2)}Z + 0.5S_t^{(2)}A_t - 1.0\delta ZA_t + 1.0U_{t}^R)^{\frac{1}{3}},
\end{align*}
where $U_t^{S,(1)}, U_t^{S,(2)}, U_t^{S,(3)}, U_t^{R} \overset{\text{i.i.d.}}{\sim} N(0,1)$ for $t \geq 0$. In this environment, we can view $U_0^{S, (1)} + 0.5U_0^{S, (3)}$ and $U_0^{S, (1)} + U_0^{S, (2)} + 0.5U_0^{S, (3)}$ as the scalar exogenous variables of $S_0^{(1)}$ and $S_0^{(2)}$, respectively. For $t \geq 1$, we can also view $0.5U_t^{S, (1)} + 0.25U_t^{S, (3)}$ and $0.5U_t^{S, (1)} + 0.9U_t^{S, (2)} + 0.5U_t^{S, (3)}$ as the scalar exogenous variables of $S_t^{(1)}$ and $S_t^{(2)}$, respectively. Note that $\delta$ is a constant that controls the strength of impact of the sensitive attribute on the state and reward variables; in particular, there will be no fairness issues with the CMDP if $\delta=0$. In this setting, the exogenous variables are not additive to the state or the reward, but Assumptions \ref{ass:state_monotonicity} and \ref{ass:reward_monotonicity} are both satisfied. Therefore, in this setting, CFSMDM should result in approximately counterfactually fair policies, while there is no guarantee for the performance of CFSDP.

\begin{remark}
The conditional quantile functions in the setting given above are not linear. However, for simplicity, we estimate the conditional quantile functions in CFSMDM using linear quantile regression with a flexible model specification, where the model specification can be found in the paragraphs below. As shown in Figure~\ref{fig:results_simulation}, the performance of CFSMDM remains robust despite some degree of model misspecification.
\end{remark}

\paragraph{Data-generating CMDP 2:} We define below Data-generating CMDP 2, which has a univariate state variable and additive noise:
\begin{align*}
S_0 =& -0.3 + 1.0 \delta Z + U_0^S, \\
S_t =& -0.3 + 1.0 \delta (Z - 0.5) + 0.5S_{t-1} + 0.4(A_{t-1} - 0.5) + 0.3 S_{t-1}(A_{t-1} - 0.5) \\
&+ 0.3 \delta S_{t-1} (Z - 0.5) + 0.4 \delta (Z - 0.5)(A_{t-1} - 0.5) + U_t^S \text{ (for $t \geq 1$)}, \\
R_t =& -0.3 + 0.3S_t + 0.5 \delta Z + 0.5A_t + 0.2 \delta S_tZ + 0.7S_tA_t - 1.0 \delta ZA_t + U_t^R,
\end{align*}
where $U_t^S, U_t^R \overset{\text{i.i.d.}}{\sim} N(0,1)$ for $t \geq 0$. Note that $\delta$ is a constant that controls the strength of impact of the sensitive attribute on the state and reward variables; in particular, there will be no fairness issues with the CMDP if $\delta=0$. In this setting, since the exogenous variables of the state are additive to the state, both CFSDP and CFSMDM should result in approximately counterfactually fair policies.

\paragraph{Implementation of FLAP\_M:} The Fair Learning through data preprocessing (FLAP) algorithm is introduced by \cite{chen2024flap}. It is a data preprocessing method that aims to achieve counterfactual fairness in cross-sectional prediction tasks. Our FLAP\_M baseline adapts the FLAP algorithm (with marginal distribution mapping) to the sequential decision-making setting, which is detailed in Algorithm \ref{alg:flap_m}. Specifically, the quantile levels and quantiles for each time step $t$ and dimension $i$, which are required by FLAP\_M, are estimated using the empirical cumulative distribution function (CDF) of the conditional distribution (denoted by $\hat{\mathbb{P}}_{t,n}^{(i)}(\cdot|Z)$) and its inverse, respectively. 

\begin{algorithm}
\caption{FLAP\_M: FLAP with single-stage marginal distribution mapping, adapted to the sequential decision-making setting}\label{alg:flap_m}

\textbf{Input:} Original data $\mathcal{D} = \{s_{j,0}, s_{j,t}, z_j, a_{j,t-1}, r_{j,t-1}: j = 1, \dots, N, t = 1, \dots, T\}$, where $s_{j,t} = (s_{j,t}^{(1)}, \dots, s_{j,t}^{(M)})$.

\begin{algorithmic}[1]


\For{$t = 0, \dots, T$} 

    \For{$i = 1, \dots, M$} 
    
        \State For each $z \in \mathcal{Z}$, fit empirical CDF $\hat{\mathbb{P}}_{t,n}^{(i)}(S_t^{(i)}|Z)$ using $\{s_{j,t}^{(i)}: \text{$j$ such that $z_j = z$}\}$.
    
    \EndFor

\EndFor

\For{$j = 1, \dots, N$} 
    
    \For{$t = 0, \dots, T$} 

        \For{$i = 1, \dots, M$}

            \State $s_{j,t}^{pro, (i)} = \sum_{z \in \mathcal{Z}} \hat{\mathbb{P}}_n(Z = z) [\hat{\mathbb{P}}_{t,n}^{(i)}]^{-1}(\hat{\mathbb{P}}_{t,n}^{(i)}(s_{j,t}^{(i)}|Z=z_j)|Z=z)$
            
        \EndFor

        \State $s_{j,t}^{pro} = (s_{j,t}^{pro, (1)}, \dots, s_{j,t}^{pro, (M)})$
        
    \EndFor
    
\EndFor

\State Train a policy $\hat{\pi}$ using FQI on $\mathcal{D}' = \{s_{j,0}^{pro}, s_{j,t}^{pro}, a_{j,t-1}, r_{j,t-1}: j = 1, \dots, N, t = 1, \dots, T\}$

\end{algorithmic}

\textbf{Output:} Learned policy $\hat{\pi}$.

\end{algorithm}

\paragraph{Implementation of ECOCF\_M:} \cite{wang2023adjusting} proposes a postprocessing method that aims to achieve counterfactual fairness in cross-sectional prediction tasks. ECOCF\_M implements an adaptation of this method to the sequential decision-making setting. In particular, given training dataset $\mathcal{D} = \{s_{j,t}, z_j, a_{j,t}, r_{j,t}: j = 1, \dots, N, t = 0, \dots, T\}$ with $s_{j,t} = (s_{j,t}^{(1)}, \dots, s_{j,t}^{(M)})$, ECOCF\_M trains a policy using FQI with the sensitive attribute being included in the state variable. At deployment time, for each time step $t$, decisions for each individual $j \in \{1, \dots, N\}$ are made via the following procedure:
\begin{enumerate}
    \item For each $z \in \mathcal{Z}$, find 
    \begin{equation*}
        s_t(z) = \big([\hat{\mathbb{P}}_{t,n}^{(1)}]^{-1}(\hat{\mathbb{P}}_{t,n}^{(1)}(s_{j,t}^{(1)}|Z=z_j)|Z=z), \dots, [\hat{\mathbb{P}}_{t,n}^{(M)}]^{-1}(\hat{\mathbb{P}}_{t,n}^{(M)}(s_{j,t}^{(M)}|Z=z_j)|Z=z)\big),
    \end{equation*}
    where $\hat{\mathbb{P}}_{t,n}^{(i)}(\cdot|Z)$ is the empirical CDF of $S_t^{(i)}$ following the same definition as that in FLAP\_M.
    
    \item For $a \in \mathcal{A}$, calculate the estimated counterfactually fair Q value by 
    \begin{equation*}
        \hat{f}_{cf}(s_{j,t},a) = \sum_{z' \in \mathcal{Z}} p(z') \sum_{z \in \mathcal{Z}} p(z) \hat{f}(z, s_t(z'), a)
    \end{equation*}
    where $p(z) = \hat{\mathbb{P}}_n(Z=z)$ is the empirical population-level probability of the observed sensitive attribute taking the value $z$, and $\hat{f}: \mathcal{Z} \times \mathcal{S} \times \mathcal{A} \rightarrow \mathbb{R}$ is the Q function estimated by FQI.

    \item The decision (or action) is made by
    \begin{equation*}
        a_{j,t} = \arg\max_{a \in \mathcal{A}} \hat{f}_{cf}(s_{j,t}, a).
    \end{equation*}
\end{enumerate}

\paragraph{Implementation of CFSDP:} Our implementation of CFSDP estimates the environment's transition kernel using a neural network with hidden layers $[64, 64]$. The neural network is trained using a maximum of $1000$ epochs, a learning rate of $0.005$, and a batch size of $512$. The mean squared error is used as the loss function, and the Adam optimizer \citep{kingma2015adam} is also employed. During training, $80\%$ of the data is used as the training set, and the remaining $20\%$ is used as the testing set to test for convergence. Early stopping will be triggered if the test loss barely improves for $10$ consecutive epochs.

\paragraph{Implementation of CFSMDM:} Our implementation of CFSMDM estimates the conditional quantiles using quantile regression \citep{koenker1978regression, koenker2005qrtextbook}. For experiments using Data-generating CMDP 1, the quantile regression is fit with the following specification:
\begin{align*}
S_0^{(1)} \sim& 1 + Z, \\
S_0^{(2)} \sim& 1 + Z, \\
S_t^{(1)} \sim& 1 + Z + S_{t-1}^{(1)} + (S_{t-1}^{(1)})^2 + (S_{t-1}^{(1)})^3 + A_{t-1} \\
&+ ZS_{t-1}^{(1)} + ZA_{t-1} + S_{t-1}^{(1)}A_{t-1} \text{ (for $t \geq 1$)}, \\
S_t^{(2)} \sim& 1 + Z + S_{t-1}^{(1)} + (S_{t-1}^{(1)})^2 + (S_{t-1}^{(1)})^3 + S_{t-1}^{(2)} + (S_{t-1}^{(2)})^2 + (S_{t-1}^{(2)})^3 + A_{t-1} \\
&+ ZS_{t-1}^{(1)} + ZS_{t-1}^{(2)} + ZA_{t-1} + S_{t-1}^{(1)}A_{t-1} + S_{t-1}^{(2)}A_{t-1} \text{ (for $t \geq 1$)}, \\
R_t \sim& 1 + Z + S_{t-1}^{(1)} + (S_{t-1}^{(1)})^2 + (S_{t-1}^{(1)})^3 + S_{t-1}^{(2)} + (S_{t-1}^{(2)})^2 + (S_{t-1}^{(2)})^3 + A_{t-1} \\
&+ ZS_{t-1}^{(1)} + ZA_{t-1} + S_{t-1}^{(1)}A_{t-1} \text{ (for $t \geq 0$)}.
\end{align*}
For experiments using Data-generating CMDP 2, the quantile regression is fit with the following specification:
\begin{align*}
S_0 \sim& 1 + Z, \\
S_t \sim& 1 + Z + S_{t-1} + A_{t-1} + ZS_{t-1} + ZA_{t-1} + S_{t-1}A_{t-1} \text{ (for $t \geq 1$)}, \\
R_t \sim& 1 + Z + S_{t-1} + A_{t-1} + ZS_{t-1} + ZA_{t-1} + S_{t-1}A_{t-1} \text{ (for $t \geq 0$)}.
\end{align*}
In both cases, the quantile regression is implemented via the \texttt{QuantReg} class of the \texttt{statsmodels} library \citep{seabold2010statsmodels}. Unless otherwise specified, the conditional quantile function is estimated by training quantile regression models for quantile levels $\tau \in \{0.01, 0.02, 0.03, \dots, 0.98, 0.99\}$ with a maximum of $2000$ training iterations. In addition, CFSMDM requires knowledge of the quantile level of the observed state on its conditional distribution, and this quantity is estimated as the quantile level in in $\{0.01, \dots, 0.99\}$ whose corresponding quantile is the closest to the observed state. 

\paragraph{Implementation of FQI:} FQI is employed in the numerical experiments for policy learning for all the methods except "Random". Our implementation of FQI runs for $200$ training iterations (except that FQI runs for $500$ training iterations for the experiment in Section \ref{sec:additional_T_experiment}) and uses a discount factor of $\gamma=0.9$. In each iteration, the Q function is estimated using a neural network with hidden layers $[32]$. The neural network is trained with $500$ epochs and a learning rate of $0.1$. The mean squared error is used as the loss function, and the Adam optimizer \citep{kingma2015adam} is also employed.

\paragraph{Computing environment:} The numerical experiments were run on an internal cluster equipped with two 36-core Intel Xeon Gold 6154 CPUs clocked at 3.0 GHz and 187 GB of RAM. Each job was executed via SLURM on a single node using 1 CPU core and 7 GB of memory.

\section{Additional Implementation Details of the Real Data Analysis}
\label{sec:details_rda}

\paragraph{Details on training and evaluation in the real data analysis:} In our analysis, we randomly choose 80\% of the dataset for policy learning and use the remaining 20\% to evaluate the policies' performance in terms of the value and CF metric.

For CFSDP and CFSMDM, due to the limited size of the dataset, the data used to train the preprocessor is the same as the data that is to be preprocessed. Following \cite{wang2025counterfactuallyfairreinforcementlearning}, we employ cross-fitting to mitigate the bias introduced by the data reuse. Specifically, for the CFSMDM, we divide the training dataset into $10$ folds. For each $i \in \{1, \dots, 10\}$, we train a quantile regression model using all data folds except the $i$-th one, and then use this model to preprocess the $i$-th data fold. In the end, we will have processed the entire training dataset. 10-fold cross-fitting is implemented similarly for CFSDP.

At evaluation time, the policy value is estimated using fitted Q evaluation (FQE) \citep{le2019Batch} on the test dataset. For the CF metric, we first train an SMDM model on the entire PowerED dataset using quantile regression with a linear specification, and then employ this model to estimate the unobserved counterfactual states of the test dataset. The CF metric is estimated from the actions chosen by the policy based on the estimated counterfactual states.

\paragraph{Implementation of CFSDP:} Our implementation of CFSDP estimates the environment's transition kernel using a neural network with hidden layers $[128, 128]$. The neural network is trained using a maximum of $1000$ epochs, a learning rate of $0.0001$, and a batch size of $128$. The mean squared error is used as the loss function, and the Adam optimizer \citep{kingma2015adam} is also employed. The ternary actions are one-hot encoded. During training, $80\%$ of the data is used as the training set, and the remaining $20\%$ is used as the testing set to test for convergence. Early stopping will be triggered if the test loss barely improves for $5$ consecutive epochs.

\paragraph{Implementation of CFSMDM:} Our implementation of CFSMDM estimates the conditional quantiles using quantile regression \citep{koenker1978regression, koenker2005qrtextbook}. The actions are one-hot encoded because they are ternary. Specifically, the following quantile regression specification is used:
\begin{align*}
S_0^{(1)} \sim& 1 + Z, \\
S_0^{(2)} \sim& 1 + Z, \\
S_t^{(1)} \sim& 1 + Z + S_{t-1}^{(1)} + (S_{t-1}^{(1)})^2 + (S_{t-1}^{(1)})^3 + S_{t-1}^{(2)} + (S_{t-1}^{(2)})^2 + (S_{t-1}^{(2)})^3 + A_{t-1}^{(1)} + A_{t-1}^{(2)} \\
&+ ZS_{t-1}^{(1)} + ZS_{t-1}^{(2)} + ZA_{t-1}^{(1)} + ZA_{t-1}^{(2)} + S_{t-1}^{(1)}A_{t-1}^{(1)} + S_{t-1}^{(1)}A_{t-1}^{(2)} \\
&+ S_{t-1}^{(2)}A_{t-1}^{(1)} + S_{t-1}^{(2)}A_{t-1}^{(2)} + S_{t-1}^{(1)}S_{t-1}^{(2)} \text{ (for $t \geq 1$)}, 
\end{align*}
\begin{align*}
S_t^{(2)} \sim& 1 + Z + S_{t-1}^{(1)} + (S_{t-1}^{(1)})^2 + (S_{t-1}^{(1)})^3 + S_{t-1}^{(2)} + (S_{t-1}^{(2)})^2 + (S_{t-1}^{(2)})^3 + A_{t-1}^{(1)} + A_{t-1}^{(2)} \\
&+ ZS_{t-1}^{(1)} + ZS_{t-1}^{(2)} + ZA_{t-1}^{(1)} + ZA_{t-1}^{(2)} + S_{t-1}^{(1)}A_{t-1}^{(1)} + S_{t-1}^{(1)}A_{t-1}^{(2)} \\
&+ S_{t-1}^{(2)}A_{t-1}^{(1)} + S_{t-1}^{(2)}A_{t-1}^{(2)} + S_{t-1}^{(1)}S_{t-1}^{(2)} \text{ (for $t \geq 1$)}, \\
R_t \sim& 1 + Z + S_{t-1}^{(1)} + (S_{t-1}^{(1)})^2 + (S_{t-1}^{(1)})^3 + S_{t-1}^{(2)} + (S_{t-1}^{(2)})^2 + (S_{t-1}^{(2)})^3 + A_{t-1}^{(1)} + A_{t-1}^{(2)} \\
&+ ZS_{t-1}^{(1)} + ZS_{t-1}^{(2)} + ZA_{t-1}^{(1)} + ZA_{t-1}^{(2)} + S_{t-1}^{(1)}A_{t-1}^{(1)} + S_{t-1}^{(1)}A_{t-1}^{(2)} \\
&+ S_{t-1}^{(2)}A_{t-1}^{(1)} + S_{t-1}^{(2)}A_{t-1}^{(2)} + S_{t-1}^{(1)}S_{t-1}^{(2)} \text{ (for $t \geq 0$)}.
\end{align*}
where $(A_t^{(1)}, A_t^{(2)})$ represents the one-hot encoded action at time $t$. The quantile regression is implemented via the \texttt{QuantReg} class of the \texttt{statsmodels} library \citep{seabold2010statsmodels}. The conditional quantile function is estimated by training quantile regression models for quantile levels $\tau \in \{0.01, 0.02, 0.03, \dots, 0.98, 0.99\}$ with a maximum of $2000$ training iterations. In addition, CFSMDM requires knowledge of the quantile level of the observed state on its conditional distribution, and this quantity is estimated as the quantile level in $\{0.01, \dots, 0.99\}$ whose corresponding quantile is the closest to the observed state. 

\paragraph{Implementation of FQI:} FQI is employed in the numerical experiments for policy learning for all the methods except "Random". Our implementation of FQI runs for $100$ training iterations and uses a discount factor of $\gamma=0.9$. In each iteration, the Q function is estimated using a neural network with hidden layers $[128, 128]$. The neural network is trained with $1000$ epochs and a learning rate of $0.01$. The mean squared error is used as the loss function, and the Adam optimizer \citep{kingma2015adam} is also employed. If the loss is more than $10$ at the end of some iteration, then the training will stop early due to likely FQI divergence. If the loss is no less than $2$ at the end of FQI training, then the FQI will be refit unless refitting has happened for at least $5$ times.

\paragraph{Implementation of FQE:} Our implementation of FQE runs for $100$ training iterations and uses a discount factor of $\gamma=0.9$. In each iteration, the Q function is estimated using a neural network with hidden layers $[32]$. The neural network is trained with $500$ epochs and a learning rate of $0.1$. The mean squared error is used as the loss function, and the Adam optimizer \citep{kingma2015adam} is also employed. If the loss is no less than $3$ at the end of FQE training, then the FQE will be refit unless refitting has happened for at least $5$ times.

\paragraph{Implementation of the SMDM model for CF metric estimation:} Similar to CFSMDM, our implementation of the SMDM model for CF metric estimation estimates the conditional quantiles using quantile regression \citep{koenker1978regression, koenker2005qrtextbook}. The actions are one-hot encoded because they are ternary. Specifically, the following quantile regression specification is used:
\begin{align*}
S_0^{(1)} \sim& 1 + Z, \\
S_0^{(2)} \sim& 1 + Z, \\
S_t^{(1)} \sim& 1 + Z + S_{t-1}^{(1)} + (S_{t-1}^{(1)})^2 + (S_{t-1}^{(1)})^3 + S_{t-1}^{(2)} + (S_{t-1}^{(2)})^2 + (S_{t-1}^{(2)})^3 + A_{t-1}^{(1)} + A_{t-1}^{(2)} \\
&+ ZS_{t-1}^{(1)} + ZS_{t-1}^{(2)} + ZA_{t-1}^{(1)} + ZA_{t-1}^{(2)} + S_{t-1}^{(1)}A_{t-1}^{(1)} + S_{t-1}^{(1)}A_{t-1}^{(2)} \\
&+ S_{t-1}^{(2)}A_{t-1}^{(1)} + S_{t-1}^{(2)}A_{t-1}^{(2)} + S_{t-1}^{(1)}S_{t-1}^{(2)} \text{ (for $t \geq 1$)}, \\
S_t^{(2)} \sim& 1 + Z + S_{t-1}^{(1)} + (S_{t-1}^{(1)})^2 + (S_{t-1}^{(1)})^3 + S_{t-1}^{(2)} + (S_{t-1}^{(2)})^2 + (S_{t-1}^{(2)})^3 + A_{t-1}^{(1)} + A_{t-1}^{(2)} \\
&+ ZS_{t-1}^{(1)} + ZS_{t-1}^{(2)} + ZA_{t-1}^{(1)} + ZA_{t-1}^{(2)} + S_{t-1}^{(1)}A_{t-1}^{(1)} + S_{t-1}^{(1)}A_{t-1}^{(2)} \\
&+ S_{t-1}^{(2)}A_{t-1}^{(1)} + S_{t-1}^{(2)}A_{t-1}^{(2)} + S_{t-1}^{(1)}S_{t-1}^{(2)} \text{ (for $t \geq 1$)}, \\
R_t \sim& 1 + Z + S_{t-1}^{(1)} + (S_{t-1}^{(1)})^2 + (S_{t-1}^{(1)})^3 + S_{t-1}^{(2)} + (S_{t-1}^{(2)})^2 + (S_{t-1}^{(2)})^3 + A_{t-1}^{(1)} + A_{t-1}^{(2)} \\
&+ ZS_{t-1}^{(1)} + ZS_{t-1}^{(2)} + ZA_{t-1}^{(1)} + ZA_{t-1}^{(2)} + S_{t-1}^{(1)}A_{t-1}^{(1)} + S_{t-1}^{(1)}A_{t-1}^{(2)} \\
&+ S_{t-1}^{(2)}A_{t-1}^{(1)} + S_{t-1}^{(2)}A_{t-1}^{(2)} + S_{t-1}^{(1)}S_{t-1}^{(2)} \text{ (for $t \geq 0$)}.
\end{align*}
where $(A_t^{(1)}, A_t^{(2)})$ represents the one-hot encoded action at time $t$. The quantile regression is implemented via the \texttt{QuantReg} class of the \texttt{statsmodels} library \citep{seabold2010statsmodels}. The conditional quantile function is estimated by training quantile regression models for quantile levels $\tau \in \{0.01, 0.02, 0.03, \dots, 0.98, 0.99\}$ with a maximum of $2000$ training iterations. In addition, SMDM requires knowledge of the quantile level of the observed state on its conditional distribution, and this quantity is estimated as the quantile level in $\{0.01, \dots, 0.99\}$ whose corresponding quantile is the closest to the observed state. However, unlike for CFSDP and CFSMDM, we do not use cross-fitting for this SMDM model.

\paragraph{Computing environment:} The numerical experiments were run on an internal cluster equipped with two 36-core Intel Xeon Gold 6154 CPUs clocked at 3.0 GHz and 187 GB of RAM. Each job was executed via SLURM on a single node using 1 CPU core and 7 GB of memory.

\section{Broader Impacts}
\label{sec:impacts}

This work introduces the CFSMDM algorithm to prevent reinforcement learning agents from learning policies that may disadvantage certain subpopulations, offering significant positive societal impacts such as promoting equity in high-stakes domains and correcting for implicitly encoded historical biases. However, deploying this framework involves a fairness-utility trade-off where enforcing strict equity may reduce overall population-level utility. Additionally, practitioners need to carefully navigate data privacy concerns due to the necessary collection and processing of sensitive demographic attributes for auditing and controlling fairness.


\end{document}